\documentclass[10pt,twocolumn]{article}

\usepackage[margin=1in]{geometry}
\usepackage{microtype}
\usepackage{graphicx}
\usepackage{subcaption}
\usepackage{booktabs} 
\usepackage{algorithm}
\usepackage{algorithmic}
\usepackage[numbers,sort&compress]{natbib}
\usepackage{hyperref}       

\usepackage{amsfonts}       
\usepackage{xcolor}         

\usepackage{lipsum}
\usepackage{epstopdf}
\usepackage{mathtools}        
\usepackage{bm}

\usepackage{amsmath}
\usepackage{amssymb}
\usepackage{amsthm}

\theoremstyle{plain}
\newtheorem{theorem}{Theorem}[section]

\newtheorem{lemma}[theorem]{Lemma}

\theoremstyle{definition}
\newtheorem{definition}[theorem]{Definition}
\newtheorem{assumption}[theorem]{Assumption}
\theoremstyle{remark}
\newtheorem{remark}[theorem]{Remark}

\newcommand\xbm{{\ensuremath{\bm{x}}}}
\newcommand\kbm{{\ensuremath{\bm{k}}}}
\newcommand\Kbm{{\ensuremath{\bm{K}}}}
\newcommand\fbm{{\ensuremath{\bm{f}}}}

\newcommand\ybm{{\ensuremath{\bm{y}}}}

\newcommand\Ibm{{\ensuremath{\bm{I}}}}

\newcommand\Pbm{{\ensuremath{\bm{P}}}}
\newcommand\pbm{{\ensuremath{\bm{p}}}}

\begin{document}

\title{Practical Efficient Global Optimization is No-regret}

\author{
Jingyi Wang\\
\small Lawrence Livermore National Laboratory, CA, USA\\
\and
Haowei Wang\\
\small National University of Singapore, Singapore
\and
Nai-Yuan Chiang\\
\small Lawrence Livermore National Laboratory, CA, USA
\and
Juliane Mueller\\
\small National Laboratory of the Rockies, CO, USA
\and
Tucker Hartland\\
\small Lawrence Livermore National Laboratory, CA, USA
\and
Cosmin G. Petra\\
\small Lawrence Livermore National Laboratory, CA, USA
}

\date{}

\maketitle

%

\newcommand{\Rbb}{\ensuremath{\mathbb{R} }}
\newcommand{\Pbb}{\ensuremath{\mathbb{P} }}
\newcommand{\Cbb}{\ensuremath{\mathbb{C} }}
\newcommand{\Ebb}{\ensuremath{\mathbb{E} }}
\newcommand{\Sbb}{\ensuremath{\mathbb{S} }}
\newcommand{\Vbb}{\ensuremath{\mathbb{V} }}
\newcommand{\Nbb}{\ensuremath{\mathbb{N} }}
\newcommand{\norm}[1]{\left\lVert {#1} \right\rVert}
\newcommand\epsbold{{\ensuremath{\boldsymbol{\epsilon}}}}
\newcommand{\frank}[1]{\textcolor{purple}{[#1]}}
\newcommand{\cosmin}[1]{\textcolor{blue}{[#1]}}
\newcommand{\ny}[1]{\textcolor{green}{[#1]}}

\begin{abstract}
  Efficient global optimization (EGO) is one of the most widely used noise-free Bayesian optimization algorithms. 
  It comprises the Gaussian process (GP) surrogate model and expected improvement (EI) acquisition function.  
  In practice, when EGO is applied, a scalar matrix of a small positive value (also called a nugget or jitter) is usually added to the covariance matrix of the deterministic GP to improve numerical stability. We refer to this EGO with a positive nugget as the practical EGO. Despite its wide adoption and empirical success, to date, cumulative regret bounds for practical EGO have yet to be established. 
  In this paper, we present for the first time the cumulative regret upper bound of practical EGO. In particular, we show that practical EGO has sublinear cumulative regret bounds and thus is a no-regret algorithm for commonly used kernels including the squared exponential (SE) and Mat\'{e}rn  kernels ($\nu>\frac{1}{2}$). Moreover, we analyze the effect of the nugget on the regret bound and discuss the theoretical implication on its choice. Numerical experiments are conducted to support and validate our findings.
\end{abstract}

\section{Introduction}\label{se:introduction}
Efficient global optimization (EGO) is a derivative-free optimization method that uses Gaussian process (GP) surrogate models to approximate and guide the optimization of black-box functions~\citep{lizotte2008,jones2001taxonomy,bosurvey2023}. 
Given no observation noise, it is equivalent to Bayesian optimization (BO) with the expected improvement (EI) acquisition function. 
EGO has seen enormous success in many applications including machine learning~\citep{wu2019hyperparameter}, robotics~\citep{calandra2016}, aerodynamic optimization design~\citep{jeong2005efficient}, etc, and has been extended to constrained BO~\citep{gardner2014}, combinatorial problems~\citep{Zaefferer2014}, and multiple surrogates~\citep{viana2013efficient}. 
In the classic form, EGO aims to solve the optimization problem 
\begin{equation} \label{eqn:opt-prob}
 \centering
  \begin{aligned}
	  &\underset{\substack{\xbm}\in C}{\text{minimize}} 
	  & & f(\xbm), \\
  \end{aligned}
\end{equation}
where $\xbm\in\Rbb^d$ is the decision variable, $C\subset \Rbb^d$ represents the bound constraints on $\xbm$, and $f:\Rbb^d\to \Rbb$ is the black-box objective function. 

 EGO iteratively   selects  the candidate sample for
the next observation using  the EI acquisition function~\citep{jones1998efficient,schonlau1998global}.
EI computes the conditional expectation of an improvement function 
leveraging both the posterior mean and variance of the GP in search of the next sample. 
With a closed form, EGO is simple to implement and only requires the cumulative distribution function (CDF) and probability density function (PDF) of the standard normal distribution. 

In practice, adding a small positive value, called a \textit{nugget} or a \textit{jitter}~\citep{gramacy2012cases,gardner2018gpytorch}, to the diagonal of the GP covariance matrix in EGO is often beneficial or even necessary \citep{balandat2020botorch} (see Section~\ref{se:gpbackground} for details). 
We denote the nugget as $\epsilon>0$ in this paper. 
One of the main motivations for using $\epsilon$ is to improve the numerical stability of computations involving the inverse of the covariance matrix, \textit{e.g.}, calculating posterior mean and variance.
The covariance matrix is known to cause numerical issues due to ill-conditioning, which can occur when sample points are too close to each other~\citep{jones1998efficient}. 
Cholesky decomposition, which is invariably used to solve linear systems involving the covariance matrix, is known to break in finite-precision arithmetic for ill-conditioned matrices since very small (both positive and negative) pivots occur due to round-off errors \citep{wilkinson1968priori, meinguet1983refined, kielbasinski1987note, sun1992componentwise, lourenco2022exactly, higham2002accuracy}.
This numerical instability has long been recognized by the numerical optimization community and thoroughly investigated by proposing modified Cholesky decompositions employing positive semi-definite perturbations  \citep{gill1974newton, gill2019practical, schnabel1990new, wright1999modified}, or positive diagonal regularization terms  \citep{gondzio2012matrix}, to circumvent the factorization breakdown.

Indeed, the use of $\epsilon$ is widely adopted in some of the most popular BO/EGO software packages. In the Surrogate Modeling Toolbox (SMT)~\citep{saves2024smt}, the EGO implementation with the GP (KRG) model uses a fixed $\epsilon=2.220\times 10^{-14}$ with the option of larger values. 
In Pyro~\citep{bingham2018pyro}, the GP regression model (GPModel) employs $\epsilon=10^{-6}$ for stabilization of the Cholesky decomposition. 
In scikit-learn~\citep{scikit-learn}, the GP model (GPR) for EGO uses a default $\epsilon=10^{-10}$.
In Botorch~\citep{balandat2020botorch}, which uses GPyTorch~\citep{gardner2018gpytorch}, examples on GP regression models (SingleTaskGP) with noise-free observations are presented with $\epsilon=10^{-6}$. 
GPyOpt~\citep{gpyopt2016} uses a trial-and-error approach, where  $\epsilon=10^{-10}$ is added when the Cholesky decomposition of the covariance matrix fails.
BayesOpt~\citep{martinez2014bayesopt}, which is used in MATLAB, similarly raises errors when the Cholesky decomposition fails.
Finally, we mention that in GPyTorch~\citep{gardner2018gpytorch, wang2019exact}, a positive nugget is instrumental in the well-posedness and efficiency of the proposed preconditioned batched conjugate gradient algorithm.


Apart from improving numerical stability, in recent years, the adoption of $\epsilon$ for deterministic GP fitting and EGO has been recommended for improved statistical properties. 
In~\cite{andrianakis2012effect}, the authors studied the effect of the nugget  and showed that by choosing appropriate values for $\epsilon$ and the length-scale hyper-parameter of the squared exponential (SE) kernel, the approximation errors of the GP can be arbitrarily small.
In~\cite{gramacy2012cases}, the authors proposed adding  $\epsilon$ in deterministic GP models for noise-free observations. They noted that its inclusion provided improved statistical properties of the GP model for many common scenarios. 
Similarly,~\cite{pepelyshev2010role} recommended using $\epsilon$ for noise-free observations. The authors claimed that the maximum likelihood
estimate of the correlation parameter is more reliable and the condition number of the covariance matrix is moderate.
In~\cite{bostanabad2018leveraging}, the authors developed an adaptive strategy for choosing $\epsilon$ to improve the accuracy and efficiency in fitting the
hyper-parameters of a GP model.
Given its wide adoption in both literature and practical code implementation, we refer to EGO with a nugget $\epsilon>0$ as the \textit{``practical EGO''} in this paper. 

While the \textit{simple regret} bound of EGO has been studied in~\cite{bull2011convergence} in the frequentist setting, where $f$ lies in a reproducing kernel Hilbert space (RKHS), 
existing works on the \textit{cumulative regret} behavior of either EGO or practical EGO have clear limitations. 
Given $t$ samples $\xbm_1,\dots,\xbm_t$, simple regret measures the error between the smallest observed function value and the optimal function value, \textit{i.e.}, $f_t^+-f^*$, where  $f_t^+= 
 \underset{\substack{i=1,\dots,t}}{\text{min}}  f(\xbm_i)$  and $f^*=\underset{\substack{\xbm\in C}}{\text{min}}f(\xbm)$.
On the other hand, cumulative regret, denoted as $R_T$ for $T$ samples, measures the overall performance of the algorithm throughout the optimization process (see~\eqref{eqn:cumu-regret} for definition), and is the preferred metric in the multi-armed bandit paradigm~\citep{robbins1952some,lai1985asymptotically,agrawal2012analysis} and many real-world applications~\citep{bouneffouf2020survey}.   
It is desirable for an algorithm to have a sublinear $R_T$ asymptotically, \textit{i.e.}, $\lim_{T\to\infty} R_T/T=0$. This property is called the no-regret property.
Following the seminal work of~\cite{srinivas2009gaussian}, the cumulative regret bounds for some BO algorithms such as  the upper confidence bound (UCB) and Thompson sampling (TS) have been studied extensively, including in the noise-free case~\citep{chowdhury2017kernelized,vakili2022open,lyu2019efficient}. 

However, the cumulative regret upper bounds for either EGO or practical EGO have not been established, despite EI being one of the most popular acquisition functions~\citep{frazier2018}. 
From a technical perspective, this is partially due to the inclusion of an incumbent in EI and its non-convex nonlinear nature~\citep{ryzhov2016convergence,hu2022adjustedeiregret}.
Existing works often make noticeable modifications to the EI acquisition function by introducing new hyper-parameters in order to achieve sublinear cumulative regret bounds. 
In~\cite{wang2014theoreybo}, the authors studied a modified EI function with additional hyper-parameters and the best posterior mean incumbent. They further used the lower and upper bounds of the hyper-parameters to prove a sublinear cumulative regret bound. Similarly,~\cite{hu2022adjustedeiregret} introduced an evaluation cost and modified EI. 
In ~\cite{tran2022regret}, the authors also modified the EI 
by including additional control parameter and showed an upper bound for the sum of simple regret, not the cumulative regret.
In ~\cite{nguyen17a}, the authors added an additional stopping criterion to bound the instantaneous regret. However, it is unclear whether the stopping criterion guarantees an optimal solution upon exit. 


The lack of cumulative regret analysis despite the empirical success of practical EGO leaves two important open questions: \textit{Is practical EGO a no-regret algorithm? How does the nugget value affect regret behavior?} 
In this paper, we provide an affirmative answer to the first question and guidance to the second question by developing novel theoretical techniques.
Our theoretical results can be used to explain and validate the empirical success of practical EGO.
Our contributions in this paper are two-fold.
\begin{itemize}
    \item First, we establish for the first time a cumulative regret upper bound $\mathcal{O}(\log^{1/2}(T)T^{1/2}\sqrt{\gamma_T})$ for practical EGO, one of the most widely used noise-free BO algorithms, where $\gamma_T$ is the maximum information gain (Definition~\ref{def:infogain}). Thus, we prove that practical EGO is a no-regret algorithm for different kernels, as long as the $\gamma_T$ of a kernel is sublinear.
Specifically, the cumulative regret bounds are $\mathcal{O}(T^{1/2} \log^{(d+2)/2}(T))$ and $\mathcal{O}(T^{\frac{\nu+d}{2\nu+d}}\log^{\frac{2\nu+0.5d}{2\nu+d}} (T)))$ for SE and Mat\'{e}rn  kernels, respectively (see~\eqref{def:sematern} for definitions). 
   \item Second, we study the effect of the nugget $\epsilon$ on practical EGO and its cumulative regret bounds, and thereby providing insight into the choice of $\epsilon$.   
\end{itemize}



This paper is organized as follows. In Section~\ref{se:bo}, we introduce GP, EI, practical EGO, and other necessary background information. In Section~\ref{se:analysis}, we present the regret bound analysis starting with preliminary results in Section~\ref{se:EIproperty}.
 In Section~\ref{se:inst-regret}, the novel instantaneous regret bound is established.  The cumulative regret bound is provided in Section ~\ref{se:regret}.
 Limitation on extending the regret bound analysis to EGO is also presented at the end of Section~\ref{se:analysis}.
 We discuss the effect of $\epsilon$ in Section~\ref{se:nugget-EI}.
 Numerical experiments are used to validate our findings in Section ~\ref{se:examples}. Conclusions
are made in Section ~\ref{se:conclusion}.

\section{Background}\label{se:bo}
In this section, we first provide the basics of GPs and the EI acquisition function. Then, 
other backgrounds relevant to cumulative regret analysis are introduced.
\subsection{Gaussian Process}\label{se:gpbackground}
Consider a zero mean GP with the kernel (\textit{i.e.}, covariance function) $k(\xbm,\xbm'):\Rbb^d\times\Rbb^d\to\Rbb$.. 
Given the $t$th sample  $\xbm_t\in C$, the observed function value is 
$f(\xbm_t)$.  
The $t\times t$ covariance matrix is denoted as $\Kbm_t = [k(\xbm_1,\xbm_1),\dots,k(\xbm_1,\xbm_t);$
$\dots; k(\xbm_t,\xbm_1),\dots,k(\xbm_t,\xbm_t)]$. 
The noise-free observations is $\fbm_{1:t}=[f(\xbm_1),\dots,f(\xbm_t)]^T$.
Without the nugget $\epsilon$, the posterior mean, denoted as $\mu_t^0$, and standard deviation, denoted as $\sigma_t^0$, of the deterministic GP used in EGO is
\begin{equation*} \label{eqn:GP-post-ego}
 \centering
  \begin{aligned}
  &\mu_t^0(\xbm)\ =\ \kbm_t(\xbm) \Kbm_t^{-1} \fbm_{1:t} \\
  &(\sigma^0_t(\xbm))^2\ =\
k(\xbm,\xbm)-\kbm_t(\xbm)^T \Kbm_t^{-1}\kbm_t(\xbm)\ ,
\end{aligned}
\end{equation*}
where $\kbm_t(\xbm)= [k(\xbm_1,\xbm),\dots,k(\xbm_t,\xbm)]^T$. 
Accounting for $\epsilon>0$ and its corresponding scalar matrix $\epsilon\Ibm$, the posterior mean $\mu_t$ and variance  $\sigma^2_t$ used in practical EGO are  
\begin{equation} \label{eqn:GP-post}
 \centering
  \begin{aligned} 
  &\mu_t(\xbm)\ =\ \kbm_t(\xbm) \left(\Kbm_t+ \epsilon \Ibm\right)^{-1} \fbm_{1:t} \\
  &\sigma^2_t(\xbm)\ =\
k(\xbm,\xbm)-\kbm_t(\xbm)^T \left(\Kbm_t+\epsilon \Ibm\right)^{-1}\kbm_t(\xbm)\ ,
\end{aligned}
\end{equation}
We emphasize that the observations $\fbm_{1:t}$ are noise-free in \eqref{eqn:GP-post}.

SE and Matérn kernels are among the most popular kernels for BO and GP. Their definitions are as follows.
\begin{equation} \label{def:sematern}
  \centering
  \begin{aligned}
    &k_{SE}(\xbm,\xbm') = \exp\left(-\frac{r^2}{2l^2}\right), \\
    &k_{Mat\acute{e}rn}(\xbm,\xbm')=\frac{1}{\Gamma
    (\nu)2^{\nu-1}}\left(\frac{\sqrt{2\nu}r}{l}\right)^{\nu} B_{\nu} \left(\frac{\sqrt{2\nu}r}{l}\right).
  \end{aligned}
\end{equation}
where $l>0$ is the length hyper-parameter, $r=\norm{\xbm-\xbm'}_2$, $\nu>0$ is the smoothness parameter of the Matérn kernel, and $B_{\nu}$ is the modified Bessel function of the second kind. 

\subsection{Expected Improvement}
The improvement function of $f$ given $t$ samples is defined as 
\begin{equation} \label{eqn:improvement}
 \centering
  \begin{aligned}
    I_t(\xbm) = \max\{ f^+_{t} -f(\xbm),0  \}, 
  \end{aligned}
\end{equation}
where $f^+_t$ denotes the best current objective value.  
The sample point that generates $f_t^+$ is denoted as $\xbm_t^+$.
The EI acquisition function is defined as the expectation of~\eqref{eqn:improvement} conditioned on $t$ samples, with the expression:
\begin{equation} \label{eqn:EI-1}
 \centering
  \begin{aligned}
       EI_t(\xbm) =   (f_{t}^+-\mu_{t}(\xbm))\Phi(z_{t}(\xbm))+\sigma_{t}(\xbm)\phi(z_{t}(\xbm)),\\ 
  \end{aligned}
\end{equation}
where 
  $z_{t}(\xbm) = \frac{f^+_{t}-\mu_{t}(\xbm)}{\sigma_{t}(\xbm)}$.
%
The functions 
$\phi$ and~$\Phi$ are the PDF and CDF of the standard normal distribution, respectively.
For ease of reference, we refer to $f_t^+-\mu_t(\xbm)$ and $\sigma_t(\xbm)$ as the exploitation and exploration part of $EI_t(\xbm)$, respectively (see also Appendix~\ref{appdx:back} for more discussions).
A commonly used function in the analysis of EI is the function $\tau:\Rbb\to\Rbb$, defined as 
\begin{equation} \label{def:tau}
 \centering
  \begin{aligned}
   \tau(z) = z\Phi(z) + \phi(z). 
  \end{aligned}
\end{equation}
Thus, the $\tau$ form of EI can be written as $EI_t(\xbm) = \sigma_t(\xbm) \tau(z_t(\xbm))$.
The next sample is chosen by maximizing the acquisition function over $C$, \textit{i.e.},  
\begin{equation} \label{eqn:acquisition-1}
 \centering
  \begin{aligned}
      \xbm_{t} = \underset{\substack{\xbm\in C}}{\text{argmax}}  EI_{t-1} (\xbm),
  \end{aligned}
\end{equation}
breaking ties arbitrarily~\citep{frazier2018}.
The practical EGO algorithm is given in Algorithm~\ref{alg:pego}.

\begin{algorithm}[H]
 \caption{Practical EGO}\label{alg:pego}
  \begin{algorithmic}[1]
	  \STATE{Choose $k(\cdot,\cdot)$ and $T_0$ initial samples $\xbm_i, i=0,\dots,T_0$. Observe $f_i$.Train the initial GP. }
  \FOR{$t=T_0+1,T_0+2,\dots$}
	  \STATE{Choose $\xbm_{t}$ using~\eqref{eqn:acquisition-1}.}
	  \STATE{Observe $f(\xbm_{t})$. \;}
	  \STATE{Update the surrogate model using $\xbm_{1:t}$ and $\fbm_{1:t}$.\;}
	  \IF {Evaluation budget is exhausted} 
              \STATE{Exit}
	  \ENDIF 
  \ENDFOR
  \end{algorithmic}
\end{algorithm}
\subsection{Additional Background}\label{se:additionalback}
Denote the optimal function value as $f(\xbm^*)$, where $\xbm^*$ is a global minimum on $C$, \textit{i.e.},  $\xbm^* \in \underset{\substack{\xbm}\in C}{\text{argmin}}  f(\xbm)$. The instantaneous regret  $r_t$ is defined as 
 \begin{equation} \label{eqn:inst-regret}
 \centering
  \begin{aligned}
       r_t = f(\xbm_t)-f(\xbm^*)\geq 0.
  \end{aligned}
\end{equation}
The cumulative regret $R_T$ after $T$ samples is defined as 
\begin{equation} \label{eqn:cumu-regret}
 \centering
  \begin{aligned}
       R_T = \sum_{t=1}^T r_t =\sum_{t=1}^T [f(\xbm_t)-f(\xbm^*)].
  \end{aligned}
\end{equation}

In order to derive the cumulative regret upper bound, we use the well-established maximum information gain results~\citep{srinivas2009gaussian,chowdhury2017kernelized}. Information gain measures the informativeness of a set of sample points in $C$ about $f$. 
The maximum information gain $\gamma_t$ is defined in Definition~\ref{def:infogain} in Appendix~\ref{appdx:back}.
It is often used to bound the summation of posterior standard deviation $\sigma_{t-1}(\xbm_t)$ and is dependent on the choice of the kernel~\cite{srinivas2009gaussian}.
We note that the upper bound on the sum of $\sigma_{t-1}(\xbm_t)$ is dependent on both $\gamma_t$ and $\epsilon$. 
Moreover, the bound on $\gamma_t$ itself is also dependent on $\epsilon$, the kernel, $C$, and $d$. 
The latest bounds on $\gamma_t$ in literature for common kernels such as the SE kernel and Mat\'{e}rn kernel  can be found in~\cite{vakili2021information,iwazaki2025improved} and  Lemma~\ref{lem:gammarate}.


\section{Regret Bound}\label{se:analysis}
In this section, we present our regret bounds of practical EGO. We start with preliminary results required for the analysis in Section~\ref{se:EIproperty}. Then, the new instantaneous regret bound is given in Section~\ref{se:inst-regret}. Finally, the cumulative regret bound is established in Section~\ref{se:regret}.
\subsection{Assumptions and Preliminary Results}\label{se:EIproperty}
Throughout this paper, we consider the frequentist setting. That is, $f$ lies in the RKHS of $k(\cdot,\cdot)$, a common assumption in literature~\citep{srinivas2009gaussian}, whose definition is in Section~\ref{appdx:back} in the appendix. 
The formal assumption is given below.
\begin{assumption}\label{assp:rkhs}
	The function $f$ lies in the RKHS, denoted as $\mathcal{H}_k(C)$, associated with the bounded kernel $k(\xbm,\xbm')$ with the norm $\norm{\cdot}_{H_k}$. 
         The kernel satisfies $k(\xbm,\xbm')\leq 1$,  $\forall \xbm,\xbm'\in C$, and $k(\xbm,\xbm)= 1$.
	The RKHS norms of the kernels are bounded above by constant $B>0$, \textit{i.e.}, $\norm{f}_{H_k} \leq B$.  The set $C$ is compact.
\end{assumption}
To help analyze $r_t$, we present Lemmas~\ref{lem:tau} to~\ref{lemma:gp-sigma-bound} on the properties of GP and EI.
We briefly summarize some of them here, and leave the theory statements and proofs in Appendix~\ref{se:prep-proof}. 
The monotonicity of function $\tau$ \eqref{def:tau} and its derivative are given in Lemma~\ref{lem:tau}~\citep{jones1998efficient}.
Lemma~\ref{lem:tauvsPhi} states the relationship between $\Phi(z)$ and $\tau(z)$ when $z<0$.
In Lemma~\ref{lem:EI}, we establish simple but useful bounds of for $\frac{EI_{t-1}(\xbm)}{\sigma_{t-1}(\xbm)}$.
Since $EI_{t-1}(\xbm)=\sigma_{t-1}(\xbm)\tau(z_{t-1}(\xbm))$, the above three lemmas can be used to bound $EI_{t-1}(\xbm)$.
In Lemma~\ref{lem:EI-ms}, $EI_{t-1}(\xbm)$ is shown to be monotonically increasing with respect to both its exploitation $f_{t-1}^+-\mu_{t-1}(\xbm)$ and exploration $\sigma_{t-1}(\xbm)$.
Lemma~\ref{lem:tauvsPhi},~\ref{lem:EI} and~\ref{lem:EI-ms}  are used to quantify the exploration and exploitation trade-off properties in Section~\ref{se:inst-regret}. 

A lower bound on $f_{t-1}^+-\mu_{t-1}(\xbm_t)$ when $EI_{t-1}(\xbm)$ is bounded below is given in Lemma~\ref{lem:mu-bounded-EI}~\citep{nguyen17a}. 
The global lower bound for $\sigma_{t-1}(\xbm)$ with nugget is given in Lemma~\ref{lemma:gp-sigma-bound}.
These two lemmas are used to establish the lower bound for $EI_{t-1}(\xbm_t)$ and subsequently a lower bound for $f_{t-1}^+-\mu_{t-1}(\xbm_t)$, which appears in an intermediate upper bound for $r_t$.
  
Next, we establish the bound on $|I_{t-1}(\xbm)-EI_{t-1}(\xbm)|$, an important step leading to the bound on $r_t$.
Under the noise-free frequentist setting, the upper bound on  $|f(\xbm)-\mu_{t-1}(\xbm)|$ has been established in literature(see \textit{e.g.},~\cite{srinivas2009gaussian}). We note that while practical EGO uses $\epsilon$ in its posterior calculations, the confidence interval of $|f(\xbm)-\mu_{t-1}(\xbm)|\leq B\sigma_{t-1}(\xbm)$ continue to hold~\cite{chowdhury2017kernelized}. The effect of $\epsilon$ on the upper bound is reflected in the increased $\sigma_{t-1}(\xbm)$. 
Specifically, $|f(\xbm)-\mu_{t-1}(\xbm)| \leq B \sigma_{t-1}(\xbm)$ holds at given $\xbm\in C$ and $t\in\Nbb$, as stated in Lemma~\ref{lem:fmu}.
Then, using this bound, 
we can establish the bounds on $I_{t-1}(\xbm)$ and $EI_{t-1}(\xbm)$ in Lemma~\ref{lem:IEI-bound-ratio}  through Lemma~\ref{lem:fandnu}.

\subsection{Instantaneous Regret Bound}\label{se:inst-regret}
In this section, we derive the instantaneous regret upper bounds of $r_t$ in terms of the posterior standard deviations $\sigma_{t-1}(\xbm_t)$ and additional exploitation terms, where the former's sum can be bounded with maximum information gain $\gamma_t$. 


\begin{lemma}\label{lem:ego-instregret-1}
  The practical EGO generates the instantaneous regret bound 
  \begin{equation} \label{eqn:ego-instregret-1}
  \centering
  \begin{aligned}
          r_t \leq& c_{B1} \max\{f_{t-1}^+-f(\xbm_t),0\}\\
          &+(c_{B\epsilon}(\epsilon, t)+B+c_B (B+\phi(0)))\sigma_{t-1}(\xbm_{t}),
  \end{aligned}
  \end{equation}
  where $c_{B\epsilon}(\epsilon,t)= \log^{1/2}\left(\frac{t+\epsilon}{2\pi  \epsilon \tau^2(-B)}\right)$, $c_B=\frac{\tau(B)}{\tau(-B)}$ and $c_{B1}=\max\left\{\frac{\tau(B)}{\tau(-B)}-1,0\right\}$.
\end{lemma}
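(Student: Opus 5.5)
Writing $r_t=f(\xbm_t)-f(\xbm^*)$, the plan is to split it at the incumbent:
\[
r_t=\bigl(f_{t-1}^+-f(\xbm^*)\bigr)-\bigl(f_{t-1}^+-f(\xbm_t)\bigr).
\]
The first term is $I_{t-1}(\xbm^*)$. It is bounded through $EI_{t-1}(\xbm^*)\le EI_{t-1}(\xbm_t)$, which holds because $\xbm_t$ maximizes $EI_{t-1}$. The second term is compared with $EI_{t-1}(\xbm_t)$ using the bounds of Lemma~\ref{lem:IEI-bound-ratio} through Lemma~\ref{lem:fandnu}. These come from $|f-\mu_{t-1}|\le B\sigma_{t-1}$ (Lemma~\ref{lem:fmu}) and the monotonicity of $\tau$, and give two-sided relations of the form
\[
\tau(-B)\,\sigma_{t-1}\tau\!\left(\tfrac{f^+_{t-1}-f}{\sigma_{t-1}}\right)\lesssim EI_{t-1}\lesssim \tau(B)\,\sigma_{t-1}\tau\!\left(\tfrac{f^+_{t-1}-f}{\sigma_{t-1}}\right).
\]

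\emph{Step 1: bound $I_{t-1}(\xbm^*)$.} At $\xbm^*$ we have $I_{t-1}(\xbm^*)\le EI_{t-1}(\xbm^*)+c\,\sigma_{t-1}(\xbm^*)$, or more precisely a ratio-type version: $I\le \frac{1}{\tau(-B)}EI$ plus $B\sigma$ slack. Using $EI_{t-1}(\xbm^*)\le EI_{t-1}(\xbm_t)$ then bounds this term by $EI_{t-1}(\xbm_t)$ up to such slack. A leftover term in $\sigma_{t-1}(\xbm^*)$ must be avoided, since only $\sigma_{t-1}(\xbm_t)$ may appear. I would handle it by applying $\tau(z)\ge z$ at $\xbm^*$, so that
\[
\sigma_{t-1}(\xbm^*)\,\tau\!\left(\tfrac{f^+_{t-1}-f(\xbm^*)}{\sigma_{t-1}(\xbm^*)}-B\right)\ge I_{t-1}(\xbm^*)-B\sigma_{t-1}(\xbm^*).
\]
This still leaves $B\sigma_{t-1}(\xbm^*)$ behind. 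I expect this to be the main obstacle, and it is exactly where the $c_{B\epsilon}$ term must enter.

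\emph{Step 2: case split on the sign of $z$ at $\xbm_t$.} Let $z=(f^+_{t-1}-f(\xbm_t))/\sigma_{t-1}(\xbm_t)$.
\begin{itemize}
\item If $z\ge0$, the upper comparison $EI_{t-1}(\xbm_t)\le \tau(B)\sigma_{t-1}(\xbm_t)\tau(z)$ together with $\tau(z)\le z+\phi(0)$ gives $EI_{t-1}(\xbm_t)$ at most a multiple of $\max\{f^+_{t-1}-f(\xbm_t),0\}+(B+\phi(0))\sigma_{t-1}(\xbm_t)$. Subtracting $f^+_{t-1}-f(\xbm_t)$ then produces the coefficient $c_B-1$ on the improvement, which is where $c_{B1}=\max\{c_B-1,0\}$ comes from, plus $c_B(B+\phi(0))\sigma_{t-1}(\xbm_t)$.
\item If $z<0$, the improvement term is $0$. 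Then $f(\xbm_t)-f^+_{t-1}>0$ enters the regret directly, and it needs a separate bound in terms of $\sigma_{t-1}(\xbm_t)$.
\end{itemize}

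\emph{Step 3: the $\epsilon$-dependent logarithmic term.} The key tool is a lower bound on $EI_{t-1}(\xbm_t)$. Lemma~\ref{lemma:gp-sigma-bound} gives a global lower bound $\sigma_{t-1}(\xbm)\ge\sqrt{\epsilon/(t-1+\epsilon)}$ or similar. Combining it with $EI_{t-1}(\xbm_t)\ge EI_{t-1}(\xbm^*)\ge \sigma_{t-1}(\xbm^*)\tau(-B)$ (since $I_{t-1}(\xbm^*)\ge0$) gives a positive lower bound on $EI_{t-1}(\xbm_t)$. Lemma~\ref{lem:mu-bounded-EI} then converts this lower bound into $f^+_{t-1}-\mu_{t-1}(\xbm_t)\ge -\sigma_{t-1}(\xbm_t)\sqrt{2\log(\cdot)}$. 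Here the argument of the log is $\frac{\sigma_{t-1}(\xbm_t)}{\sqrt{2\pi}\,EI}$, and bounding it via the $\sigma$ lower bound yields $\log\bigl((t+\epsilon)/(2\pi\epsilon\tau^2(-B))\bigr)$, i.e.\ $c_{B\epsilon}(\epsilon,t)$. Applying $|f-\mu_{t-1}|\le B\sigma_{t-1}$ then gives
\[
f(\xbm_t)-f^+_{t-1}\le (c_{B\epsilon}+B)\,\sigma_{t-1}(\xbm_t),
\]
which settles the $z<0$ case.

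\emph{Step 4: assemble.} Adding the contributions of both cases yields~\eqref{eqn:ego-instregret-1}. Constants will need checking; in particular, the Step~1 slack at $\xbm^*$ should be absorbed by using $EI_{t-1}(\xbm^*)$ rather than $\sigma_{t-1}(\xbm^*)$ directly.
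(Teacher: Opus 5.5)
\textbf{Verdict: there is a genuine gap, in Step~1.} Your decomposition $r_t=I_{t-1}(\xbm^*)+(f(\xbm_t)-f^+_{t-1})$, the case split on the sign of $f^+_{t-1}-f(\xbm_t)$, and Step~3 all match the paper. Step~3 takes $EI_{t-1}(\xbm_t)\ge EI_{t-1}(\xbm^*)\ge\tau(-B)\sigma_{t-1}(\xbm^*)$ with Lemma~\ref{lemma:gp-sigma-bound}, then Lemmas~\ref{lem:mu-bounded-EI} and~\ref{lem:fmu}, to get $f(\xbm_t)-f^+_{t-1}\le(c_{B\epsilon}+B)\sigma_{t-1}(\xbm_t)$.

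The problem is that you leave Step~1 open. Your guess that the leftover $B\sigma_{t-1}(\xbm^*)$ is ``where the $c_{B\epsilon}$ term must enter'' is wrong. The $\epsilon$-term only provides a \emph{lower} bound on $\sigma$, so it cannot turn $\sigma_{t-1}(\xbm^*)$ into a multiple of $\sigma_{t-1}(\xbm_t)$. In the paper, $c_{B\epsilon}$ is used only to bound $\mu_{t-1}(\xbm_t)-f^+_{t-1}$.

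The missing idea is the multiplicative absorption in Lemma~\ref{lem:IEI-bound-ratio}; your closing remark points toward it, but you do not carry it out. Since $f^+_{t-1}\ge f(\xbm^*)$ and $|f-\mu_{t-1}|\le B\sigma_{t-1}$, you get $z_{t-1}(\xbm^*)\ge -B$. Hence $\sigma_{t-1}(\xbm^*)\le EI_{t-1}(\xbm^*)/\tau(-B)$, which is exactly the inequality you already write in Step~3. Combine this with $I_{t-1}(\xbm^*)\le EI_{t-1}(\xbm^*)+B\sigma_{t-1}(\xbm^*)$, which follows from $EI\ge f^+_{t-1}-\mu_{t-1}$, and use $\tau(B)-\tau(-B)=B$:
\[
I_{t-1}(\xbm^*)\le\Bigl(1+\frac{B}{\tau(-B)}\Bigr)EI_{t-1}(\xbm^*)=\frac{\tau(B)}{\tau(-B)}EI_{t-1}(\xbm^*)\le c_B\,EI_{t-1}(\xbm_t).
\]
This is the source of $c_B$, and you need it in \emph{both} cases. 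In your $z<0$ case you only bound $f(\xbm_t)-f^+_{t-1}$ and never account for $I_{t-1}(\xbm^*)$. The paper bounds that term by $c_B(B+\phi(0))\sigma_{t-1}(\xbm_t)$ through this inequality.

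\textbf{A false comparison in Step~2.} The claimed bound $EI_{t-1}(\xbm_t)\le\tau(B)\sigma_{t-1}(\xbm_t)\tau(z)$ fails whenever $\tau(B)<1$, e.g.\ for small $B$. Taking $\mu_{t-1}(\xbm_t)=f(\xbm_t)$ is consistent with Lemma~\ref{lem:fmu} and gives $EI_{t-1}(\xbm_t)=\sigma_{t-1}(\xbm_t)\tau(z)$. So your heuristic assembles $c_B$ from two separate factors, $1/\tau(-B)$ and $\tau(B)$, and the second one is incorrect on its own.

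What is true, and what the paper uses, is an additive bound with coefficient one. From $f^+_{t-1}-\mu_{t-1}(\xbm_t)\le f^+_{t-1}-f(\xbm_t)+B\sigma_{t-1}(\xbm_t)$, $\Phi\le 1$ and $\phi\le\phi(0)$, you get $EI_{t-1}(\xbm_t)\le\max\{f^+_{t-1}-f(\xbm_t),0\}+(B+\phi(0))\sigma_{t-1}(\xbm_t)$. Substituting this into $r_t\le c_B\,EI_{t-1}(\xbm_t)-(f^+_{t-1}-f(\xbm_t))$ gives the $(c_B-1)$ coefficient and the $c_B(B+\phi(0))$ term directly.
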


\textbf{Proof Sketch for Lemma~\ref{lem:ego-instregret-1}.}
We consider two different cases: $f^+_{t-1}-f(\xbm_t)\leq 0$ and $f_{t-1}^+-f(\xbm_t)> 0$.
For the first case, where $f_{t-1}^+-f(\xbm_t) \leq 0$,
we use the bound on $|f(\xbm)-\mu_{t-1}(\xbm)|$ (Lemma~\ref{lem:fmu}) and Lemma~\ref{lem:IEI-bound-ratio} to obtain an upper bound of $r_t$: $\mu_{t-1}(\xbm_t)-f_{t-1}^++c_B EI_{t-1}(\xbm_t)+B\sigma_{t-1}(\xbm_t)$, where $c_B=\frac{\tau(B)}{\tau(-B)}$. 
To derive an upper bound for $\mu_{t-1}(\xbm_t)-f_{t-1}^+$, we establish a positive lower bound $\mathcal{O}\left(\frac{1}{\sqrt{t}}\right)$ of $EI_{t-1}(\xbm_t)$ at $\forall t\in\Nbb$.  We consider $EI_{t-1}(\xbm^*)$ and show that $EI_{t-1}(\xbm^*)\geq \sigma_{t-1}(\xbm^*)\tau(-B)$ using the global lower bound on $\sigma_{t-1}(\xbm)$ mentioned in Section~\ref{se:EIproperty} (Lemma~\ref{lemma:gp-sigma-bound}).
Then, the upper bound for $\mu_{t-1}(\xbm_t)-f_{t-1}^+$ can be derived by the properties of EI (Lemma~\ref{lem:mu-bounded-EI}). 
The upper bound on $r_t$ becomes $(c_{B\epsilon}(\epsilon,t) + B+c_B(\phi(0)+B))\sigma_{t-1}(\xbm_t) $, where $c_{B\epsilon}(\epsilon,t)= \log^{1/2}\left(\frac{t+\epsilon}{2\pi \tau^2(-B) \epsilon}\right)$.

For the second case where $f_{t-1}^+-f(\xbm_t) \geq 0$, from Lemma~\ref{lem:IEI-bound-ratio}, 
we have the upper bound for $r_t$: $f(\xbm_t)-f_{t-1}^++c_BEI_{t-1}(\xbm_t)$. 
We can further bound $EI_{t-1}(\xbm_t)$ via the bounds on $|f(\xbm)-\mu_{t-1}(\xbm)|$ (Lemma~\ref{lem:fmu}) and the properties of EI (Lemma~\ref{lem:EI}). Thus, the upper bound for $r_t$ becomes $(c_B-1) (f_{t-1}^+-f(\xbm_t))  +c_B(B+\phi(0))\sigma_{t-1}(\xbm_{t})$. 
Combining the bounds in both cases leads to~\eqref{eqn:ego-instregret-1} in Lemma~\ref{lem:ego-instregret-1}.


\begin{remark}[Use of $\epsilon$ in Lemma~\ref{lem:ego-instregret-1}]\label{remark:nugget} In addition to improved numerical stability and statistical properties mentioned in Section~\ref{se:introduction}, $\epsilon$ plays an important role in the analysis of $r_t$. Specifically, the upper bound on $\mu_{t-1}(\xbm_t)-f_{t-1}^+$ requires a positive lower bound on $EI_{t-1}(\xbm_t)$. The use of $\epsilon$ provides a positive global lower bound for the posterior standard deviation $\sigma_{t-1}(\xbm)$, which leads to the positive lower bound on $EI_{t-1}(\xbm_t)$. 
Without $\epsilon$, at previous sample points $\xbm_i, i=1,\dots,t-1$, we have $\sigma_{t-1}(\xbm_i) = 0$ and $EI_{t-1}(\xbm_i) =0$. Since it is possible that $\xbm_i=\xbm^*$ for some $i$, we can no longer guarantee a positive lower bound for $EI_{t-1}(\xbm^*)$, a critical step towards the lower bound on $EI_{t-1}(\xbm_t)$.
Thus, we can no longer obtain a desirable upper bound for $\mu_{t-1}(\xbm_t)-f_{t-1}^+$. Intuitively, $\epsilon>0$ means that there is still uncertainty recognized by the GP model, albeit decreasing, at previous sample points, making it more likely that the next sample is chosen close to existing samples, when some of them are already close to $\xbm^*$.
\end{remark}

\begin{remark}[Exploitation term in instantaneous regret bound]\label{remark:instreg-exploitation}
   The instantaneous regret upper bound in Lemma~\ref{lem:ego-instregret-1} contains the exploitation term $\max\{f_{t-1}^+-f(\xbm_t),0\}$, which is a novelty for instantaneous regret bound, as far as we know. For instance, the instantaneous regret bound for UCB only has the exploration terms involving $\sigma_{t-1}(\xbm_t)$. We elaborate our techniques to bound the sum of $\max\{f_{t-1}^+-f(\xbm_t),0\}$ in Section~\ref{se:regret}.
\end{remark}
\subsection{Cumulative Regret Bound}\label{se:regret}
In this section, we present the cumulative regret bound of practical EGO. 
The following lemma establishes the bound based on Lemma~\ref{lem:ego-instregret-1}.

\begin{lemma}\label{lemma:boi-inst-regret}
   The cumulative regret bound of practical EGO satisfies  
   \begin{equation} \label{eqn:boi-inst-regret-1}
  \centering
  \begin{aligned}
     R_T \leq 2 c_{B1} B + (& c_{B\epsilon}(\epsilon,T)+B\\
      &+c_B (B+\phi(0)))\sqrt{C_{\gamma}(\epsilon)T\gamma_T},   \\
    \end{aligned}
  \end{equation}
  where $c_{B\epsilon}(\epsilon,t)= \log^{\frac{1}{2}}(\frac{t+\epsilon}{2\pi  \epsilon \tau^2(-B)})$, $C_{\gamma}(\epsilon)=\frac{2}{\log(1+1/\epsilon)}$, $c_B=\frac{\tau(B)}{\tau(-B)}$, and $c_{B1}=\max\left\{\frac{\tau(B)}{\tau(-B)}-1,0\right\}$.
\end{lemma}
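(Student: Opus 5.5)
We sum the instantaneous bound of Lemma~\ref{lem:ego-instregret-1} over $t=1,\dots,T$ and treat the two groups of terms separately:
\begin{equation*}
R_T\leq c_{B1}\sum_{t=1}^T \max\{f_{t-1}^+-f(\xbm_t),0\}+\sum_{t=1}^T\bigl(c_{B\epsilon}(\epsilon,t)+B+c_B(B+\phi(0))\bigr)\sigma_{t-1}(\xbm_t).
\end{equation*}
The plan is to bound the first sum by a constant and the second by a $\sqrt{T\gamma_T}$-type expression. The first-step convention (e.g.\ $f_0^+$ taken as the first initial observation) only affects constants. As an alternative, one can start the sum after the initial samples and absorb them into the same bound.

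\textbf{Exploitation terms (telescoping).} Whenever $f_{t-1}^+-f(\xbm_t)>0$, the incumbent is updated and $f_t^+=f(\xbm_t)$. Hence $\max\{f_{t-1}^+-f(\xbm_t),0\}=f_{t-1}^+-f_t^+$ in all cases, since this difference is zero when no improvement occurs. The sum therefore telescopes to $f_0^+-f_T^+\leq \max_C f-\min_C f$. By Assumption~\ref{assp:rkhs} and the reproducing property with $k(\xbm,\xbm)=1$, we have $|f(\xbm)|\leq \norm{f}_{H_k}\sqrt{k(\xbm,\xbm)}\leq B$. This gives $f_0^+-f_T^+\leq 2B$, and hence the term $2c_{B1}B$.

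\textbf{Exploration terms.} The coefficient $c_{B\epsilon}(\epsilon,t)$ is increasing in $t$, so we replace it by $c_{B\epsilon}(\epsilon,T)$ and pull the whole constant out of the sum. It then remains to show
\begin{equation*}
\sum_{t=1}^T\sigma_{t-1}(\xbm_t)\leq\sqrt{C_\gamma(\epsilon)T\gamma_T}.
\end{equation*}
First, Cauchy--Schwarz gives $\sum_t\sigma_{t-1}(\xbm_t)\leq\sqrt{T\sum_t\sigma_{t-1}^2(\xbm_t)}$. Next, $\sigma^2_{t-1}\leq k(\xbm,\xbm)=1$ implies $\epsilon^{-1}\sigma_{t-1}^2\leq \epsilon^{-1}$. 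Since $s\mapsto s/\log(1+s)$ is increasing, we get $\sigma_{t-1}^2\leq \frac{1}{\log(1+1/\epsilon)}\log(1+\epsilon^{-1}\sigma_{t-1}^2(\xbm_t))$. Finally, the standard information-gain identity for a GP with noise variance $\epsilon$ (Srinivas et al.; Definition~\ref{def:infogain}) gives $\frac12\sum_t\log(1+\epsilon^{-1}\sigma^2_{t-1}(\xbm_t))\leq\gamma_T$. Combining these three steps yields exactly the factor $C_\gamma(\epsilon)=2/\log(1+1/\epsilon)$.

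\textbf{Main obstacle.} The main point requiring care is the information-gain step. We must check that the $\sigma_{t-1}$ defined in \eqref{eqn:GP-post} is the posterior of the surrogate GP with noise variance $\epsilon$, so that the information-gain identity applies. This holds because the posterior variance does not depend on the observed values, so noise-free observations are harmless. The other steps are routine.
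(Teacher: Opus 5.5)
Your proposal is correct and follows the paper's proof. You sum Lemma~\ref{lem:ego-instregret-1}, telescope the exploitation terms to at most $2B$, and bound the exploration terms by replacing $c_{B\epsilon}(\epsilon,t)$ with $c_{B\epsilon}(\epsilon,T)$ and applying the information-gain bound; the only differences are cosmetic, namely that your identity $\max\{f_{t-1}^+-f(\xbm_t),0\}=f_{t-1}^+-f_t^+$ is a cleaner form of the paper's subsequence telescoping (it avoids the paper's undefined index $t_0$) and that you re-derive Lemma~\ref{lem:variancebound} via Cauchy--Schwarz rather than citing it.
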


\paragraph{Proof Sketch for Lemma~\ref{lemma:boi-inst-regret}.}
 From Lemma~\ref{lem:ego-instregret-1} and the definition of $R_T$, we need to bound the sum of the exploitation term $\sum_{t=1}^T \max\{f_{t-1}^+-f(\xbm_t),0\}$ and $\sum_{t=1}^T c_{B\epsilon}(\epsilon,t)\sigma_{t-1}(\xbm_{t})$, given that the sum of the remaining terms are obvious. 
 To bound the first sum, we construct the subsequence $\{\xbm_{t_i}\}$ of $\{\xbm_t\}$ for all $\xbm_t$ that satisfies $f_{t-1}^+-f(\xbm_t)>0$. Using $f_{t_i}^+\leq f(\xbm_{t_i})$ and the fact that $t_{i-1} \leq t_i-1$, we can write $f_{t_i-1}^+-f(\xbm_{t_i})+f_{t_{i+1}-1}^+-f(\xbm_{t_{i+1}})\leq f(\xbm_{t_{i-1}})-f(\xbm_{t_{i+1}})\leq 2B$. 
Using this technique and summing up all $t_i$ lead to the bound on  $\sum_{t=1}^T \max\{f_{t-1}^+-f(\xbm_t),0\}$. 

For the latter term, using the maximum information gain $\gamma_t$ from Lemma~\ref{lem:variancebound} suffices as $\sum_{t=1}^T c_{B\epsilon}(\epsilon,t)\sigma_{t-1}(\xbm_{t})\leq  c_{B\epsilon}(\epsilon,T) \sum_{t=1}^T\sigma_{t-1}(\xbm_{t})$. 

The rate of the cumulative regret upper bound of practical EGO is stated in the following theorem.
\begin{theorem}\label{thm:boi-cumulative-regret}
    The practical EGO algorithm leads to the cumulative regret upper bound  
    \begin{equation*} \label{eqn:ts-regret-1}
  \centering
  \begin{aligned}
       R_T = \mathcal{O}(\log^{1/2}(T)T^{1/2}\sqrt{\gamma_T}).
    \end{aligned}
  \end{equation*}
   For SE kernel,  $R_T=\mathcal{O}(T^{1/2} \log^{(d+2)/2}(T))$.
   For Matérn kernels ($\nu>\frac{1}{2}$), $R_T=\mathcal{O}(T^{\frac{\nu+d}{2\nu+d}}\log^{\frac{2\nu+0.5d}{2\nu+d}} (T))$.\\
\end{theorem}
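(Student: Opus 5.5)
The plan is to start from Lemma~\ref{lemma:boi-inst-regret} and track the dependence on $T$, treating $B$, $\epsilon$, $d$ and the kernel hyper-parameters as fixed constants. In \eqref{eqn:boi-inst-regret-1} the constants $c_B$, $c_{B1}$ and $C_\gamma(\epsilon)=2/\log(1+1/\epsilon)$ do not depend on $T$. The only $T$-dependent coefficient is
\[
c_{B\epsilon}(\epsilon,T)=\log^{1/2}\Bigl(\tfrac{T+\epsilon}{2\pi\epsilon\tau^2(-B)}\Bigr).
\]
For $T\geq 2$ and fixed $\epsilon$ this equals $\bigl(\log(T+\epsilon)+\mathrm{const}\bigr)^{1/2}=\mathcal{O}(\log^{1/2}T)$. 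Hence the coefficient $c_{B\epsilon}(\epsilon,T)+B+c_B(B+\phi(0))$ is $\mathcal{O}(\log^{1/2}T)$, since the additive constants are dominated by the logarithm for large $T$. Multiplying by $\sqrt{C_\gamma(\epsilon)T\gamma_T}=\mathcal{O}(\sqrt{T\gamma_T})$ and absorbing the constant $2c_{B1}B$ gives the general bound $R_T=\mathcal{O}(\log^{1/2}(T)T^{1/2}\sqrt{\gamma_T})$.

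For the kernel-specific rates I would substitute the information-gain bounds from Lemma~\ref{lem:gammarate}, which collects the results of \cite{vakili2021information,iwazaki2025improved}. For the SE kernel, $\gamma_T=\mathcal{O}(\log^{d+1}T)$, so
\[
R_T=\mathcal{O}\bigl(T^{1/2}\log^{1/2}(T)\log^{(d+1)/2}(T)\bigr)=\mathcal{O}\bigl(T^{1/2}\log^{(d+2)/2}(T)\bigr).
\]
For the Mat\'ern kernel with $\nu>\tfrac12$, I would use $\gamma_T=\mathcal{O}\bigl(T^{\frac{d}{2\nu+d}}\log^{\frac{2\nu}{2\nu+d}}(T)\bigr)$. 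Then $T^{1/2}\sqrt{\gamma_T}=T^{\frac{\nu+d}{2\nu+d}}\log^{\frac{\nu}{2\nu+d}}(T)$. Multiplying by $\log^{1/2}T$ adds $\tfrac12$ to the log exponent, and $\tfrac{\nu}{2\nu+d}+\tfrac12=\tfrac{2\nu+0.5d}{2\nu+d}$, which is exactly the stated rate. The condition $\nu>\tfrac12$ is what makes $\tfrac{\nu+d}{2\nu+d}<1$ paired with a valid $\gamma_T$ bound, so the bound is sublinear.

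There is no genuine obstacle here, since the theorem is a corollary. The only point needing care is matching the precise form of $\gamma_T$ in Lemma~\ref{lem:gammarate}, in particular the log exponent in the Mat\'ern case and any $\epsilon$-dependence hidden in those constants. With $\epsilon$ fixed, that dependence is absorbed into the $\mathcal{O}$. I would also note that $\gamma_T$ is defined with noise variance $\epsilon$, consistent with Lemma~\ref{lem:variancebound}.
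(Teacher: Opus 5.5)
Your proposal is correct and matches the paper's proof: you bound $c_{B\epsilon}(\epsilon,T)$ by $\mathcal{O}(\log^{1/2}T)$ in Lemma~\ref{lemma:boi-inst-regret}, use Lemma~\ref{lem:variancebound} to get $\sum_t\sigma_{t-1}(\xbm_t)=\mathcal{O}(\sqrt{T\gamma_T})$, and substitute the SE and Mat\'ern rates for $\gamma_T$, and your exponent arithmetic checks out. One minor point: $\frac{\nu+d}{2\nu+d}<1$ already holds for every $\nu>0$, so the restriction $\nu>\frac12$ comes from the hypotheses of the cited information-gain bound (Lemma~\ref{thm:MIG}) and is not what makes the rate sublinear.
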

From Theorem~\ref{thm:boi-cumulative-regret}, if $\gamma_T$ of the chosen kernel is sublinear, practical EGO is no-regret.
\begin{remark}
    Our analysis framework does not directly apply to the cumulative regret of EGO (without $\epsilon$). As mentioned in Remark~\ref{remark:nugget}, EGO has $\sigma_{t-1}(\xbm_i)=0, i=1,\dots,t-1$, and thus $EI_{t-1}(\xbm_i)=0$ at previous sample points. Therefore,
    we cannot obtain a lower bound of $EI_{t-1}(\xbm_t)$ via $EI_{t-1}(\xbm^*)$. Consequently, the current upper bound on $\mu_{t-1}(\xbm_t)-f_{t-1}^+$ cannot be obtained for EGO. 
    Determining whether EGO is a no-regret algorithm is a topic for future research and can be considered a limitation of this paper.
\end{remark}
\section{Effect of the Nugget}\label{se:nugget-EI}
In this section, we discuss the impact of the nugget $\epsilon$ on practical EGO. 
The effect of $\epsilon$ on the properties of GP such as the likelihood have been studied previously~\citep{andrianakis2012effect}. Our focus is thus on how $\epsilon$ can change EI and the cumulative regret bounds of practical EGO.

 First, we briefly demonstrate the effect $\epsilon$ can have on the maximum of the EI function, \textit{i.e.}, $EI_{t-1}(\xbm_t)$, via a 2-dimensional example. We use the Branin function (see  example 4 in Section~\ref{se:examples}) and the GP model from scikit-learn~\citep{scikit-learn}. We train the GP with $\epsilon=10^{-2}$, $\epsilon=10^{-6}$, $\epsilon=10^{-10}$, and without nuggets using the same $50$ samples ($25$ initial samples and $25$ iterations of practical EGO where we set $\epsilon=10^{-6}$), and plot the contours of EI in Figure~\ref{fig:ei-nugget}. We note that the maximum level of the EI value colorbar corresponds to the maximum of EI$_{50}$ in each contour plot.
 
\begin{figure*}
  \centering
  \includegraphics[width=0.9\textwidth]{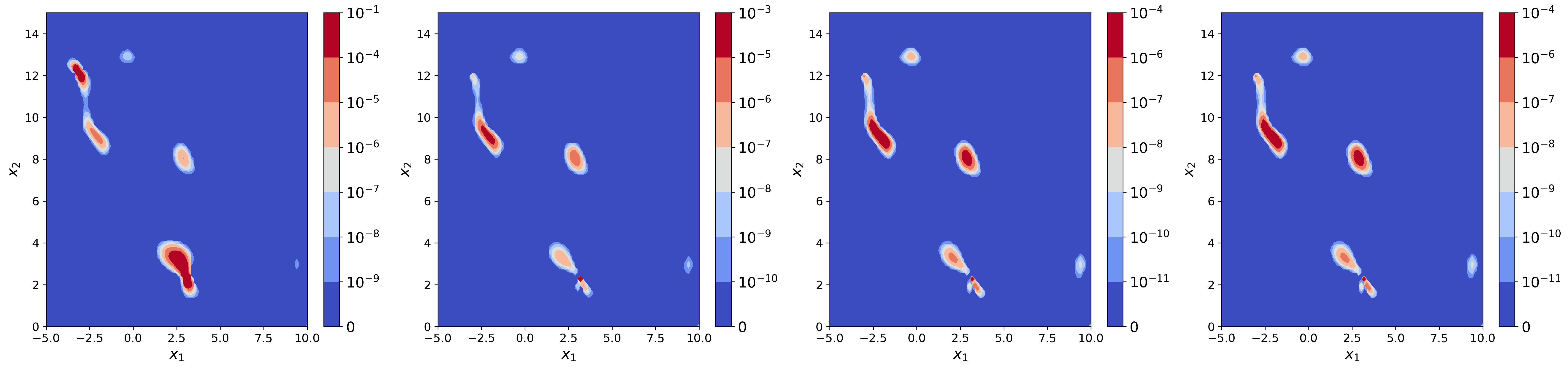}
        \caption{Illustrative example of EI contour of the Branin function with 50 samples. From left to right: contour plots for $\epsilon=10^{-2}$, $\epsilon=10^{-6}$, $\epsilon=10^{-10}$, and no nugget. The maximum EI$_{50}$ value from left to right: 
        $2.53\times 10^{-2}$, $1.90\times 10^{-4}$, $4.72\times 10^{-5}$, and $4.72\times 10^{-5}$.}
\label{fig:ei-nugget}
\end{figure*}

\begin{remark} 
We remark that while given $50$ samples, GP with no nugget does not have numerical issues, we encounter Cholesky decomposition failure when inverting the covariance matrix using $25$ random initial samples and around $75$ samples generated from practical EGO runs. 
\end{remark}
It is clear from Figure~\ref{fig:ei-nugget} that $\epsilon$ impacts $EI_{t-1}(\xbm_t)$, of which a positive lower bound  is needed for the bound on $r_t$ (see proof of Lemma~\ref{lem:ego-instregret-1}). In this example, the impact is relatively small when $\epsilon
$ is small, \textit{e.g.}, $10^{-10}$. Consistent with our analysis in  Section~\ref{se:analysis}, $\epsilon$ ensures a lower bound on $\sigma_{t-1}(\xbm)$ and thus a lower bound of $EI_{t-1}(\xbm_t)$. This is reflected in the increased value of $EI_{t-1}(\xbm_t)$ as $\epsilon$ increases. We emphasize that the effect of $\epsilon$ is  dependent on multiple factors such as the sample set, the function, the kernel, etc, and therefore, Figure~\ref{fig:ei-nugget} is one illustrative example. For instance, if one uses all $50$ samples generated by random sampling in the example above, the effect of $\epsilon$ would be much less prominent since the samples are more evenly spaced out, as shown in Figure~\ref{fig:ei-nugget-random} in appendix.

Next, we discuss the effect of the  nugget on the regret bound from the theoretical perspective. 
From Lemma~\ref{eqn:boi-inst-regret-1}, $\epsilon$ appears in the cumulative regret upper bound 
    \begin{equation} \label{eqn:upper-bound}
  \centering
  \begin{aligned}
2 c_{B1} B + (c_{B\epsilon}(\epsilon,T)+B+c_B (B+\phi(0))) \sqrt{C_{\gamma}(\epsilon) T\gamma_T(\epsilon)}
\end{aligned}
  \end{equation}
through $c_{B\epsilon}(\epsilon,T)$, $C_{\gamma}(\epsilon)$, as well as $\gamma_T(\epsilon)$.  

To make matters complicated, $c_{B\epsilon}(\epsilon,T)$ decreases, while $C_{\gamma}(\epsilon)$ increases, as $\epsilon$ increases. 
Further, the dependence of $\gamma_T$  on $\epsilon$ is kernel specific. Thus, the impact of $\epsilon$ on~\eqref{eqn:upper-bound} is  complex and dependent on the kernel. Here, we provide an answer to this challenging question for SE and Matérn kernels via the bounds on $\gamma_T$ in~\cite{iwazaki2025improved}, as stated in Lemma~\ref{thm:MIG}. 
From Lemma~\ref{thm:MIG} for SE and Matérn kernels, the upper bound on $\gamma_T(\epsilon)$ decreases as $\epsilon$ increases.

For simplicity, we consider constant length scale $l$ in both kernels. Further, since the nugget is often small in nature, we focus on the case where $T/\epsilon$ is large.
The effect of the nugget $\epsilon$ on~\eqref{eqn:upper-bound}for SE kernel is presented next. 
  \begin{theorem}\label{cor:ego-nugget-se}
   Under the conditions of Lemma~\ref{lem:mig-se} and $T/\epsilon\gg 1$,
for SE kernel at given $T$,\\
(1) if 
\begin{equation} \label{eqn:ego-nugget-se-1}
  \centering
  \begin{aligned}
     (d+1) \log(1+1/\epsilon) > \log(1+T/\epsilon),
 \end{aligned}
  \end{equation}
 and $\log(1+T/\epsilon)\gg \max\{C_{dl}^2,C_{dl}^3,C_R^2,C_R^3,C_R^3,d\}$, then~\eqref{eqn:upper-bound} decreases as $\epsilon$ increases. The constants are defined in Lemma~\ref{thm:MIG} and Lemma~\ref{lem:mig-se}. \\
(2) if 
\begin{equation} \label{eqn:ego-nugget-se-2}
  \centering
  \begin{aligned}
(d+2) \log(1+1/\epsilon) < (1+\epsilon/T)/(1+\epsilon) \log(1+T/\epsilon),
\end{aligned}
  \end{equation}
  then~\eqref{eqn:upper-bound} increases  as $\epsilon$ increases.
\end{theorem}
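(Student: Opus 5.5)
The plan is to study the sign of $\frac{d}{d\epsilon}\log U(\epsilon)$ at fixed $T$, where $U(\epsilon)$ denotes~\eqref{eqn:upper-bound}. The additive constant $2c_{B1}B$ does not depend on $\epsilon$, so it suffices to study
\[
V(\epsilon)=\bigl(c_{B\epsilon}(\epsilon,T)+K\bigr)\sqrt{C_\gamma(\epsilon)\,T\,\gamma_T(\epsilon)}, \qquad K=B+c_B(B+\phi(0)),
\]
with $\gamma_T(\epsilon)$ replaced by its bound from Lemma~\ref{thm:MIG} and Lemma~\ref{lem:mig-se}. For the SE kernel I expect this bound to have the form $\gamma_T(\epsilon)\le G(L_T)$, where $L_T:=\log(1+T/\epsilon)$ and
\[
G(L)=C_{dl}L^{d+1}+\text{lower-order terms in } L \text{ involving } C_{dl}, C_R .
\]
Write also $L_1:=\log(1+1/\epsilon)$. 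Then $2\log V$ splits into three pieces, $2\log(c_{B\epsilon}+K)$, $\log C_\gamma(\epsilon)$ and $\log G(L_T)$, and I would differentiate each piece in $\epsilon$.

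\textbf{Derivatives of the three pieces.} First,
\[
\frac{d}{d\epsilon}\log C_\gamma=\frac{1}{\epsilon(1+\epsilon)L_1}>0 .
\]
Second, using $\frac{dL_T}{d\epsilon}=-\frac{T}{\epsilon(\epsilon+T)}$, we get
\[
\frac{d}{d\epsilon}\log G(L_T)=-\frac{T}{\epsilon(T+\epsilon)}\frac{G'(L_T)}{G(L_T)} .
\]
For the pure power $L^{d+1}$ this equals $-\frac{d+1}{\epsilon(1+\epsilon/T)L_T}$. Third, $c_{B\epsilon}^2=\log\frac{T+\epsilon}{\epsilon}+\log\frac{1}{2\pi\tau^2(-B)}$. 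Since $\tau(-B)<\phi(0)$, the second logarithm is positive, so $c_{B\epsilon}^2\ge L_T$. Hence the derivative of $2\log(c_{B\epsilon}+K)$ is negative with magnitude at most $\frac{T}{\epsilon(T+\epsilon)c_{B\epsilon}^2}\le \frac{1}{\epsilon(1+\epsilon/T)L_T}$. After multiplying by $\epsilon>0$, the sign of $\frac{d}{d\epsilon}\log V$ is therefore the sign of
\[
\frac{1}{(1+\epsilon)L_1}-\frac{\theta_\gamma+\theta_c}{(1+\epsilon/T)L_T},
\]
where $\theta_\gamma=L_T G'(L_T)/G(L_T)$ and $\theta_c\in(0,1]$.

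\textbf{Case (2).} The bound needed here is $\theta_\gamma\le d+1$. This holds exactly when $G$ is a nonnegative combination of powers $L^j$ with $j\le d+1$, which I would check from the form of Lemma~\ref{lem:mig-se}. Then $\theta_\gamma+\theta_c\le d+2$. Condition~\eqref{eqn:ego-nugget-se-2} rearranges to $\frac{1}{(1+\epsilon)L_1}>\frac{d+2}{(1+\epsilon/T)L_T}$, so the derivative is positive and the bound increases in $\epsilon$. No largeness assumption is needed in this case, which matches the statement.

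\textbf{Case (1).} Here I need a lower bound on the negative part. I would drop $\theta_c\ge 0$ and show that $\theta_\gamma\to d+1$, so that $\theta_\gamma\ge d+1-o(1)$. This is where the hypothesis $L_T\gg\max\{C_{dl}^2,C_{dl}^3,C_R^2,C_R^3,d\}$ enters: the lower-order terms of $G$ are then dominated by the leading $C_{dl}L_T^{d+1}$ term. The factor $1+\epsilon/T\approx1$ by $T/\epsilon\gg1$, and $\frac{1}{1+\epsilon}\le1$. Combining these with~\eqref{eqn:ego-nugget-se-1} gives $\frac{1}{(1+\epsilon)L_1}\le\frac1{L_1}<\frac{d+1}{L_T}$, so the derivative is negative.

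\textbf{Main obstacle.} The hard step is controlling $\theta_\gamma$ uniformly, because the exact algebraic form of the bound in Lemma~\ref{lem:mig-se} (logs of logs, constants $C_R$) is not visible here. My first attempt would be to expand $G'/G$ as $(d+1)/L$ plus a correction of order $C/L^{2}$ or $C/L^{3/2}$. The largeness hypothesis should make this correction smaller than the strict gap in~\eqref{eqn:ego-nugget-se-1}; if it does not, I would instead interpret ``$\gg$'' as absorbing exactly this gap.
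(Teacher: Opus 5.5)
Your reduction to the sign of $\frac{1}{(1+\epsilon)L_1}-\frac{\theta_\gamma+\theta_c}{(1+\epsilon/T)L_T}$ is correct. Case (2) also works as you describe. Lemma~\ref{lem:mig-se} gives $s_T=C_{dl}^1[L_T^{d+1}+C_{dl}^2L_T+C_{dl}^3]$, so $\theta_\gamma\le d+1$ for nonnegative $C_{dl}^2,C_{dl}^3$. This is a compact version of the paper's check that all nine contributions $d_c^i(\eta)$ are negative.

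Case (1), however, has a genuine gap: you discard $\theta_c$, and it is the only slack available. We have
\[
d+1-\theta_\gamma=\frac{dC_{dl}^2L_T+(d+1)C_{dl}^3}{L_T^{d+1}+C_{dl}^2L_T+C_{dl}^3}>0
\]
and $1+\epsilon/T>1$. Hence the right-hand side $\frac{\theta_\gamma}{(1+\epsilon/T)L_T}$ of the inequality you need is strictly smaller than $\frac{d+1}{L_T}$. Your chain $\frac{1}{(1+\epsilon)L_1}\le\frac{1}{L_1}<\frac{d+1}{L_T}$ therefore does not reach it.

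Condition~\eqref{eqn:ego-nugget-se-1} is strict but carries no quantitative margin. The hypothesis $\log(1+T/\epsilon)\gg$ constants only makes $d+1-\theta_\gamma$ small, not smaller than that margin, so reinterpreting ``$\gg$'' cannot help. Concretely, take $\epsilon\approx T^{-1/d}$. Then \eqref{eqn:ego-nugget-se-1} is nearly an equality, while $L_T\approx\frac{d+1}{d}\log T$ is as large as you like. The only slack left is $\frac{1+\epsilon}{1+\epsilon/T}-1\approx\epsilon\approx e^{-L_T/(d+1)}$, which is exponentially small in $L_T$. By contrast, $d+1-\theta_\gamma$ is only polynomially small in $L_T$. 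So without $\theta_c$, the inequality you are aiming for is actually false in this regime, even though all hypotheses of case (1) hold.

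The fix is to keep $\theta_c$. With $c_{B\epsilon}^2=L_T+C_R^2$ and $K=C_R^3$,
\[
1-\theta_c=\frac{C_R^2+C_R^3c_{B\epsilon}}{c_{B\epsilon}(c_{B\epsilon}+C_R^3)},
\]
which is small once $L_T\gg C_R^2,(C_R^3)^2$. This is why the $C_R$ constants appear in the hypothesis; they do not enter $\gamma_T$ at all. Then $\theta_\gamma+\theta_c\ge d+1$ with room to spare. Also replace your step $\frac{1}{1+\epsilon}\le 1$ by $1+\epsilon/T\le 1+\epsilon$ (valid since $T\ge 1$). Together these give
\[
\frac{1}{(1+\epsilon)L_1}<\frac{d+1}{(1+\epsilon)L_T}\le\frac{d+1}{(1+\epsilon/T)L_T}\le\frac{\theta_\gamma+\theta_c}{(1+\epsilon/T)L_T}.
\]

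This is exactly the mechanism in the paper's proof. Multiplying out $(c_{B\epsilon}+C_R^3)^2 s_T$ gives leading term $\log^{d+2}(1+T/\epsilon)$, of degree $d+2$ rather than $d+1$. Condition~\eqref{eqn:ego-nugget-se-1} then yields $(d+2)\log(1+1/\epsilon)-\log(1+T/\epsilon)>\frac{1}{d+1}\log(1+T/\epsilon)$, a margin that dominates all lower-order negative contributions.
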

  
Next, we consider the Matérn kernels $(\nu>\frac{1}{2})$. 
\begin{theorem}\label{cor:ego-regret-nugget-matern}
   Under the conditions of Lemma~\ref{lem:mig-matern-shogo} where $T/\epsilon\gg 1$,
for Matérn kernel ($\nu>\frac{1}{2}$),

(1) if 
\begin{equation} \label{eqn:ego-nugget-matern-1}
  \centering
  \begin{aligned}
 &\log(1+1/\epsilon)d/(2\nu+d)>(1+1/C_{d\nu l}^2), \ \text{and}\\
  &C_{\nu}^1C_{d\nu l}^1C_{\nu}^3\log(1+1/\epsilon)\log(1+2T/\epsilon)>C,
\end{aligned}
  \end{equation}
then~\eqref{eqn:upper-bound} decreases with increasing $\epsilon$. The constants are defined in Lemma~\ref{thm:MIG} and Lemma~\ref{lem:mig-matern-shogo}.\\  
(2) 
if 
\begin{equation} \label{eqn:ego-nugget-matern-2}
  \centering
  \begin{aligned}
\log(1+1/\epsilon)\left[\frac{d}{2\nu+d}+C_{\nu}^1C_{\nu}^3+\left(\frac{4\nu+d}{2\nu+d} \right.\right.\\
      \left. \left.  +C_{\nu}^1\right)\frac{1}{\log(T/\epsilon)}\right] < 1/(1+\epsilon), 
\end{aligned}
  \end{equation}
  then~\eqref{eqn:upper-bound} increases with increasing $\epsilon$. 
\end{theorem}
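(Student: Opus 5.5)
The plan is to treat \eqref{eqn:upper-bound} as a function $U(\epsilon)$ at fixed $T$ and determine the sign of $dU/d\epsilon$, or equivalently of $d\log U/d\epsilon$. The constant $2c_{B1}B$ and the factor $B+c_B(B+\phi(0))$ do not depend on $\epsilon$. The $\epsilon$-dependent pieces are three.

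\textbf{Step 1: list the $\epsilon$-dependent factors.}
\begin{itemize}
\item $c_{B\epsilon}(\epsilon,T)=\log^{1/2}\bigl(\tfrac{T+\epsilon}{2\pi\epsilon\tau^2(-B)}\bigr)$.
\item $C_\gamma(\epsilon)=2/\log(1+1/\epsilon)$.
\item For the Mat\'{e}rn kernel with $\nu>\frac{1}{2}$, the bound on $\gamma_T(\epsilon)$ from Lemma~\ref{thm:MIG} and Lemma~\ref{lem:mig-matern-shogo}.
\end{itemize}
I expect the Mat\'{e}rn bound to take the form
\[
\gamma_T(\epsilon)\lesssim C_\nu^1\,(T/\epsilon)^{\frac{d}{2\nu+d}}\,\log^{\frac{2\nu}{2\nu+d}}(1+2T/\epsilon)+\cdots,
\]
with the constants $C_{d\nu l}^1, C_{d\nu l}^2, C_\nu^3, C$ appearing in the lower-order terms.

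I will substitute this bound for $\gamma_T(\epsilon)$, as the theorem intends. Using $T/\epsilon\gg1$, I replace $\log(1+T/\epsilon)$ by $\log(T/\epsilon)$ wherever this does not affect the sign, and similarly for $c_{B\epsilon}$.

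\textbf{Step 2: differentiate each logarithm.} In particular:
\begin{itemize}
\item $\frac{d}{d\epsilon}\log C_\gamma=\frac{1}{\epsilon(1+\epsilon)\log(1+1/\epsilon)}>0$.
\item $\frac{d}{d\epsilon}c_{B\epsilon}^2=-\frac{T}{\epsilon(T+\epsilon)}<0$.
\item $\frac{d}{d\epsilon}\log\gamma_T$ is approximately $-\frac{1}{\epsilon}\bigl[\frac{d}{2\nu+d}+\frac{2\nu}{2\nu+d}\frac{1}{\log(2T/\epsilon)}\bigr]$, plus the contribution of the additive constant terms.
\end{itemize}

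\textbf{Step 3: combine the factors.} Since $U-2c_{B1}B$ is the product of $\bigl(c_{B\epsilon}+\mathrm{const}\bigr)$ and $\sqrt{C_\gamma\gamma_T}$, its logarithmic derivative is
\[
\frac{c_{B\epsilon}'}{c_{B\epsilon}+\mathrm{const}}+\tfrac12\bigl(\log C_\gamma\bigr)'+\tfrac12\bigl(\log\gamma_T\bigr)'.
\]
Multiplying through by $\epsilon\log(1+1/\epsilon)$ gives a sign condition. In that condition, $\frac{1}{1+\epsilon}$ (coming from $C_\gamma$) competes with $\log(1+1/\epsilon)$ times a bracket of decreasing terms coming from $\gamma_T$ and $c_{B\epsilon}$.

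\textbf{Step 4: read off the two parts.}
\begin{itemize}
\item For part (1), I drop the helpful positive terms. The condition $\log(1+1/\epsilon)\,\frac{d}{2\nu+d}>1+1/C_{d\nu l}^2$ makes the polynomial-rate decrease of $\gamma_T$ dominate both the $C_\gamma$ increase and the lower-order correction. The second inequality in \eqref{eqn:ego-nugget-matern-1} ensures that the leading term of the $\gamma_T$ bound dominates its additive constant $C$, so the derivative of $\log\gamma_T$ really has the claimed size.
\item For part (2), I upper-bound the magnitude of every decreasing contribution. I use $\frac{c_{B\epsilon}'}{c_{B\epsilon}+\mathrm{const}}\le\frac{|c_{B\epsilon}'|}{c_{B\epsilon}}\approx\frac{1}{2\epsilon\log(T/\epsilon)}$, which accounts for the $\frac{1}{\log(T/\epsilon)}$ term in \eqref{eqn:ego-nugget-matern-2}. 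I also bound the $\gamma_T$ contributions by $\frac{d}{2\nu+d}$, $C_\nu^1C_\nu^3$, and $\bigl(\frac{4\nu+d}{2\nu+d}+C_\nu^1\bigr)\frac{1}{\log(T/\epsilon)}$. Condition \eqref{eqn:ego-nugget-matern-2} then says exactly that their sum is below the $C_\gamma$ term $\frac{1}{1+\epsilon}$, so the derivative is positive.
\end{itemize}

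\textbf{Main obstacle.} The hard step is the bookkeeping for the derivative of the $\gamma_T(\epsilon)$ bound, including its lower-order and additive terms with the constants $C_\nu^1, C_\nu^3, C_{d\nu l}^{1,2}$. Each term must be bounded in the correct direction for each part, and only the asymptotics $T/\epsilon\gg1$ may be used. To handle this, I would first write the bound from Lemma~\ref{lem:mig-matern-shogo} explicitly as $A(\epsilon)+C$. I would then bound $A'/(A+C)$ above and below separately, using $A/(A+C)\in(0,1]$ in part (2) and the second inequality of \eqref{eqn:ego-nugget-matern-1} in part (1) to get $A/(A+C)\ge\tfrac12$.
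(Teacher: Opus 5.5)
Your overall strategy is the paper's: fix $T$ and read off the sign of the derivative from the logarithmic derivatives of $(c_{B\epsilon}+C_R^3)^2$, $C_\gamma(\epsilon)$ and the bound $s_T(\epsilon)$. The paper does this by differentiating $c_T$ of Lemma~\ref{lem:cumu-regret-bound-simple} in $\eta=1/\epsilon$. Your part (2) goes through essentially as in the paper, provided you count the $c_{B\epsilon}$ contribution only once. That contribution is exactly the $1$ in $\frac{4\nu+d}{2\nu+d}=1+\frac{2\nu}{2\nu+d}$. The terms $C_\nu^1C_\nu^3$ and $C_\nu^1/\log(T/\epsilon)$ come from bounding $-\epsilon(\log c_T^0)'$ using $c_T^0\ge 1$ (primes denote $d/d\epsilon$).

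The gap is in part (1), and it comes from misreading the Mat\'ern bound. In Lemma~\ref{lem:mig-matern-shogo} the bound is $s_T=c_T^0\bar\gamma_T+C$, where $c_T^0\approx C_\nu^1C_\nu^3\log(T^2/\epsilon)$ is a \emph{multiplicative} factor on the leading term, not a lower-order correction. Write $A=c_T^0\bar\gamma_T$. After dropping the $c_{B\epsilon}$ term, you need
\[
\log(1+1/\epsilon)\,\frac{A}{A+C}\,\bigl(-\epsilon(\log A)'\bigr)>\frac{1}{1+\epsilon}.
\]
Your plan uses the second inequality only to get $A/(A+C)\ge\frac12$. That is available anyway from the standing assumption $c_T^0\bar\gamma_T\ge C$, but it is not enough, because it halves the guaranteed rate:
\begin{itemize}
\item The first inequality gives $\log(1+1/\epsilon)\,(-\epsilon(\log\bar\gamma_T)')>1$, so after the factor $\frac12$ you only get a left side larger than $\frac12$.
\item But $\frac{1}{1+\epsilon}\approx 1$ in the small-$\epsilon$ regime.
\item The extra term $-\epsilon(\log c_T^0)'\approx 1/\log(T^2/\epsilon)$ is in general too small to make up the difference.
\end{itemize}
The missing step is to multiply through by $A+C$ instead of bounding the ratio, and to split $A'=(c_T^0)'\bar\gamma_T+c_T^0\bar\gamma_T'$. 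The piece $-\epsilon c_T^0\bar\gamma_T'\log(1+1/\epsilon)$ beats $\frac{A}{1+\epsilon}$ by the first inequality. The other piece absorbs the constant:
\[
-\epsilon(c_T^0)'\,\bar\gamma_T\log(1+1/\epsilon)\ \ge\ C_\nu^1C_\nu^3C^1_{d\nu l}\log(1+2T/\epsilon)\log(1+1/\epsilon)\ >\ C\ \ge\ \frac{C}{1+\epsilon}.
\]
The middle inequality is exactly the second condition in \eqref{eqn:ego-nugget-matern-1}, and this is how the paper uses it in \eqref{eqn:ego-nugget-matern-pf-11}. The factor $\log(1+1/\epsilon)$ in that hypothesis signals a comparison of derivatives, not a comparison of the sizes of $A$ and $C$.
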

\begin{remark}
    The constants in Theorem~\ref{cor:ego-nugget-se} and~\ref{cor:ego-regret-nugget-matern} are dependent on $d$, $C$, and the fixed hyper-parameter $l$. Readers are referred to Lemma~\ref{thm:MIG},~\ref{lem:mig-se}, and~\ref{lem:mig-matern-shogo} for their definitions.
    When $T/\epsilon$ is sufficiently large, the conditions involving the constants are satisfied. 
\end{remark}

\begin{remark}
    We note that case 2 in both theorems might not be satisfied when $T$ is small ( $\epsilon$ also small to maintain a large $T/\epsilon$), as well as when $d$ is large. 
    Further, for any given $T$, for SE kernel, it is possible that neither~\eqref{eqn:ego-nugget-se-1} nor~\eqref{eqn:ego-nugget-se-2} is satisfied. In such cases, the constants play an important role in how $\epsilon$ affect~\eqref{eqn:upper-bound}. Similar conclusions can be drawn for Matérn kernels.
\end{remark}

Theorem~\ref{cor:ego-nugget-se} and~\ref{cor:ego-regret-nugget-matern} show that to obtain a tighter cumulative regret bound, $\epsilon$ should stay within a reasonable range.  
When $\epsilon$ is small, conditions $(1)$ for both kernels are more likely to be satisfied and~\eqref{eqn:upper-bound} increases as $\epsilon$ decreases. When $\epsilon$ is large, conditions $(2)$ are more likely to be satisfied and~\eqref{eqn:upper-bound} increases as $\epsilon$ increases. 

Intuitively, if $\epsilon$ is too large, the posterior variance is inflated too much and EI could emphasize too much on exploration. On the other hand, if $\epsilon$ is too small, practical EGO behaves closer to EGO, which might not be no-regret. In addition, an $\epsilon$ too small risks not resolving numerical stability issues.
We emphasize that our analysis is based on state-of-the-art cumulative regret bound~\eqref{eqn:upper-bound}, and not the cumulative regret itself. 

To better illustrate our results, we choose an example set of constants and plot the cumulative regret bound with $\epsilon$. 
Let $C_{dl}^1=C_{dl}^2=C_{dl}^3=1$, $d=2$, $B=1$ for the the SE kernel. For the Matérn kernel, let $\nu=2.5$, $d=3$, $C_{\nu}=1$, $C_{d\nu l 1}=1$, $C_{d\nu l 2}=1$, and $C=1$. 
The rest of the constants can be deduced from these chosen ones.  
We plot~\eqref{eqn:upper-bound} with $\epsilon$ for SE kernel in Figure~\ref{fig:regret-se-nugget} and the one for Matérn kernel in Figure~\ref{fig:regret-matern-nugget} in the appendix. We mark when the conditions for the two cases are met in Theorem~\ref{cor:ego-nugget-se} and~\ref{cor:ego-regret-nugget-matern}. 

\begin{figure}
	\centering
	\includegraphics[width=0.9\linewidth]{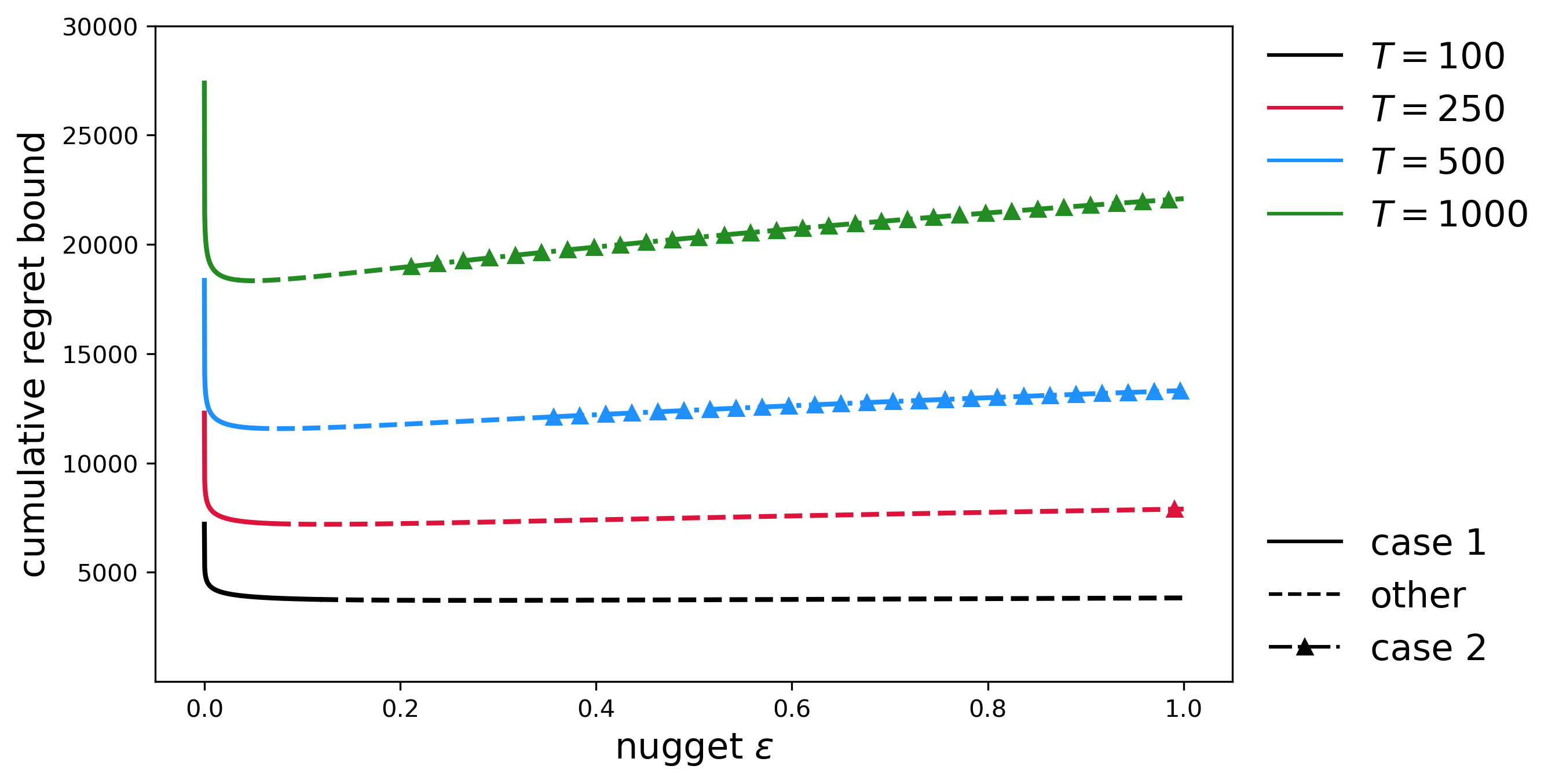}
        \caption{Cumulative regret upper bound with nugget $\epsilon$ at different $T$ and selected constants for SE kernel. The case ``other'' means neither the conditions for case $1$ nor those for case $2$ are satisfied.}
    \label{fig:regret-se-nugget}
\end{figure}
The plots clearly demonstrate our theoretical conclusions on the effect of $\epsilon$. 
We note that the inequalities~\eqref{eqn:ego-nugget-se-1},~\eqref{eqn:ego-nugget-se-2},~\eqref{eqn:ego-nugget-matern-1}, and~\eqref{eqn:ego-nugget-matern-2} are much relaxed to allow for simpler forms. It is clear that when $T$ is small, the second case conditions for both kernels might not be satisfied. 
This does not mean that the cumulative regret bound is not increasing with $\epsilon$. However, when $T$ is small the effect of $\epsilon$ becomes more complicated to summarize and highly dependent on constants.

\section{Numerical Experiments}\label{se:examples}
In this section, we perform numerical experiments of practical EGO with varying nugget values on widely used test problems to demonstrate the empirical validity of our theories. 
We consider two groups of functions. First, we test functions sampled from GPs with known SE and Matérn kernels ($\nu=2.5$) using fixed hyperparameters. These problems are designed to minimize the effect of misspecification of kernels or hyperparameter optimization of GP. Second, we consider commonly used synthetic functions.

For the first group of functions, we use 20 and 40 initial design points for 2D and 4D problems, respectively, followed by 200 additional observations acquired iteratively via practical EGO. 
We use three $\epsilon$ values for each problem, $10^{-10},10^{-6},$ and $10^{-4}$.
The GP hyperparameters were kept fixed to the original values used for sampling. The results are shown in Table~\ref{gp-table}. 

For each problem, the average cumulative regret, \textit{i.e.}, $R_T/T$, declines as optimization progresses. 
For Matérn kernels and SE kernel in 4D, the observed regret behavior with respect to $\epsilon$ align well with our theoretical regret bound predictions, namely that regret bounds do not change monotonically with $\epsilon$ and  there might exist a range where $\epsilon$ should be chosen. 
The SE kernels in 2D however, show a smaller regret for smaller $\epsilon$. We note that this is not contradictory to our conclusion, as it is based on the upper bound of the regret. 
\begin{table}[t]
  \caption{The average accumulative regret
 for different iteration over 20 macro-replications.}
  \label{gp-table}
  \begin{center}
    \begin{small}
      \begin{sc}
        \resizebox{\columnwidth}{!}{%
        \begin{tabular}{lcccccr}
          \toprule
          d & kernel  & $\epsilon$   & t=1 & t=50 & t=100    & t=200  \\
          \midrule
          2&	SE&	$10^{-10}$&	0.596&	0.091&	0.058&	0.040\\
          2&	SE&	$10^{-6}$&	0.596&	0.141&	0.101&	0.076\\
          2&	SE&	$10^{-4}$&	0.596&	0.153&	0.120&	0.097\\
          2&	Matérn&	$10^{-10}$&	0.393&	0.061&	0.031&	0.016\\
          2&	Matérn&	$10^{-6}$&	0.393&	0.055&	0.028&	0.015\\
          2&	Matérn&	$10^{-4}$&	0.393&	0.054&	0.028&	0.016\\
          4&	SE&	$10^{-10}$&	1.422&	0.775&	0.626&	0.440\\
          4&	SE&	$10^{-6}$&	1.422&	0.788&	0.640&	0.440\\
          4&	SE&	$10^{-4}$&	1.422&	0.799&	0.693&	0.528\\
          4&	Matérn&	$10^{-10}$&	0.857&	0.458&	0.296&	0.183\\
          4&	Matérn&	$10^{-6}$&	0.857&	0.493&	0.291&	0.184\\
          4&	Matérn&	$10^{-4}$&	0.857&	0.474&	0.292&	0.184\\
          \bottomrule
        \end{tabular}
        }
      \end{sc}
    \end{small}
  \end{center}
  \vskip -0.1in
\end{table}

For synthetic problems, we choose five examples and three nugget values $\epsilon=10^{-2}$, $\epsilon=10^{-4}$, and $\epsilon=10^{-6}$ for each problem. 
From example 1 to 5, the objective functions are the Rosenbrock function, the six-hump camel function, the Hartmann6 function, the Branin function, and the Michalewicz function~\citep{molga2005test,picheny2013benchmark}. 
The mathematical expressions of our test problems are given in Section~\ref{appx:example} in the appendix. 
The results for the five examples are plotted in Figure~\ref{fig:numerical}.

The synthetic examples are implemented using GP models in scikit-learn.
Each example is run 100 times to account for stochasticity. 
For the two-dimensional examples 1, 2, 4, 5, five initial Latin hypercube samples are used. For the six-dimensional problem, example 3, we choose $50$ initial Latin hypercube samples due to the increase in dimension. 
For example 1, 2, 3, SE kernel is used, while Matérn kernel with $\nu=2.5$ is used for example 4 and 5, for a demonstration of both kernels mentioned in our theories.
The median of the average cumulative regret  among the 100 runs is reported against the number of iterations. The $25$th and $75$th percentile results are shown in Section~\ref{appx:example} in the appendix.
The computational budget is $200$ optimization  iterations for examples 1,2, and 4 and $100$ for examples 3, due to the increased dimension and computational cost. For example 5, we also terminate at $100$ optimization iterations as $R_T/T$ is sufficiently small.  

\begin{figure}
  \centering
    \begin{subfigure}{\linewidth}
    \centering
    \includegraphics[width=0.75\linewidth]{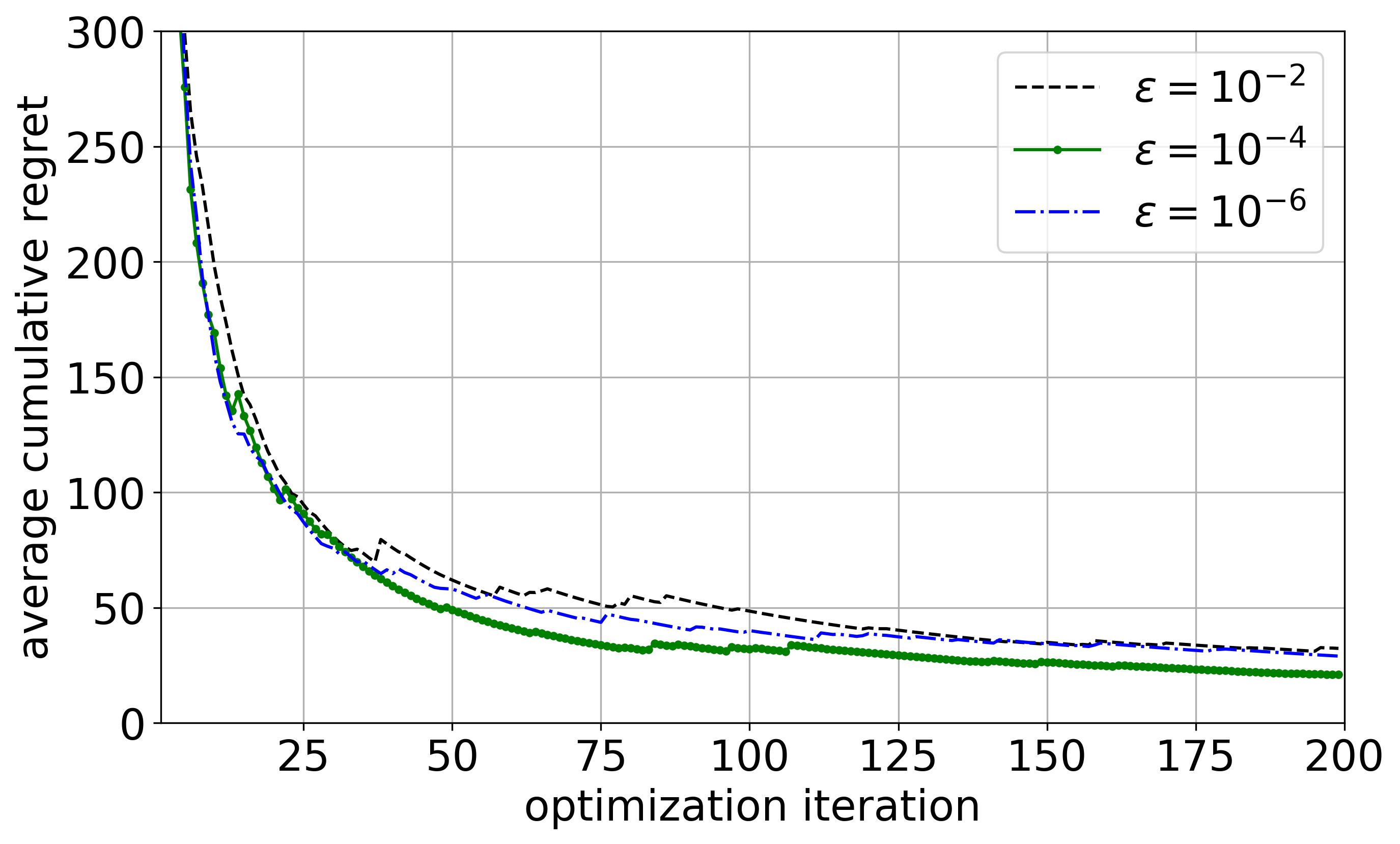}
    \label{fig:ex2}
    \end{subfigure}
   \begin{subfigure}{\linewidth}
    \centering
    \includegraphics[width=0.75\linewidth]{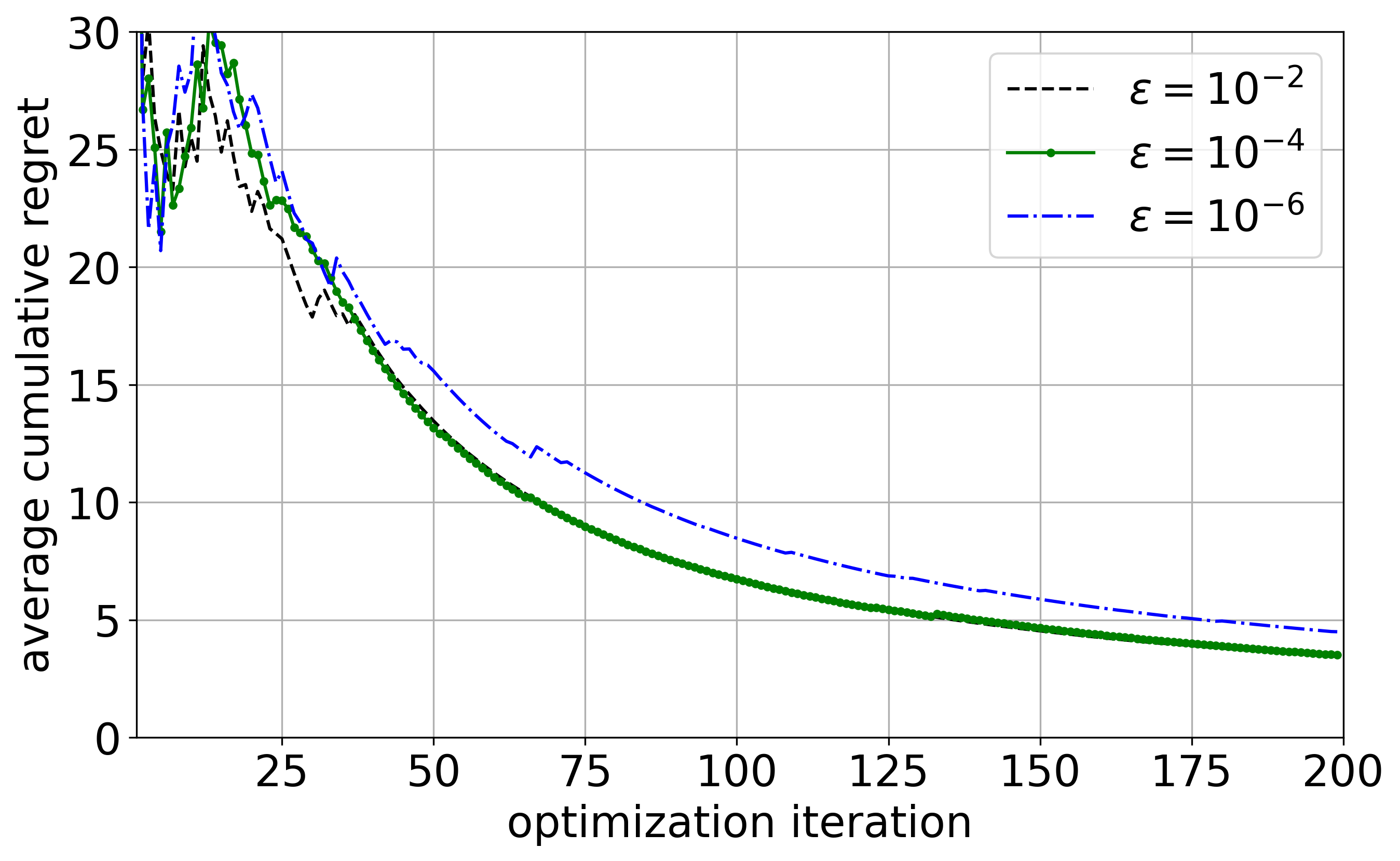}
    \label{fig:ex3}
    \end{subfigure}
    \begin{subfigure}{\linewidth}
    \centering
    \includegraphics[width=0.75\linewidth]{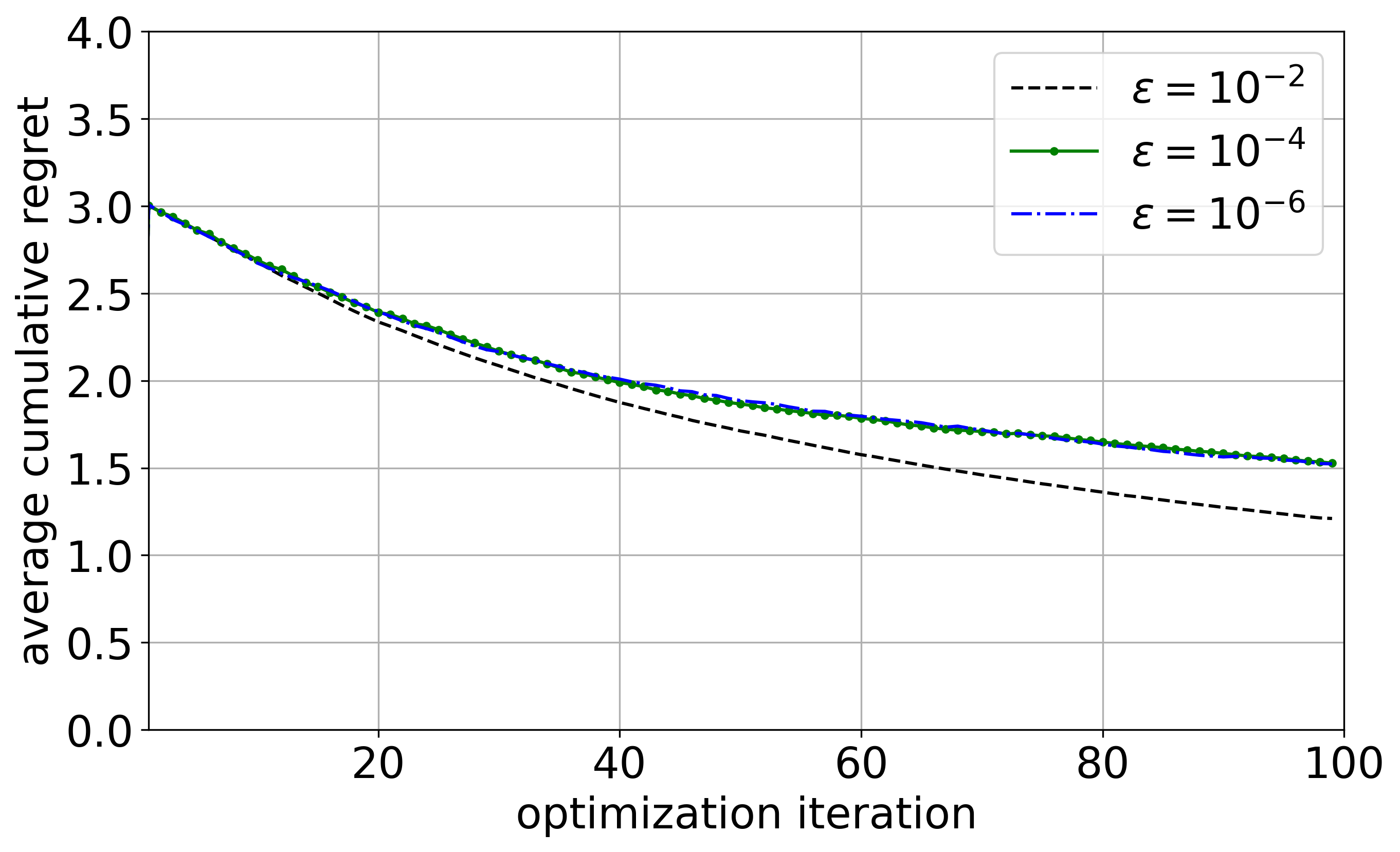}
    \label{fig:ex5}
  \end{subfigure}
    \begin{subfigure}{\linewidth}
    \centering
    \includegraphics[width=0.75\linewidth]{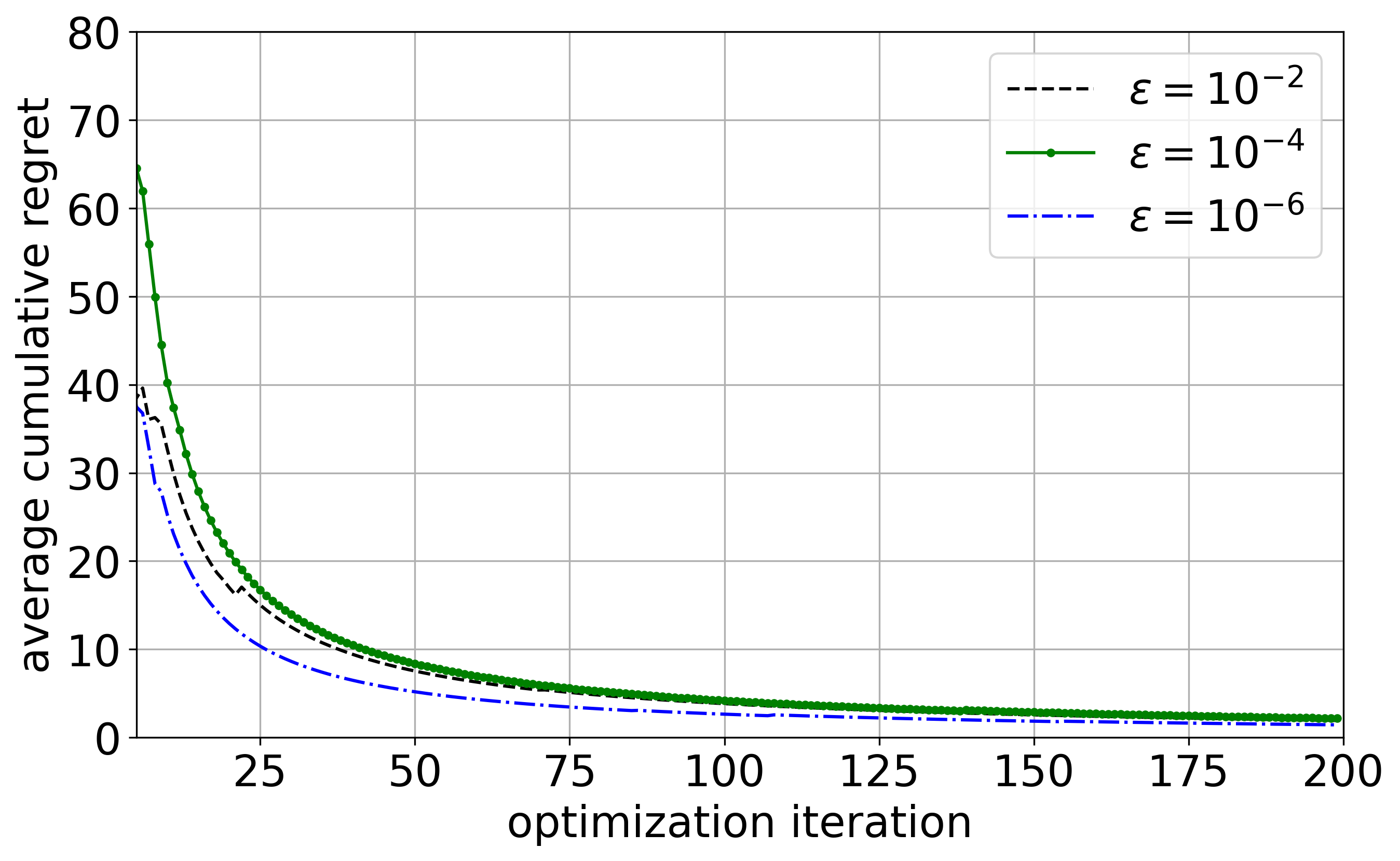}
    \label{fig:ex1}
\end{subfigure}
\begin{subfigure}{\linewidth}
    \centering
    \includegraphics[width=0.75\linewidth]{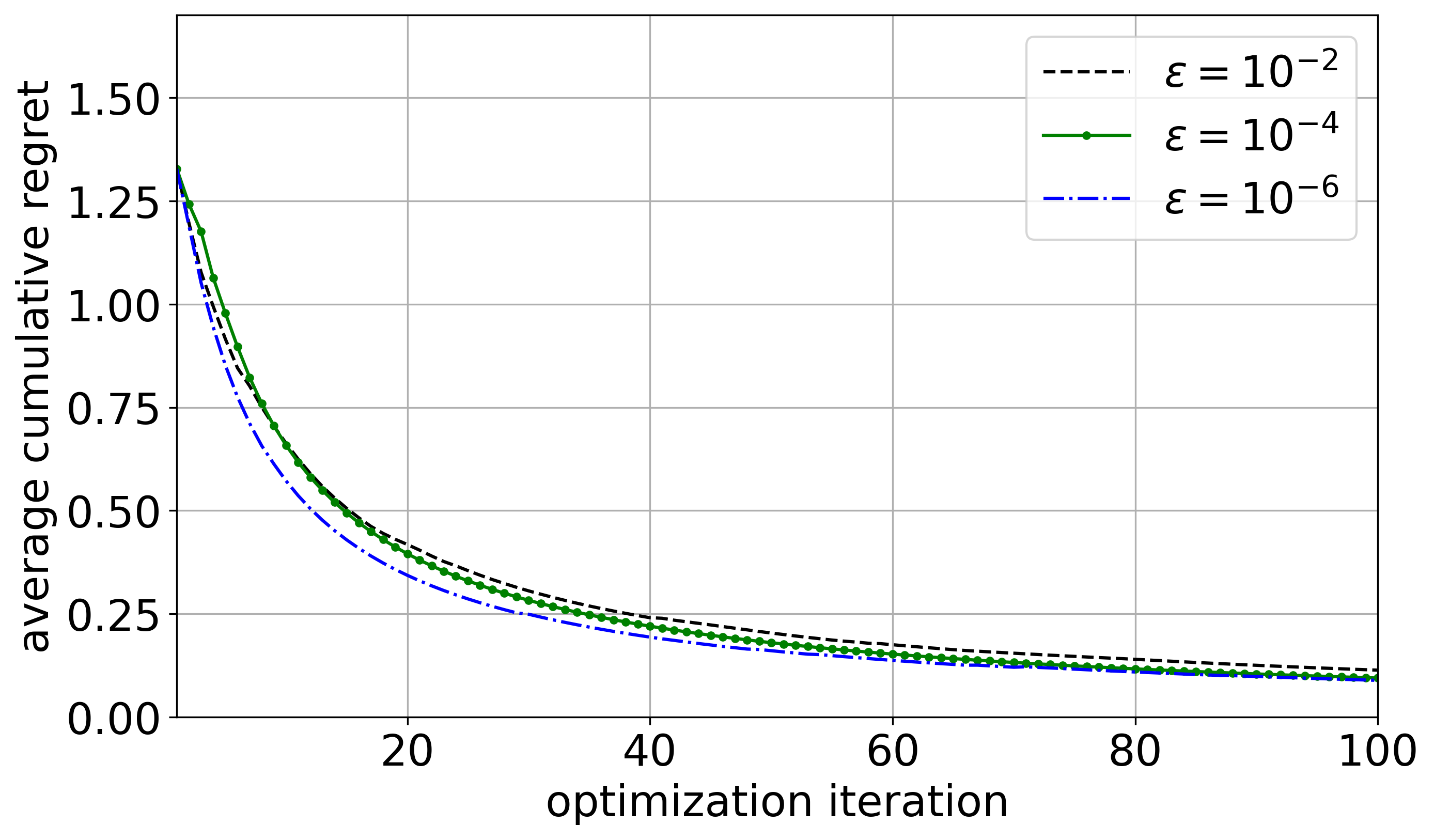}
  %
\end{subfigure}
  \caption{Median average cumulative regret bound for practical EGO with $\epsilon$ values $10^{-2}$, $10^{-4}$, and $10^{-6}$ for five examples. From top to bottom: Rosenbrock, Six-hump camel, Hartmann6, Branin, Michalewicz functions.}
  \label{fig:numerical}
\end{figure}

For all of our examples, practical EGO displays no-regret convergence behavior and is capable of finding the optimal solution rather efficiently, supporting our theoretical results. The different values of $\epsilon$ again align well with the regret bound theory, with $\epsilon=10^{-4}$ appearing to generate lowest regret in $3$ out of the $5$ examples. 

\section{Conclusions}\label{se:conclusion}
In this paper, we establish the novel instantaneous regret bound and the first cumulative regret upper bound for practical EGO, which is the default implementation of EGO in many available software packages. 
We show that it is a no-regret algorithm for kernels including SE and Matérn kernels. 
Our analysis thus provides cumulative regret theories on one of the most widely used BO algorithms.
Further, we provide theoretical guidelines on the choice of the nugget in that   $\epsilon
$ too large can lead to a worse cumulative regret upper bound. 
In practice, we anticipate the choice of $\epsilon$ to be influenced by other factors beyond our theoretical results such as the computational budget and the kernel.

\clearpage





\clearpage
\appendix 
\onecolumn
\section{Background}\label{appdx:back}
We first define an equivalent form of EI~\eqref{eqn:EI-1}.
We distinguish between its exploration and exploitation parts and define the \textit{trade-off} form $EI(a,b):\Rbb\times\Rbb\to\Rbb$ as
\begin{equation} \label{eqn:EI-ab}
 \centering
  \begin{aligned}
       EI(a,b) = a \Phi\left(\frac{a}{b}\right)+b \phi\left(\frac{a}{b}\right), 
  \end{aligned}
\end{equation}
where $b\in(0,1]$. One can view $a$ and $b$ as two independent variables.
For a given $\xbm$, if $a_t=f^+_{t}-\mu_{t}(\xbm)$ and $b_t=\sigma_{t}(\xbm)\in [0,1]$, then $EI(a_t,b_t)=EI_t(\xbm)$. 
Hence, we refer to $f^+_{t}-\mu_{t}(\xbm)$ and $\sigma_t(\xbm)$ the exploitation and exploration parts of $EI_t$, respectively.

The definition of RKHS is given below.
\begin{definition}\label{def:rkhs}
   Consider a positive definite kernel $k: \mathcal{X}\times \mathcal{X}\to\Rbb $ with respect to a finite Borel measure supported on $\mathcal{X}$. A Hilbert space $H_k$ of functions on $\mathcal{X}$ with an inner product $\langle \cdot,\cdot \rangle_{H_k}$ is called a RKHS with kernel $k$ if $k(\cdot,\xbm)\in H_k$ for all $\xbm\in \mathcal{X}$, and $\langle f,k(\cdot,\xbm)\rangle_{H_k}=f(\xbm)$ for all $\xbm\in \mathcal{X}, f\in H_k$. The induced RKHS norm $\norm{f}_{H_k}=\sqrt{\langle f,f\rangle_{H_k}}$ measures the smoothness of $f$ with respect to $k$.
\end{definition}

Union bound is given in the next lemma.
\begin{lemma}\label{lem:unionbound}
  For a countable set of events $A_1,A_2,\dots$, we have 
 \begin{equation*} \label{eqn:union-bound-1}
  \centering
  \begin{aligned}
    \Pbb(\bigcup_{i=1}^{\infty} A_i) \leq \sum_{i=1}^{\infty} \Pbb(A_i).
  \end{aligned}
\end{equation*}
\end{lemma}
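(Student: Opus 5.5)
The plan is to reduce the union bound to countable additivity of $\Pbb$ by replacing the events $A_1,A_2,\dots$ with pairwise disjoint events that have the same union. Define
\begin{equation*}
  B_1 = A_1, \qquad B_i = A_i \setminus \bigcup_{j=1}^{i-1} A_j \quad (i\geq 2).
\end{equation*}
Each $B_i$ is measurable, since it is obtained from finitely many events by complements and intersections. The key properties I will check are:
\begin{itemize}
  \item[(i)] $B_i\subseteq A_i$ for every $i$;
  \item[(ii)] the $B_i$ are pairwise disjoint;
  \item[(iii)] $\bigcup_{i=1}^{\infty} B_i = \bigcup_{i=1}^{\infty} A_i$.
\end{itemize}

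Property (i) is immediate from the definition. For (ii), take $i<k$. Then $B_k$ excludes $A_i\supseteq B_i$, so $B_i\cap B_k=\emptyset$.

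For (iii), the inclusion $\subseteq$ follows from (i). For the reverse inclusion, take $\omega$ in the union of the $A_i$. Let $i$ be the smallest index with $\omega\in A_i$. Then $\omega\notin A_j$ for all $j<i$, so $\omega\in B_i$.

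With these in hand, countable additivity applied to the disjoint family $\{B_i\}$ gives
\begin{equation*}
  \Pbb\Bigl(\bigcup_{i=1}^{\infty} A_i\Bigr) = \Pbb\Bigl(\bigcup_{i=1}^{\infty} B_i\Bigr) = \sum_{i=1}^{\infty}\Pbb(B_i).
\end{equation*}
Monotonicity of $\Pbb$ and (i) give $\Pbb(B_i)\leq \Pbb(A_i)$ for each $i$, which yields the claim. Monotonicity itself follows from additivity: $\Pbb(A_i)=\Pbb(B_i)+\Pbb(A_i\setminus B_i)\geq \Pbb(B_i)$. If $\sum_i \Pbb(A_i)$ diverges, the inequality is trivial.

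There is no real obstacle here. The only step needing care is (iii), the minimal-index argument showing that the disjointified sets still cover the full union.
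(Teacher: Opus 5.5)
Your proof is correct and complete. The disjointification $B_i = A_i\setminus\bigcup_{j<i}A_j$, the minimal-index argument for equality of the unions, countable additivity, and the termwise comparison via monotonicity (which also covers a divergent right-hand side) together form the standard textbook argument.

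The paper gives no proof of this lemma. It records the union bound as a standard background fact, so your argument fills that gap rather than differing from anything in the paper. Note also that the paper's only invocation of it, in the proof of Lemma~\ref{lem:IEI-bound-ratio}, sits inside a purely deterministic chain of inequalities. The lemma therefore does no real work in the paper's analysis.
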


To use the maximum information gain, we consider a Gaussian observation noise $\eta\sim\mathcal{N}(0,\epsilon)$. If such an i.i.d. noise exists, at sample point $\xbm_t$, we have the observation $y_t=f(\xbm_t)+\eta_t$. The maximum information gain can now be defined below.
\begin{definition}\label{def:infogain}
  Consider a set of sample points $A\subset C$. Given $\xbm_{A}$ and its function values $\fbm_A=[f(\xbm)]_{\xbm\in A}$, the mutual information between $\fbm_A$ and the observation $\ybm_A$ is $I(\ybm_A;\fbm_A)=H(\ybm_A)-H(\ybm_A|\fbm_A)$, where $H$ is the entropy. The maximum information gain $\gamma_T$ after $T$ samples is $\gamma_T = \max_{A\subset C,|A|=T} I(\ybm_A;\fbm_A)$. 
\end{definition}
Readers are referred to~\cite{cover1999elements,srinivas2009gaussian} for a detailed discussion of the maximum information gain.  
Here, we emphasize that practical EGO assumes no observation noise. However, we can continue to use maximum information gain as an analytic tool to bound $\sum_{t=1}^T  \sigma_{t-1}(\xbm_t)$. Indeed, $\sigma_{t-1}(\xbm)$ of practical EGO is the same as that of GP with Gaussian noise $\mathcal{N}(0,\epsilon)$. 

The following lemmas are well-established results from~\cite{srinivas2009gaussian} on the information gain and variances.
\begin{lemma}\label{lem:variancebound}
 The sum of posterior standard deviation at sample points  $\sigma_{t-1}(\xbm)$ satisfies
 \begin{equation} \label{eqn:var-1}
  \centering
  \begin{aligned}
    \sum_{t=1}^T  \sigma_{t-1}(\xbm_t) \leq \sqrt{C_{\gamma}(\epsilon) T \gamma_T(\epsilon)}, 
  \end{aligned}
\end{equation}
 where $C_{\gamma}(\epsilon) = 2/log(1+\epsilon^{-1})$.
\end{lemma}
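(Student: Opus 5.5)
The statement is Lemma~\ref{lem:variancebound}, the standard bound on the sum of posterior standard deviations. I would follow the argument of~\cite{srinivas2009gaussian}, with the noise variance replaced by the nugget $\epsilon$. The starting point is the observation already made in the paper: the posterior variance $\sigma_{t-1}^2(\xbm)$ in \eqref{eqn:GP-post} coincides with the posterior variance of a GP observed under i.i.d.\ Gaussian noise $\mathcal{N}(0,\epsilon)$. This is because $\sigma^2_{t-1}$ depends only on the sample locations, not on the observed values. So I may treat the samples $\xbm_1,\dots,\xbm_T$ produced by practical EGO as a fixed set $A$ with $|A|=T$ in Definition~\ref{def:infogain}.

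\textbf{Step 1: information gain as a sum of log-variance terms.} By the chain rule for entropy and the Gaussian conditional structure,
\[
I(\ybm_A;\fbm_A)=\frac{1}{2}\sum_{t=1}^T \log\bigl(1+\epsilon^{-1}\sigma_{t-1}^2(\xbm_t)\bigr).
\]
The reason is that $H(\ybm_t\mid \ybm_{1:t-1})=\frac12\log\bigl(2\pi e(\epsilon+\sigma^2_{t-1}(\xbm_t))\bigr)$ and $H(\ybm_t\mid \fbm)=\frac12\log(2\pi e\,\epsilon)$. Since $A$ is one admissible set, this quantity is at most $\gamma_T(\epsilon)$.

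\textbf{Step 2: compare each variance with its log term.} Because $k(\xbm,\xbm)=1$, we have $s:=\epsilon^{-1}\sigma^2_{t-1}(\xbm_t)\in[0,\epsilon^{-1}]$. The function $s\mapsto s/\log(1+s)$ is increasing, so on this range $s\le \epsilon^{-1}\log(1+s)/\log(1+\epsilon^{-1})$. Multiplying by $\epsilon$ gives
\[
\sigma^2_{t-1}(\xbm_t)\le \frac{\log\bigl(1+\epsilon^{-1}\sigma^2_{t-1}(\xbm_t)\bigr)}{\log(1+\epsilon^{-1})}.
\]
Summing over $t$ and using Step~1 yields $\sum_{t=1}^T\sigma^2_{t-1}(\xbm_t)\le \frac{2}{\log(1+\epsilon^{-1})}\gamma_T=C_\gamma(\epsilon)\gamma_T(\epsilon)$.

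\textbf{Step 3: pass to the sum of standard deviations.} Cauchy--Schwarz gives $\sum_{t=1}^T\sigma_{t-1}(\xbm_t)\le\sqrt{T\sum_t\sigma^2_{t-1}(\xbm_t)}$, which is \eqref{eqn:var-1}.

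The only delicate point is Step~1, namely justifying that the noise-free practical EGO iterates can be fed into an information-gain bound defined for noisy observations. Two facts make this work. First, the variance formulas are identical. Second, the EGO sample path may depend on $f$, but $\gamma_T$ is a maximum over all sets of size $T$, so the bound holds for any realized set. Everything else is routine, including the monotonicity of $s/\log(1+s)$.
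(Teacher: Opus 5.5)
Your proposal is correct and is the standard argument of \cite{srinivas2009gaussian} (the information-gain identity plus the variance-sum lemma, with the noise variance replaced by $\epsilon$). The paper itself gives no proof here and simply cites that result, so your Steps 1--3 supply exactly what is invoked, including the constant $C_\gamma(\epsilon)=2/\log(1+\epsilon^{-1})$. One point to make explicit: practical EGO may sample the same point more than once, so $A$ should be read as the multiset of the $T$ iterates rather than a literal set with $|A|=T$, which is the convention used in \cite{srinivas2009gaussian}.
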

Here, we emphasize that the maximum information gain is dependent on the nugget $\epsilon$. 
The state-of-the-art rates of $\gamma_t$ for two commonly used kernels are given below.
\begin{lemma}[\cite{iwazaki2025improved,vakili2021information}]\label{lem:gammarate}
  For a GP with $t$ samples, the SE kernel has $\gamma_t=\mathcal{O}(\log^{d+1}(t))$, and 
  the Matérn kernel with smoothness parameter $\nu>0$ has $\gamma_t=\mathcal{O}(t^{\frac{d}{2\nu+d}}\log^{\frac{2\nu}{2\nu+d}} (t))$.
\end{lemma}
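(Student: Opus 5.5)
This statement is quoted from \cite{vakili2021information,iwazaki2025improved} rather than proved in the paper. If I had to re-derive it, I would follow the spectral (Mercer) approach of \cite{vakili2021information}. The steps are: decompose the kernel, split the information gain into a finite-rank part and a tail, then optimize the truncation level. The nugget $\epsilon$ plays the role of the noise variance throughout, as in Definition~\ref{def:infogain}.

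\textbf{Step 1: Mercer decomposition and the split.} Write $k(\xbm,\xbm')=\sum_{j\ge1}\lambda_j\psi_j(\xbm)\psi_j(\xbm')$ on the compact set $C$, with $\lambda_1\ge\lambda_2\ge\dots$ and $|\psi_j|\le\psi$ uniformly. For any sample set $A$ of size $t$, the information gain is $I(\ybm_A;\fbm_A)=\frac12\log\det(\Ibm+\epsilon^{-1}\Kbm_A)$. Split $\Kbm_A=\Kbm_A^{(D)}+\Kbm_A^{\perp}$ into the first $D$ eigen-components and the remainder.
\begin{itemize}
\item For the finite-rank part, $\frac12\log\det(\Ibm+\epsilon^{-1}\Kbm_A^{(D)})$ is the log-determinant of a $D\times D$ matrix with trace $O(t/\epsilon)$. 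By AM--GM it is at most $\frac D2\log\!\bigl(1+\frac{t\psi^2\lambda_1}{\epsilon D}\bigr)$.
\item For the tail, the trace of $\Kbm_A^{\perp}$ is at most $t\psi^2\delta_D$, where $\delta_D=\sum_{j>D}\lambda_j$. Using concavity of $\log\det$, i.e.\ $\log\det(M+N)\le\log\det M+\mathrm{tr}(M^{-1}N)$ with $M\succeq \Ibm$, its extra contribution is at most $\frac{t\psi^2\delta_D}{2\epsilon}$.
\end{itemize}
Together this gives
\[
\gamma_t\le \frac D2\log\!\Bigl(1+\frac{t\psi^2\lambda_1}{\epsilon D}\Bigr)+\frac{t\psi^2\delta_D}{2\epsilon},
\]
valid for every $D$.

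\textbf{Step 2: eigenvalue decay rates.}
\begin{itemize}
\item For the SE kernel on a bounded domain, the eigenvalues decay like $\lambda_j\lesssim \exp(-c\,j^{1/d})$. Hence $\delta_D\lesssim D^{\,1-1/d}\exp(-cD^{1/d})$ up to constants.
\item For the Matérn kernel, Sobolev-space equivalence gives $\lambda_j\asymp j^{-(2\nu+d)/d}$. Hence $\delta_D\asymp D^{-2\nu/d}$.
\end{itemize}

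\textbf{Step 3: choosing $D$.}
\begin{itemize}
\item For SE, take $D\asymp(\log t)^d$ with a large enough constant that $t\,\delta_D/\epsilon=O(1)$. The first term then dominates and equals $O(\log^d t\cdot\log t)=O(\log^{d+1}t)$.
\item For Matérn, balance $D\log t$ against $tD^{-2\nu/d}$. This gives $D\asymp (t/\log t)^{d/(2\nu+d)}$. Substituting back yields $\gamma_t=O\bigl(t^{d/(2\nu+d)}\log^{2\nu/(2\nu+d)}t\bigr)$.
\end{itemize}

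\textbf{Main obstacle.} The delicate points are the uniform bound $\psi$ on the eigenfunctions and the sharp eigenvalue asymptotics, which require care on a general compact $C$. I would take these from the cited works rather than re-derive them. The rest is the routine trace/log-det calculation above.
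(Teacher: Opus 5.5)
Your proposal agrees with the paper, which gives no proof and simply imports these rates from \cite{iwazaki2025improved,vakili2021information}. Your sketch is the Mercer-truncation argument of \cite{vakili2021information}; the uniform eigenfunction bound you flag enters there as a standing assumption rather than something proved for general compact $C$, so deferring it to the literature inherits that hypothesis. The only slip is cosmetic: the trace of the $D\times D$ matrix is bounded by $\sum_{i}\sum_{j\le D}\lambda_j\psi_j^2(\xbm_i)/\epsilon\le t/\epsilon$ using $k(\xbm,\xbm)\le 1$, so the first term should read $\frac D2\log\bigl(1+\frac{t}{\epsilon D}\bigr)$ rather than involve $\psi^2\lambda_1$, and this leaves the rates unchanged.
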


Before concluding the section, we state the straightforward bound of $f$ on $C$ as a lemma for easy reference. 
\begin{lemma}\label{lem:f-bound}
  The function $f$ is bounded by $B$, \textit{i.e.}, $|f(\xbm)|\leq B$ for all $\xbm\in C$.
\end{lemma}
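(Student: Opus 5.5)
The proof is a one-line application of the reproducing property followed by Cauchy--Schwarz in $\mathcal{H}_k(C)$. By Assumption~\ref{assp:rkhs}, $f\in\mathcal{H}_k(C)$ with $\norm{f}_{H_k}\leq B$. By Definition~\ref{def:rkhs}, for every $\xbm\in C$ we have $k(\cdot,\xbm)\in \mathcal{H}_k$ and
\begin{equation*}
f(\xbm)=\langle f,k(\cdot,\xbm)\rangle_{H_k}.
\end{equation*}

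Applying Cauchy--Schwarz in the Hilbert space gives
\begin{equation*}
|f(\xbm)|\leq \norm{f}_{H_k}\,\norm{k(\cdot,\xbm)}_{H_k}.
\end{equation*}

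The second factor is computed using the reproducing property once more:
\begin{equation*}
\norm{k(\cdot,\xbm)}_{H_k}^2=\langle k(\cdot,\xbm),k(\cdot,\xbm)\rangle_{H_k}=k(\xbm,\xbm).
\end{equation*}
Assumption~\ref{assp:rkhs} gives $k(\xbm,\xbm)=1$, so $\norm{k(\cdot,\xbm)}_{H_k}=1$.

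Combining these, $|f(\xbm)|\leq \norm{f}_{H_k}\leq B$ for every $\xbm\in C$, which is the claim of Lemma~\ref{lem:f-bound}.

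There is no real obstacle. The only point to check is that the inner product is the one of $\mathcal{H}_k(C)$, so that $k(\cdot,\xbm)$ with $\xbm\in C$ is a legitimate element. This is guaranteed by Definition~\ref{def:rkhs} with $\mathcal{X}=C$.
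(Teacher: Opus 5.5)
Your proof is correct and follows the same route as the paper: the reproducing property, then Cauchy--Schwarz, then $k(\xbm,\xbm)=1$. The paper compresses this into the single line $|f(\xbm)|\leq \norm{f}_{H_k} k(\xbm,\xbm)\leq B$, writing $k(\xbm,\xbm)$ where $\sqrt{k(\xbm,\xbm)}$ is meant (harmless, since the value is $1$), whereas your version makes that factor explicit.
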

\begin{proof}
    From our assumption on the kernel $k(\xbm,\xbm) =1$, we can write 
    \begin{equation} \label{eqn:f-bound-pf-1}
  \centering
  \begin{aligned}
     |f(\xbm)|\leq \norm{f}_{H_k} k(\xbm,\xbm)\leq B.
  \end{aligned}
\end{equation}
\end{proof}

\section{Preliminary Results}\label{se:prep-proof}  
 
First, we state a property of $\tau(\cdot)$ in~\eqref{def:tau} below.
\begin{lemma}\label{lem:tau}
  The function $\tau(z)$ is monotonically increasing in $z$ and $\tau(z)>0$ for $\forall z\in \Rbb$.
  The derivative of $\tau(z)$ is $\Phi(z)$.
\end{lemma}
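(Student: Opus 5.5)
The plan is to work directly from the definition $\tau(z)=z\Phi(z)+\phi(z)$ and the elementary identity $\phi'(z)=-z\phi(z)$ for the standard normal density.

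\textbf{Derivative.} Differentiate term by term with the product rule:
\[
\tau'(z)=\Phi(z)+z\phi(z)+\phi'(z)=\Phi(z)+z\phi(z)-z\phi(z)=\Phi(z).
\]
This gives the claimed formula for the derivative.

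\textbf{Monotonicity.} Since $\Phi(z)>0$ for every $z\in\Rbb$, we have $\tau'(z)>0$ everywhere. Hence $\tau$ is strictly, and in particular monotonically, increasing on $\Rbb$.

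\textbf{Positivity.} Because $\tau$ is increasing, it suffices to show that $\lim_{z\to-\infty}\tau(z)=0$. Then $\tau(z)>\lim_{s\to-\infty}\tau(s)=0$ for every $z$, by strict monotonicity.

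For the limit, write $\tau(z)=\phi(z)-|z|\Phi(z)$ for $z<0$. Clearly $\phi(z)\to0$. For the second term, use the Mills-ratio bound $\Phi(z)\le \phi(z)/|z|$ for $z<0$. This gives $0\le |z|\Phi(z)\le\phi(z)\to0$, so both terms vanish and $\tau(z)\to0$.

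The Mills-ratio bound itself follows from
\[
\Phi(z)=\int_{-\infty}^{z}\phi(s)\,ds\le\int_{-\infty}^{z}\frac{s}{z}\,\phi(s)\,ds=\frac{\phi(z)}{|z|}.
\]
The inequality holds because $s/z\ge1$ for $s\le z<0$, and the last step uses $\int_{-\infty}^{z}s\phi(s)\,ds=-\phi(z)$.

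\textbf{Alternative route.} Positivity can also be seen probabilistically: $\tau(z)=\Ebb[\max\{z+Z,0\}]$ for $Z\sim\mathcal N(0,1)$. This is the expectation of a nonnegative random variable that is positive with positive probability, so it is strictly positive.

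The only step needing care is the limit at $-\infty$, which the tail bound above handles; everything else is direct computation.
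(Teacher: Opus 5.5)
Your proposal is correct and follows the paper's proof: the derivative computation via $\phi'(u)=-u\phi(u)$ and the resulting monotonicity argument are the same. For positivity, the paper applies your Mills-ratio inequality directly in the form $z\Phi(z)=\int_{-\infty}^{z}z\phi(u)\,du>\int_{-\infty}^{z}u\phi(u)\,du=-\phi(z)$, which gives $\tau(z)>0$ for every $z$ at once; your detour through $\lim_{z\to-\infty}\tau(z)=0$ plus strict monotonicity is valid but unnecessary, and your probabilistic alternative is the same integral $\int_{-\infty}^{z}(z-u)\phi(u)\,du$ after reflecting $Z\mapsto -Z$.
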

\begin{proof}
   From the definition of $\tau(z)$, we can write
  \begin{equation} \label{eqn:tau-1}
  \centering
   \begin{aligned}
   \tau(z)=z\Phi(z)+\phi(z) > \int_{-\infty}^{z} u\phi(u) du + \phi(z) = -\phi(u)|_{-\infty}^z+\phi(z) =0.
  \end{aligned}
  \end{equation}
  Given the definition of $\phi(u)$, 
  \begin{equation} \label{eqn:tau-2}
 \centering
  \begin{aligned}
      \frac{d \phi(u)}{d u} = \frac{1}{\sqrt{2\pi}} e^{-\frac{u^2}{2}} (-u) = -\phi(u)u.
   \end{aligned}
\end{equation}
   Thus, the derivative of $\tau$ is 
   \begin{equation} \label{eqn:tau-3}
 \centering
  \begin{aligned}
      \frac{d\tau(z)}{d z} = \Phi(z)+z\phi(z) -\phi(z)z = \Phi(z)>0.
   \end{aligned}
\end{equation}
\end{proof}
Another lemma on $\tau$ and $\Phi$ is given below.
\begin{lemma}\label{lem:tauvsPhi}
  Given $z>0$, $\Phi(-z)>\tau(-z)$. 
\end{lemma}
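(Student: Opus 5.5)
We need to prove that for $z>0$, $\Phi(-z)>\tau(-z)$. By definition $\tau(-z) = -z\Phi(-z)+\phi(z)$, using that $\phi$ is even. The claim is therefore equivalent to
\begin{equation*}
(1+z)\Phi(-z) > \phi(z), \qquad z>0 .
\end{equation*}

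The plan is to study the difference $g(z) = \Phi(-z)-\tau(-z) = (1+z)\Phi(-z)-\phi(z)$ on $[0,\infty)$. We will show that $g$ is strictly decreasing and tends to $0$, so that it stays strictly positive on $(0,\infty)$.

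For the limit, note that both $(1+z)\Phi(-z)$ and $\phi(z)$ go to $0$ as $z\to\infty$, because $\Phi(-z)\le \phi(z)/z$ by the standard Mills-ratio bound. Hence $g(z)\to 0$.

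For the derivative, we use the facts $\frac{d}{dz}\Phi(-z) = -\phi(z)$ and $\phi'(z)=-z\phi(z)$ (the latter is computed in the proof of Lemma~\ref{lem:tau}). These give
\begin{equation*}
g'(z) = \Phi(-z) - (1+z)\phi(z) + z\phi(z) = \Phi(-z)-\phi(z).
\end{equation*}
So we need $\Phi(-z)<\phi(z)$ for all $z\ge 0$.

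To prove this, set $h(z)=\phi(z)-\Phi(-z)$. Then $h'(z) = \phi(z)(1-z)$, so $h$ increases on $[0,1]$ and decreases on $[1,\infty)$. At the left endpoint, $h(0)=\phi(0)-\tfrac12 = 1/\sqrt{2\pi}-1/2 <0$. That is the wrong sign, so the monotonicity argument breaks down on an interval near $0$. This is the main obstacle.

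To get around it, we avoid differentiating $g$ near $0$ and instead prove the inequality directly via integral representations. Write
\begin{equation*}
\phi(z)=\int_z^\infty u\phi(u)\,du, \qquad (1+z)\Phi(-z)=\int_z^\infty (1+z)\phi(u)\,du .
\end{equation*}
Then
\begin{equation*}
g(z)=\int_z^\infty (1+z-u)\phi(u)\,du .
\end{equation*}
The integrand is positive for $u<1+z$ and negative for $u>1+z$, so positivity still requires a comparison. The cleanest route is a split by the size of $z$.

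\textbf{Small $z$.} Let $z_0\approx 0.61$ be the root of $h(z)=0$ on $(0,1)$. On $[0,z_0]$, the function $g$ is increasing because $h<0$ there, and $g(0)=\tfrac12-\phi(0)>0$. Hence $g>0$ on this interval.

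\textbf{Large $z$.} On $[z_0,\infty)$ we have $h\ge 0$, so $g$ is nonincreasing there. To confirm that $h\ge0$ on the whole half-line $[z_0,\infty)$ and not just near $1$: $h$ rises from $z_0$ to $1$, then decreases toward $\lim_{z\to\infty}h=0$, so it stays nonnegative. A nonincreasing function that tends to $0$ is nonnegative. Strictness follows because $h>0$ on $(z_0,\infty)$, which makes $g$ strictly decreasing there and hence $g>0$.

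Combining the two ranges gives $g>0$ for all $z>0$, that is, $\Phi(-z)>\tau(-z)$.

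The only delicate points are the two sign checks $g(0)>0$ and $h(0)<0$, together with the unimodality of $h$. All three are elementary.
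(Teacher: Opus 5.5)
Your proof is correct, and its core is the paper's. Both study $g(z)=(1+z)\Phi(-z)-\phi(z)$ (the paper's $q$). Both compute $g'(z)=\Phi(-z)-\phi(z)$ and $g''(z)=\phi(z)(z-1)$, and conclude that $g$ increases up to a single critical point in $(0,1)$ and decreases afterwards, with $g(0)=\tfrac12-\phi(0)>0$ handling the left end.

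Where you differ is the right end.
\begin{itemize}
\item The paper settles $z>(1+\sqrt5)/2$ directly with the asymptotic lower bound $\Phi(-z)>\phi(z)(1/z-1/z^3)$, obtained by integrating $\phi(u)(1-3/u^4)$. It gets the sign of $g'$ for $z>1$ from $g'(z)<-\tau(-z)<0$ via Lemma~\ref{lem:tau}. It then bounds $g$ on $(0,(1+\sqrt5)/2]$ below by the smaller of its two endpoint values.
\item You instead observe that $h=-g'$ is strictly decreasing on $[1,\infty)$ with limit $0$, hence positive there. So $g$ is strictly decreasing on $[z_0,\infty)$, and you close with $g(z)\to0$, which needs only the elementary bound $z\Phi(-z)\le\phi(z)$.
\end{itemize}
Your route is shorter and avoids both the second-order tail expansion and the golden-ratio split. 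The paper's route yields an explicit margin, $g(z)>\phi(z)(z^2-z-1)/z^3$ for large $z$, which yours does not.

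A few presentation slips, none affecting validity:
\begin{itemize}
\item The root of $h$ is $z_0\approx0.30$, not $0.61$. Indeed $\phi(0.3)\approx0.3814<\Phi(-0.3)\approx0.3821$, while $\phi(0.31)\approx0.3802>\Phi(-0.31)\approx0.3783$. Only the existence of $z_0$ matters.
\item That existence needs $h(1)>0$, which you should state before introducing $z_0$. It follows from your own tail argument, or directly from $\phi(1)-\Phi(-1)\approx0.083$.
\item The integral representation $g(z)=\int_z^\infty(1+z-u)\phi(u)\,du$ and the sentence about avoiding differentiation near $0$ are leftovers. Your final argument never uses that formula and does differentiate $g$ near $0$. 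The only ``obstacle'' was that $g$ is unimodal rather than monotone, and the split at $z_0$ already resolves it.
\end{itemize}
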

\begin{proof}
   Define $q(z) = \Phi(-z)-\tau(-z)$.
   Using integration by parts, we have
   \begin{equation} \label{eqn:tvp-pf-1}
 \centering
  \begin{aligned}
   \Phi(z) = \int_{-\infty}^z \phi(u)du > \int_{-\infty}^{z}\phi(u)\left(1-\frac{3}{u^4}\right)du=-\frac{\phi(z)}{z}+\frac{\phi(z)}{z^3}. 
   \end{aligned}
\end{equation}
  Replacing $z$ with $-z$ in~\eqref{eqn:tvp-pf-1},
   \begin{equation} \label{eqn:tvp-pf-1.5}
 \centering
  \begin{aligned}
   \phi(-z)\left(\frac{1}{z}-\frac{1}{z^3}\right)<\Phi(-z).
   \end{aligned}
\end{equation}
   Multiplying both sides in~\eqref{eqn:tvp-pf-1.5} by $1+z$,
\begin{equation} \label{eqn:tvp-pf-2}
 \centering
  \begin{aligned}
   (1+z)\Phi(-z)> \phi(-z)\frac{z^2-1}{z^3}(1+z)= \phi(-z)\left(1+\frac{z^2-z-1}{z^3}\right). 
   \end{aligned}
\end{equation}
   Thus, if $z > \frac{1+\sqrt{5}}{2}$, then the right-hand-side of~\eqref{eqn:tvp-pf-2} $> \phi(-z)$ and  
 \begin{equation} \label{eqn:tvp-pf-3}
 \centering
  \begin{aligned}
   q(z):=\Phi(-z)-\tau(-z) = (1+z)\Phi(-z) - \phi(-z)>0.
   \end{aligned}
\end{equation}
   Therefore, in the following, we focus on $z\in(0,\frac{1+\sqrt{5}}{2}]$. We analyze $q(z)$ using its derivatives.
   Taking the derivative of $q$, by Lemma~\ref{lem:tau},  
   \begin{equation} \label{eqn:tvp-pf-4}
 \centering
  \begin{aligned}
      \frac{d q(z)}{d z} = -\phi(-z)+\Phi(-z) := q'(z).
   \end{aligned}
\end{equation}
  Further, the derivative of $q'(z)$ is 
  \begin{equation} \label{eqn:tvp-pf-5}
 \centering
  \begin{aligned}
      \frac{d^2 q(z)}{d z^2} = \frac{d q'(z)}{d z} = -\phi(-z)+\phi(-z) z=\phi(z)(z-1).
   \end{aligned}
\end{equation}
  For $z>1$, $\frac{d^2 q(z)}{d z^2}>0$. For $0<z<1$, $\frac{d^2 q(z)}{d z^2}<0$. 
  Thus, $q'(z)$ is monotonically decreasing for $0<z<1$ and then monotonically increasing for $z>1$ .
  We first consider $q'(z)$ for $0<z<1$.
  We know that by simple algebra, $q'(0)=\Phi(0)-\phi(0)>0$ and $q'(1)=\Phi(-1)-\phi(-1)<0$.  
   Thus, there exists a $0<\bar{z}<1$ so that $q'(\bar{z})=0$.
  Next, for $z>1$, from Lemma~\ref{lem:tau}, we can write  
    \begin{equation} \label{eqn:tvp-pf-6}
 \centering
  \begin{aligned}
    q'(z) =  \Phi(-z)-\phi(-z) < z\Phi(-z) - \phi(-z) = - \tau(-z) < 0.
   \end{aligned}
\end{equation}
   Therefore, $0<\bar{z}<1<\frac{1+\sqrt{5}}{2}$ is a unique stationary point such that $q'(\bar{z})=0$.
   Thus, $q'(z)>0 $ for $0 <z< \bar{z}< \frac{1+\sqrt{5}}{2}$ and $q'(z)<0$ for $z> \bar{z}$.
   This means that for $0<z<\bar{z}$, $q(z)$ is monotonically increasing. For $ \bar{z}<z<\frac{1+\sqrt{5}}{2}$, $q(z)$ is monotonically decreasing. Therefore, $q(z) >\min\{q(0), q(\frac{1+\sqrt{5}}{2})\}$ for $z\in (0,\frac{1+\sqrt{5}}{2})$. Since $q(0)>0$ and $q(\frac{1+\sqrt{5}}{2})>0$, $q(z)>0$ for $z\in (0,\frac{1+\sqrt{5}}{2})$. 
   Combined with~\eqref{eqn:tvp-pf-3}, the proof is complete.
\end{proof}

The next lemma contains basic inequalities for $EI_t$.
\begin{lemma}\label{lem:EI}
 $EI_{t-1}(\xbm)$ satisfies 
    $EI_{t-1}(\xbm) \geq 0$ and $EI_{t-1}(\xbm) \geq f_{t-1}^+- \mu_{t-1}(\xbm)$.
Moreover, 
\begin{equation} \label{eqn:EI-property-2}
 \centering
  \begin{aligned}
   z_{t-1}(\xbm)\leq  \frac{EI_{t-1}(\xbm)}{\sigma_{t-1}(\xbm) }=\tau(z_{t-1}(\xbm)) < \begin{cases}  \phi(z_{t-1}(\xbm)), \ &z_{t-1}(\xbm)<0\\
             z_{t-1}(\xbm) +\phi(z_{t-1}(\xbm)), \ &z_{t-1}(\xbm)\geq 0.
      \end{cases}
  \end{aligned}
\end{equation}

\end{lemma}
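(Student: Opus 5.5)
Everything follows from the $\tau$-form $EI_{t-1}(\xbm)=\sigma_{t-1}(\xbm)\tau(z_{t-1}(\xbm))$ together with Lemma~\ref{lem:tau}. We assume $\sigma_{t-1}(\xbm)>0$, which holds for practical EGO because the nugget keeps the posterior variance positive. The middle equality in \eqref{eqn:EI-property-2} is then simply the definition of $\tau$ after dividing by $\sigma_{t-1}(\xbm)$.

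Nonnegativity is immediate. Lemma~\ref{lem:tau} gives $\tau(z)>0$, so $EI_{t-1}(\xbm)=\sigma_{t-1}(\xbm)\tau(z_{t-1}(\xbm))\geq 0$.

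For the lower bound $\tau(z)\geq z$, we use $\tau(z)-z=z(\Phi(z)-1)+\phi(z)=\phi(z)-z\Phi(-z)$. By symmetry of $\phi$ and $\Phi(-z)=1-\Phi(z)$, this equals $\tau(-z)$. Lemma~\ref{lem:tau} then gives $\tau(z)-z=\tau(-z)>0$, so $\tau(z)>z$. Multiplying by $\sigma_{t-1}(\xbm)>0$ yields $EI_{t-1}(\xbm)\geq \sigma_{t-1}(\xbm)z_{t-1}(\xbm)=f_{t-1}^+-\mu_{t-1}(\xbm)$. This gives both the second basic inequality and the left inequality in \eqref{eqn:EI-property-2}.

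For the upper bounds we split on the sign of $z$.
\begin{itemize}
\item If $z<0$, then $z\Phi(z)<0$ because $\Phi(z)>0$. Hence $\tau(z)=z\Phi(z)+\phi(z)<\phi(z)$.
\item If $z\geq 0$, then $\Phi(z)<1$ gives $z\Phi(z)\leq z$, with strict inequality when $z>0$. Hence $\tau(z)\leq z+\phi(z)$.
\end{itemize}
At $z=0$ the second case gives equality, $\tau(0)=\phi(0)$. The strict "$<$" in \eqref{eqn:EI-property-2} therefore holds only for $z>0$; at $z=0$ it should read "$\leq$", and the later arguments only need the non-strict version.

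No step is a real obstacle. The only non-obvious point is the identity $\tau(z)-z=\tau(-z)$, which yields the lower bound in one line. The only care needed is the edge case $z=0$ and the convention $\sigma_{t-1}>0$, which the nugget guarantees.
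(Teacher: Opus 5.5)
Your proof is correct. For the upper bounds it is the paper's own argument: split on the sign of $z_{t-1}(\xbm)$ and use $\Phi>0$ and $\Phi<1$.

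Where you differ is the lower half. The paper gets $EI_{t-1}(\xbm)\geq 0$ and $EI_{t-1}(\xbm)\geq f_{t-1}^+-\mu_{t-1}(\xbm)$ ``from the definition.'' That is, it reads $EI_{t-1}$ as the posterior expectation of $\max\{f_{t-1}^+-f(\xbm),0\}$, which dominates both $0$ and $f_{t-1}^+-\mu_{t-1}(\xbm)$. It then divides by $\sigma_{t-1}(\xbm)$ to obtain the left inequality in \eqref{eqn:EI-property-2}. You instead stay with the closed form \eqref{eqn:EI-1} and use positivity of $\tau$ together with the reflection identity $\tau(z)-z=\tau(-z)$.

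\begin{itemize}
\item The paper's route is a one-liner. However, it leans on the Bayesian reading $f(\xbm)\sim\mathcal{N}(\mu_{t-1}(\xbm),\sigma_{t-1}^2(\xbm))$. That sits uneasily with the frequentist setting, where $f$ is a fixed RKHS function and $I_{t-1}$ is defined through that fixed $f$.
\item Your route is purely analytic and needs only \eqref{eqn:EI-1} and Lemma~\ref{lem:tau}. It also gives the strict inequality $EI_{t-1}(\xbm)>f_{t-1}^+-\mu_{t-1}(\xbm)$ for free.
\end{itemize}

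Your remark about $z_{t-1}(\xbm)=0$ is also right. There $\tau(0)=\phi(0)=0+\phi(0)$, so the paper's appeal to $\Phi(\cdot)<1$ only gives strictness for $z>0$, and the stated ``$<$'' should be ``$\leq$'' at that point. Later uses are unaffected; for example, Lemma~\ref{lem:mu-bounded-EI} only needs the $z<0$ branch.
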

\begin{proof}
     From the definition of $I_{t-1}$ and $EI_{t-1}$, the first statement follows immediately.
     By~\eqref{eqn:EI-1},  
\begin{equation} \label{eqn:EI-property-pf-1}
 \centering
  \begin{aligned}
     \frac{EI_{t-1}(\xbm)}{\sigma_{t-1}(\xbm) } = z_{t-1}(\xbm) \Phi(z_{t-1}(\xbm)) + \phi(z_{t-1}(\xbm)). 
  \end{aligned}
\end{equation}
If $z_{t-1}(\xbm)<0$, or equivalently $f_{t-1}^+ -\mu_{t-1}(\xbm)<0$,~\eqref{eqn:EI-property-pf-1} leads to
     $\frac{EI_{t-1}(\xbm)}{\sigma_{t-1}(\xbm)} < \phi(z_{t-1}(\xbm))$. 
If $z_{t-1}(\xbm)\geq 0$, $\Phi(\cdot)<1$ gives us
     $\frac{EI_{t-1}(\xbm)}{\sigma_{t-1}(\xbm)} < z_{t-1}(\xbm) +\phi(z_{t-1}(\xbm))$. 
 The left inequality in~\eqref{eqn:EI-property-2} is an immediate result of $EI_{t-1}(\xbm) \geq f^+_{t-1}- \mu_{t-1}(\xbm)$.
\end{proof}
 The monotonicity of the exploration and exploitation of~\eqref{eqn:EI-ab} is given next, previously also shown in~\cite{jones1998efficient}.
\begin{lemma}\label{lem:EI-ms}
  $EI(a,b)$ is monotonically increasing in both $a$ and $b$ for $b\in(0,1]$. 
\end{lemma}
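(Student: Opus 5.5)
We will prove Lemma~\ref{lem:EI-ms} by computing the two partial derivatives of $EI(a,b)$ in \eqref{eqn:EI-ab} directly and showing both are nonnegative (in fact positive) on $\Rbb\times(0,1]$. Writing $EI(a,b)=b\,\tau(a/b)$ with $\tau$ from \eqref{def:tau}, the computation reduces to properties of $\tau$ already recorded in Lemma~\ref{lem:tau}. In particular, we use $\tau'(z)=\Phi(z)$, which follows from $\phi'(u)=-u\phi(u)$.

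For monotonicity in $a$, we fix $b>0$ and use the chain rule:
\begin{equation*}
\frac{\partial EI(a,b)}{\partial a} = b\,\tau'\!\left(\frac{a}{b}\right)\frac{1}{b} = \Phi\!\left(\frac{a}{b}\right) > 0 .
\end{equation*}
Hence $EI$ is strictly increasing in $a$.

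For monotonicity in $b$, we fix $a$ and write $z=a/b$, so that $\partial z/\partial b=-a/b^2=-z/b$. Then
\begin{equation*}
\frac{\partial EI(a,b)}{\partial b} = \tau(z) + b\,\Phi(z)\left(-\frac{z}{b}\right) = z\Phi(z)+\phi(z)-z\Phi(z) = \phi\!\left(\frac{a}{b}\right) > 0 .
\end{equation*}
Hence $EI$ is strictly increasing in $b$ on $(0,1]$. The restriction $b\le 1$ plays no role beyond matching the range of $\sigma_t$ under Assumption~\ref{assp:rkhs}.

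The only point needing care is the derivative in $b$, because $z$ depends on $b$. This is exactly where the cancellation $\tau(z)-z\tau'(z)=\phi(z)$ is used. One can verify it either through Lemma~\ref{lem:tau} or by differentiating \eqref{eqn:EI-ab} term by term with $\phi'(u)=-u\phi(u)$: the $\phi$-terms coming from $\Phi(a/b)$ and from $b\phi(a/b)$ cancel, leaving $\phi(a/b)$. Since both partials are continuous and positive on the open region, monotonicity in each variable follows from the mean value theorem.
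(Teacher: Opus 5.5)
Your proof is correct and matches the paper's: both compute $\partial EI/\partial a=\Phi(a/b)>0$ and $\partial EI/\partial b=\phi(a/b)>0$. The paper differentiates \eqref{eqn:EI-ab} term by term using $\phi'(u)=-u\phi(u)$, while you run the same computation through $EI(a,b)=b\,\tau(a/b)$ and $\tau'=\Phi$ from Lemma~\ref{lem:tau}, which is only a cosmetic difference.
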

\begin{proof}
   We prove the lemma by taking the derivative of $EI(a,b)$ with respect to both variables. First,  
 \begin{equation} \label{eqn:EI-ms-pf-1}
 \centering
  \begin{aligned}
   \frac{\partial EI(a,b)}{\partial a} = \Phi\left(\frac{a}{b}\right) + a\phi\left(\frac{a}{b}\right) \frac{1}{b} + b\frac{\partial \phi\left(\frac{a}{b}\right)}{\partial a}.
   \end{aligned}
\end{equation}
   From~\eqref{eqn:tau-2},~\eqref{eqn:EI-ms-pf-1} is  
  \begin{equation} \label{eqn:EI-ms-pf-2}
 \centering
  \begin{aligned}
   \frac{\partial EI(a,b)}{\partial a} = \Phi\left(\frac{a}{b}\right) + \phi\left(\frac{a}{b}\right) \frac{a}{b}  -\phi\left(\frac{a}{b}\right)\frac{a}{b}  = \Phi\left(\frac{a}{b}\right)>0.
   \end{aligned}
\end{equation}
  Similarly, 
 \begin{equation} \label{eqn:EI-ms-pf-3}
 \centering
  \begin{aligned}
   \frac{\partial EI(a,b)}{\partial b} =&  -a\phi\left(\frac{a}{b}\right)\frac{a}{b^2} + \phi\left(\frac{a}{b}\right)-b\phi\left(\frac{a}{b}\right)\frac{a}{b}(-\frac{a}{b^2})
     = \phi\left(\frac{a}{b}\right)>0.
   \end{aligned}
\end{equation}
  
\end{proof}

The next lemma puts a lower bound on $f^+_{t-1}-\mu_{t-1}(\xbm)<0$ if $EI_{t-1}(\xbm)$ is bounded below by a positive sequence denoted as $\kappa_t$. It is also previously shown in~\cite{nguyen17a}.
\begin{lemma}\label{lem:mu-bounded-EI}
   If $EI_{t-1}(\xbm)\geq \kappa_t$ for some $\kappa_t \in(0,\frac{1}{\sqrt{2\pi}})$ and $f^+_{t-1}-\mu_{t-1}(\xbm)<0$, then we have 
  \begin{equation} \label{eqn:mu-bounded-EI-1}
  \centering
  \begin{aligned}
           f_{t-1}^+-\mu_{t-1}(\xbm) \geq -  \sqrt{2\log\left(\frac{1}{\sqrt{2\pi}\kappa_t}\right)}\sigma_{t-1}(\xbm).
   \end{aligned}
\end{equation}
\end{lemma}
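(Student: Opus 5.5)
We need to prove Lemma~\ref{lem:mu-bounded-EI}: if $EI_{t-1}(\xbm)\geq\kappa_t$ and $z:=z_{t-1}(\xbm)<0$, then $z\geq -\sqrt{2\log(1/(\sqrt{2\pi}\kappa_t))}$. The plan is to work entirely in the normalized variable $z$, using the $\tau$ form $EI_{t-1}(\xbm)=\sigma_{t-1}(\xbm)\tau(z)$.

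First, since $z<0$, Lemma~\ref{lem:EI} gives $\tau(z)<\phi(z)$. Next we remove $\sigma_{t-1}(\xbm)$ using the assumption $k(\xbm,\xbm)=1$ in Assumption~\ref{assp:rkhs}. From \eqref{eqn:GP-post}, $\sigma_{t-1}^2(\xbm)\leq k(\xbm,\xbm)=1$, because the subtracted quadratic form is nonnegative when $\Kbm_{t-1}+\epsilon\Ibm$ is positive definite. Chaining these bounds gives
\[
\kappa_t\leq EI_{t-1}(\xbm)=\sigma_{t-1}(\xbm)\tau(z)<\phi(z)=\frac{1}{\sqrt{2\pi}}e^{-z^2/2}.
\]

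Rearranging, $e^{-z^2/2}>\sqrt{2\pi}\kappa_t$. Since $\kappa_t<1/\sqrt{2\pi}$, the logarithm $\log(1/(\sqrt{2\pi}\kappa_t))$ is positive, so taking logs is valid and yields $z^2<2\log(1/(\sqrt{2\pi}\kappa_t))$. Because $z<0$, this gives $z>-\sqrt{2\log(1/(\sqrt{2\pi}\kappa_t))}$. Multiplying by $\sigma_{t-1}(\xbm)>0$ recovers \eqref{eqn:mu-bounded-EI-1}; this step needs $\sigma_{t-1}(\xbm)>0$, which holds because $\epsilon>0$ ensures $\sigma_{t-1}(\xbm)$ is positive.

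The main point requiring care is the bound $\sigma_{t-1}(\xbm)\leq 1$ needed to pass from $EI$ to $\tau$. Without it, the inequality would only give $\kappa_t/\sigma_{t-1}(\xbm)<\phi(z)$, which is weaker when $\sigma_{t-1}(\xbm)<1$ and would not yield the stated constant. Everything else is elementary algebra.
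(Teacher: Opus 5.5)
Your proposal is correct and follows essentially the same route as the paper: you bound $EI_{t-1}(\xbm)<\sigma_{t-1}(\xbm)\phi(z_{t-1}(\xbm))\le\phi(z_{t-1}(\xbm))$ using $z_{t-1}(\xbm)<0$ and $\sigma_{t-1}(\xbm)\le 1$, then take logarithms. The one difference is that you make explicit two facts the paper uses silently: $\sigma_{t-1}(\xbm)\le 1$ (from $k(\xbm,\xbm)=1$ and positive definiteness of $\Kbm_{t-1}+\epsilon\Ibm$) and $\sigma_{t-1}(\xbm)>0$ (from $\epsilon>0$).
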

\begin{proof}
     By definition of $EI_{t-1}(\xbm)$,
  \begin{equation} \label{eqn:mu-bounded-EI-pf-1}
  \centering
  \begin{aligned}
       \kappa_t\leq& (f_{t-1}^+-\mu_{t-1}(\xbm))\Phi(z_{t-1}(\xbm))+\sigma_{t-1}(\xbm)\phi(z_{t-1}(\xbm)) < \sigma_{t-1}(\xbm)\phi(z_{t-1}(\xbm))\\
    =& \sigma_{t-1}(\xbm) \frac{1}{\sqrt{2\pi}} e^{-\frac{1}{2}z_{t-1}^2(\xbm)}\leq \frac{1}{\sqrt{2\pi}} e^{-\frac{1}{2}z_{t-1}^2(\xbm)}.
   \end{aligned}
\end{equation}
   Rearranging and taking the logarithm of~\eqref{eqn:mu-bounded-EI-pf-1}, we have 
  \begin{equation} \label{eqn:mu-bounded-EI-pf-2}
  \centering
  \begin{aligned}
       2\log\left(\frac{1}{\sqrt{2\pi}\kappa_t}\right) >  z_{t-1}^2(\xbm) = \left(\frac{f_{t-1}^+-\mu_{t-1}(\xbm)}{\sigma_{t-1}(\xbm)}\right)^2.
   \end{aligned}
\end{equation}
   Given that $f^+_{t-1}-\mu_{t-1}(\xbm)<0$, we recover~\eqref{eqn:mu-bounded-EI-1}.  
\end{proof}

A global lower bound of the posterior variance is given in the next lemma.
\begin{lemma}\label{lemma:gp-sigma-bound}
    The posterior standard deviation~\eqref{eqn:GP-post} has the lower bound  
 \begin{equation} \label{eqn:gp-sigma-bound-1}
 \centering
  \begin{aligned}
     \sigma_t(\xbm) \geq   \sqrt{\frac{\epsilon}{t+\epsilon}}.
   \end{aligned}
\end{equation}
\end{lemma}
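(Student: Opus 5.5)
The plan is to reduce the claim to an eigenvalue bound on $\Kbm_t$ and a Rayleigh-quotient argument. Write $\Kbm_t = \Pbm \Lambda \Pbm^T$ with $\Pbm$ orthogonal and $\Lambda=\mathrm{diag}(\lambda_1,\dots,\lambda_t)$. Since $\Kbm_t$ is positive semidefinite, each $\lambda_i\geq 0$. Because $k(\xbm_i,\xbm_i)=1$ (Assumption~\ref{assp:rkhs}), the trace of $\Kbm_t$ equals $t$. Hence $\lambda_i\leq t$ for every $i$, and therefore $\Kbm_t\preceq t\Ibm$.

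The key step is to avoid bounding $\kbm_t(\xbm)^T(\Kbm_t+\epsilon\Ibm)^{-1}\kbm_t(\xbm)$ through $\norm{\kbm_t(\xbm)}^2$ alone, since that only gives roughly $t/\epsilon$, which is useless. Instead, I would use the augmented Gram matrix
\[
\tilde{\Kbm} = \begin{bmatrix} \Kbm_t & \kbm_t(\xbm)\\ \kbm_t(\xbm)^T & 1\end{bmatrix}.
\]
This is the kernel matrix of the $t+1$ points $\xbm_1,\dots,\xbm_t,\xbm$, so it is positive semidefinite with trace $t+1$. Consequently $\tilde{\Kbm}+\epsilon\Ibm$ is positive definite.

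The Schur complement of the block $\Kbm_t+\epsilon\Ibm$ in $\tilde{\Kbm}+\epsilon\Ibm$ is exactly $1+\epsilon-\kbm_t(\xbm)^T(\Kbm_t+\epsilon\Ibm)^{-1}\kbm_t(\xbm)=\sigma_t^2(\xbm)+\epsilon$. The inverse of this Schur complement is the bottom-right entry $e_{t+1}^T(\tilde{\Kbm}+\epsilon\Ibm)^{-1}e_{t+1}$. Since the largest eigenvalue of $\tilde{\Kbm}+\epsilon\Ibm$ is at most $t+1+\epsilon$, this entry is at least $1/(t+1+\epsilon)$. That only gives $\sigma_t^2+\epsilon\leq t+1+\epsilon$, which is the wrong direction.

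The useful direction comes from the smallest eigenvalue. Every eigenvalue of $\tilde{\Kbm}+\epsilon\Ibm$ is at least $\epsilon$, so the bottom-right entry of the inverse is at most $1/\epsilon$. This yields $\sigma_t^2(\xbm)+\epsilon\geq\epsilon$, which is trivial. So neither crude eigenvalue bound suffices, and I will need a sharper argument.

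The sharper route is a direct spectral computation. Set $\bm{v}=\Pbm^T\kbm_t(\xbm)$. Then
\[
\sigma_t^2(\xbm)=1-\sum_i \frac{v_i^2}{\lambda_i+\epsilon}.
\]
Positive semidefiniteness of $\tilde{\Kbm}$ (equivalently, $\sigma^0_t\geq 0$ in the limiting sense) gives the constraint $\sum_i v_i^2/\lambda_i\leq 1$, with the sum taken over $\lambda_i>0$; the components with $\lambda_i=0$ must have $v_i=0$ by positive semidefiniteness. Then
\[
\frac{v_i^2}{\lambda_i+\epsilon}=\frac{v_i^2}{\lambda_i}\cdot\frac{\lambda_i}{\lambda_i+\epsilon}\leq \frac{v_i^2}{\lambda_i}\cdot\frac{t}{t+\epsilon},
\]
because $\lambda\mapsto\lambda/(\lambda+\epsilon)$ is increasing and $\lambda_i\leq t$. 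Summing over $i$ gives
\[
\kbm_t(\xbm)^T(\Kbm_t+\epsilon\Ibm)^{-1}\kbm_t(\xbm)\leq \frac{t}{t+\epsilon},
\]
and therefore $\sigma_t^2(\xbm)\geq \epsilon/(t+\epsilon)$. Taking square roots gives the claim.

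I expect the main obstacle to be justifying the constraint $\sum_i v_i^2/\lambda_i\leq 1$ rigorously when $\Kbm_t$ is singular, for example when sample points repeat. I would handle it via the generalized Schur complement: $\tilde{\Kbm}\succeq 0$ implies $\kbm_t(\xbm)$ lies in the range of $\Kbm_t$, and $\kbm_t(\xbm)^T\Kbm_t^{\dagger}\kbm_t(\xbm)\leq 1$. If that is cumbersome, an alternative is to apply the argument to $\Kbm_t+\delta\Ibm$ and let $\delta\to 0$.
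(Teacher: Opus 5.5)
Your proof is correct, and it takes a different route from the paper's. The paper asserts, without justification, that the posterior variance at $\xbm$ is smallest when all $t$ previous samples coincide with $\xbm$. In that case $\Kbm_t$ is the all-ones matrix and $\kbm_t(\xbm)$ is the all-ones vector. The paper then inverts $\Kbm_t+\epsilon\Ibm$ in closed form (a rank-one Sherman--Morrison update) and reads off $\sigma_t^2(\xbm)=\epsilon/(t+\epsilon)$.

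You instead bound $\kbm_t(\xbm)^T(\Kbm_t+\epsilon\Ibm)^{-1}\kbm_t(\xbm)$ for an arbitrary design by combining three facts:
\begin{itemize}
\item $\lambda_{\max}(\Kbm_t)\le\mathrm{tr}(\Kbm_t)=t$;
\item the noise-free constraint $\kbm_t(\xbm)^T\Kbm_t^{\dagger}\kbm_t(\xbm)\le 1$, which follows from positive semidefiniteness of the augmented Gram matrix $\tilde{\Kbm}$;
\item the monotonicity of $\lambda\mapsto\lambda/(\lambda+\epsilon)$.
\end{itemize}
In effect, your argument supplies the missing proof of the paper's extremality claim. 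Every inequality in your chain becomes an equality in the coincident-sample configuration (single eigenvalue $t$, with $\kbm_t(\xbm)$ aligned to its eigenvector), so you also show the bound is sharp. The paper's version buys brevity and an explicit picture of the worst case. Yours is self-contained and uses only the unit diagonal and positive semidefiniteness of $k$.

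For the singular case, your perturbation route is the cleanest. Take $0<\delta<\epsilon$; an ordinary Schur complement of $\tilde{\Kbm}$ with $\delta$ added to its top-left block gives $\sum_i v_i^2/(\lambda_i+\delta)\le 1$. Since $\lambda\mapsto(\lambda+\delta)/(\lambda+\epsilon)$ is increasing, you get $\kbm_t(\xbm)^T(\Kbm_t+\epsilon\Ibm)^{-1}\kbm_t(\xbm)\le (t+\delta)/(t+\epsilon)$, and then you let $\delta\to 0$.

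The two opening attempts via the extreme eigenvalues of $\tilde{\Kbm}+\epsilon\Ibm$ are dead ends and can be dropped from the write-up.
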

\begin{proof}
    We invoke the fact that the minimum posterior standard deviation at $\xbm$ is obtained if the previous $t$ samples are all $\xbm$. In this case, all entries of $\Kbm_t$ are $1$. It is easy to verify that  
   \begin{equation} \label{eqn:gp-sigma-bound-pf-2}
 \centering
  \begin{aligned}
      (\Kbm_t+\epsilon\Ibm)^{-1} = -\frac{1}{t\epsilon+\epsilon^2}\Pbm +\frac{1}{\epsilon} \Ibm,
  \end{aligned}
\end{equation}
where $\Pbm$ is a $t\times t$ matrix with all entries being $1$.
Thus, by~\eqref{eqn:GP-post}, we have 
   \begin{equation} \label{eqn:gp-sigma-bound-pf-5}
 \centering
  \begin{aligned}
    \sigma_t^2(\xbm) \geq  1- \pbm^T  \left[-\frac{1}{t+\epsilon}\Pbm +\frac{1}{\epsilon} \Ibm\right] \pbm = \frac{\epsilon}{t+\epsilon},
  \end{aligned}
\end{equation}
   where $\pbm$ is the $t$-dimensional vector with all $1$ entries.
\end{proof}

Next, we present a well-established bound on $f$ and the prediction $\mu_{t-1}(\xbm)$.
\begin{lemma}\label{lem:fmu}
For any given $\xbm\in C$ and $t\geq 1$,   
\begin{equation} \label{eqn:fmu-1}
 \centering
  \begin{aligned}
     |f(\xbm) - \mu_{t-1}(\xbm)  | \leq B \sigma_{t-1}(\xbm). 
  \end{aligned}
\end{equation}
\end{lemma}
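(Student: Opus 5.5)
The plan is to use the RKHS reproducing property together with the fact that $\mu_{t-1}$ is a finite linear combination of kernel sections. Write $\bm{\alpha}(\xbm) = (\Kbm_{t-1}+\epsilon\Ibm)^{-1}\kbm_{t-1}(\xbm)$, so that $\mu_{t-1}(\xbm)=\bm{\alpha}(\xbm)^T\fbm_{1:t-1}$. By the reproducing property, $f(\xbm_i)=\langle f, k(\cdot,\xbm_i)\rangle_{H_k}$ and $f(\xbm)=\langle f,k(\cdot,\xbm)\rangle_{H_k}$. Hence
\begin{equation*}
f(\xbm)-\mu_{t-1}(\xbm) = \Big\langle f,\; k(\cdot,\xbm)-\sum_{i=1}^{t-1}\alpha_i(\xbm) k(\cdot,\xbm_i)\Big\rangle_{H_k},
\end{equation*}
and Cauchy--Schwarz with $\norm{f}_{H_k}\le B$ (Assumption~\ref{assp:rkhs}) gives $|f(\xbm)-\mu_{t-1}(\xbm)|\le B\,\norm{g_{\xbm}}_{H_k}$, where $g_{\xbm}=k(\cdot,\xbm)-\sum_i\alpha_i(\xbm)k(\cdot,\xbm_i)$.

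Next I would compute this norm. Expanding gives
\begin{equation*}
\norm{g_{\xbm}}_{H_k}^2 = k(\xbm,\xbm)-2\bm{\alpha}^T\kbm_{t-1}(\xbm)+\bm{\alpha}^T\Kbm_{t-1}\bm{\alpha}.
\end{equation*}
Substituting the expression for $\bm{\alpha}$ and writing $M=\Kbm_{t-1}+\epsilon\Ibm$ yields
\begin{equation*}
\norm{g_{\xbm}}_{H_k}^2 = k(\xbm,\xbm)-2\kbm^T M^{-1}\kbm+\kbm^T M^{-1}(M-\epsilon\Ibm)M^{-1}\kbm = \sigma_{t-1}^2(\xbm)-\epsilon\,\kbm^TM^{-2}\kbm .
\end{equation*}
The subtracted term is nonnegative, so $\norm{g_{\xbm}}_{H_k}^2\le\sigma_{t-1}^2(\xbm)$, and the claim follows.

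The only delicate point is this algebraic identity and the sign of the leftover term $-\epsilon\kbm^TM^{-2}\kbm$; the nugget in fact makes the bound slightly loose. The case $t=1$ is trivial, since then $\mu_0=0$, $\sigma_0=1$, and $|f|\le B$ by Lemma~\ref{lem:f-bound}.

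As an alternative, one could cite the noise-free specialization of \cite{chowdhury2017kernelized}, where the regularized interpolant error is bounded deterministically by $\norm{f}_{H_k}\sigma_{t-1}$. The direct computation above is self-contained, however.
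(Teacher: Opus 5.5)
Your proof is correct. With $g_{\xbm}=k(\cdot,\xbm)-\sum_i\alpha_i(\xbm)k(\cdot,\xbm_i)$, the reproducing property gives $f(\xbm)-\mu_{t-1}(\xbm)=\langle f,g_{\xbm}\rangle_{H_k}$. Using $\Kbm_{t-1}=M-\epsilon\Ibm$, you get $\norm{g_{\xbm}}_{H_k}^2=\sigma_{t-1}^2(\xbm)-\epsilon\,\norm{M^{-1}\kbm_{t-1}(\xbm)}_2^2$, and Cauchy--Schwarz finishes the argument. The case $t=1$ needs no separate treatment: it is the same computation with an empty sum.

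The paper gives no argument of its own and simply refers to Theorem 2 of \cite{chowdhury2017kernelized}. That theorem is a high-probability confidence bound for noisy observations. Applying it here requires specializing to zero noise and checking that its deterministic part remains valid for an arbitrary regularizer $\epsilon>0$; that part is the same Cauchy--Schwarz step, carried out in feature space with the regularized Gram operator.

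Your direct computation makes this explicit. It shows the bound is deterministic and holds for every nugget value, including the tiny ones used in practice. It also quantifies the slack $\epsilon\norm{M^{-1}\kbm_{t-1}(\xbm)}_2^2$ that the nugget introduces. The citation buys brevity and ties the lemma to the standard noisy-case bound; your version buys self-containedness for a few lines of algebra. The underlying mechanism is the same, so the difference is one of presentation rather than of idea.
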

The proof of Lemma~\ref{lem:fmu} can be found in Theorem 2 of~\cite{chowdhury2017kernelized}.
Next, we extend the bounds to $|I_{t-1}(\xbm)-EI_{t-1}(\xbm)|$.
An upper bound of $I_{t-1}(\xbm)-EI_{t-1}(\xbm)$ is given in the next lemma.
\begin{lemma}\label{lem:fandnu}
For any given $\xbm\in C$, $t\in\Nbb$, and $w>0$,   
\begin{equation} \label{eqn:fandmu-1}
 \centering
  \begin{aligned}
    I_{t-1}(\xbm) - EI_{t-1}(\xbm)  < \begin{cases} \sigma_t(\xbm) w, \  &f_{t-1}^+- f(\xbm)\leq 0\\
                     -f(\xbm) +\mu_{t-1}(\xbm),  \ & f^+_{t-1}-f(\xbm)>0.
         \end{cases}
  \end{aligned} 
\end{equation}
\end{lemma}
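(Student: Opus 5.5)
The plan is a direct case analysis on the sign of $f_{t-1}^+-f(\xbm)$. Both cases reduce to comparing $I_{t-1}(\xbm)$ with the elementary lower bounds on $EI_{t-1}(\xbm)$ from Lemma~\ref{lem:EI} and Lemma~\ref{lem:tau}. Throughout, $\sigma_{t-1}(\xbm)>0$ holds because of the nugget (Lemma~\ref{lemma:gp-sigma-bound}), so $z_{t-1}(\xbm)$ is well defined and $EI_{t-1}(\xbm)=\sigma_{t-1}(\xbm)\tau(z_{t-1}(\xbm))$.

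\emph{Case $f_{t-1}^+-f(\xbm)\le 0$.} By \eqref{eqn:improvement}, $I_{t-1}(\xbm)=0$. By Lemma~\ref{lem:tau}, $\tau(z)>0$ for every $z$, so $EI_{t-1}(\xbm)=\sigma_{t-1}(\xbm)\tau(z_{t-1}(\xbm))>0$. Hence $I_{t-1}(\xbm)-EI_{t-1}(\xbm)<0$. The right-hand side $\sigma_t(\xbm)w$ is strictly positive for any $w>0$, again by Lemma~\ref{lemma:gp-sigma-bound}, which gives the first line of \eqref{eqn:fandmu-1}. In this case the bound is loose; the conclusion is simply that the difference is negative.

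\emph{Case $f_{t-1}^+-f(\xbm)>0$.} Here $I_{t-1}(\xbm)=f_{t-1}^+-f(\xbm)$. I would sharpen the inequality $EI_{t-1}(\xbm)\ge f_{t-1}^+-\mu_{t-1}(\xbm)$ of Lemma~\ref{lem:EI} to a strict one. Write $a=f_{t-1}^+-\mu_{t-1}(\xbm)$ and $z=z_{t-1}(\xbm)=a/\sigma_{t-1}(\xbm)$. Using $1-\Phi(z)=\Phi(-z)$ and $\phi(z)=\phi(-z)$, one gets
\[
EI_{t-1}(\xbm)-a=\sigma_{t-1}(\xbm)\bigl(\tau(z)-z\bigr)=\sigma_{t-1}(\xbm)\,\tau(-z)>0,
\]
where the last step uses Lemma~\ref{lem:tau} again. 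Therefore
\[
I_{t-1}(\xbm)-EI_{t-1}(\xbm)<\bigl(f_{t-1}^+-f(\xbm)\bigr)-\bigl(f_{t-1}^+-\mu_{t-1}(\xbm)\bigr)=-f(\xbm)+\mu_{t-1}(\xbm),
\]
which is the second line of \eqref{eqn:fandmu-1}.

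I do not expect a real obstacle. The only care points are obtaining strictness in the second case, which is handled by the identity $\tau(z)-z=\tau(-z)$, and confirming $\sigma>0$ so that $z$ and the $\tau$ form are valid, which the nugget guarantees.
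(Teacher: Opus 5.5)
Your proof is correct and uses the same two-case argument as the paper. In the first case, $I_{t-1}(\xbm)=0$ and $EI_{t-1}(\xbm)>0$. In the second case, the comparison $EI_{t-1}(\xbm)\ge f^+_{t-1}-\mu_{t-1}(\xbm)$ from Lemma~\ref{lem:EI} does the work. The identity $\tau(z)-z=\tau(-z)$ together with $\tau>0$ and $\sigma>0$ supplies the strict inequalities, which the paper only asserts. In fact, the paper's final ``$<$'' in the second case is an equality as written, and its first case only cites $EI\ge 0$.
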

\begin{proof}
      If $f^+_{t-1}-f(\xbm)\leq 0$, we have by definition~\eqref{eqn:improvement} and Lemma~\ref{lem:EI}, 
\begin{equation} \label{eqn:fandmu-pf-1}
 \centering
  \begin{aligned}
     I_{t-1}(\xbm)-EI_{t-1}(\xbm) =  - EI_{t-1}(\xbm)  < 0< \sigma_{t-1}(\xbm) w. 
  \end{aligned}
\end{equation}
For $f^+_{t-1}- f(\xbm)>0$, we can write via Lemma~\ref{lem:EI}, 
\begin{equation} \label{eqn:fandmu-pf-2}
 \centering
  \begin{aligned}
   I_{t-1}(\xbm)-EI_{t-1}(\xbm) =& f^+_{t-1}- f(\xbm) - EI_{t-1}(\xbm)\\
   \leq&  f^+_{t-1}-f(\xbm)-f^+_{t-1}+\mu_{t-1}(\xbm)  < -f(\xbm)+\mu_{t-1}(\xbm). 
  \end{aligned}
\end{equation}
\end{proof}

An upper bound on $I_{t-1}(\xbm)$ and $EI_{t-1}(\xbm)$ is given in the next lemma.
\begin{lemma}\label{lem:IEI-bound-ratio}
 The improvement function and $EI$ function satisfy  
 \begin{equation} \label{eqn:IEI-bound-ratio-1}
  \centering
  \begin{aligned}
     I_{t-1}(\xbm) \leq \frac{\tau(B)}{\tau(-B)} EI_{t-1}(\xbm), \forall \xbm\in C, \forall t\in\Nbb.
  \end{aligned}
\end{equation}
\end{lemma}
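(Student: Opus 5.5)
We need to prove $I_{t-1}(\xbm)\le \frac{\tau(B)}{\tau(-B)}EI_{t-1}(\xbm)$ for every $\xbm\in C$ and $t\in\Nbb$. The plan is to pass to the $\tau$ form $EI_{t-1}(\xbm)=\sigma_{t-1}(\xbm)\tau(z_{t-1}(\xbm))$ and to use the confidence bound of Lemma~\ref{lem:fmu} to locate $f(\xbm)$ within $B\sigma_{t-1}(\xbm)$ of $\mu_{t-1}(\xbm)$. Throughout we write $\sigma=\sigma_{t-1}(\xbm)$, which is positive by Lemma~\ref{lemma:gp-sigma-bound}, and $z=z_{t-1}(\xbm)$.

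If $I_{t-1}(\xbm)=0$, the claim is immediate because $EI_{t-1}(\xbm)\ge 0$ by Lemma~\ref{lem:EI}. So assume $I_{t-1}(\xbm)=f^+_{t-1}-f(\xbm)>0$. Lemma~\ref{lem:fmu} gives $f(\xbm)\ge \mu_{t-1}(\xbm)-B\sigma$, and therefore
\[
I_{t-1}(\xbm)\le f^+_{t-1}-\mu_{t-1}(\xbm)+B\sigma=\sigma(z+B).
\]
Since $I_{t-1}(\xbm)>0$, we have $z+B>0$, that is, $z>-B$. It therefore suffices to show
\[
\frac{z+B}{\tau(z)}\le\frac{\tau(B)}{\tau(-B)}\qquad\text{for all } z>-B.
\]

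We split into two cases.

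\emph{Case $z\le B$.} Lemma~\ref{lem:tau} says $\tau$ is increasing, so $\tau(z)\ge\tau(-B)$. We also have $z+B\le 2B$. It would then be enough to have $2B\le\tau(B)$. This fails for large $B$, so a direct bound on the numerator is too crude here.

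\emph{Refined approach.} Consider instead $g(z)=(z+B)/\tau(z)$ on $(-B,\infty)$. Using $\tau'=\Phi$ from Lemma~\ref{lem:tau},
\[
g'(z)=\frac{\tau(z)-(z+B)\Phi(z)}{\tau(z)^2}=\frac{\phi(z)-B\Phi(z)}{\tau(z)^2}.
\]
The ratio $\phi/\Phi$ is strictly decreasing, so $g$ has a unique maximizer $z^\ast$ satisfying $\phi(z^\ast)=B\Phi(z^\ast)$. At that point $\tau(z^\ast)=z^\ast\Phi(z^\ast)+B\Phi(z^\ast)=(z^\ast+B)\Phi(z^\ast)$, which gives
\[
\max_{z>-B} g(z)=g(z^\ast)=\frac{1}{\Phi(z^\ast)}.
\]
The task is thus reduced to showing $\tau(-B)\le\Phi(z^\ast)\,\tau(B)$.

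\emph{Main obstacle: bounding $\Phi(z^\ast)$ from below.} This is the step I expect to be hardest, and I would attack it through the sign of $g'$ at a convenient point. We need $\Phi(z^\ast)\ge\tau(-B)/\tau(B)$. One option is to check the sign of $g'$ at $z=-B$: there $\phi(-B)-B\Phi(-B)=\tau(-B)-B\Phi(-B)+B\Phi(-B)$, which needs to be worked out carefully. A cleaner alternative is to note that, for $z\ge -B$,
\[
\tau(z)\ge \tau(-B)+\Phi(-B)(z+B),
\]
because $\tau$ is convex ($\tau''=\phi>0$) and the tangent line at $-B$ has slope $\Phi(-B)$. Combining this with Lemma~\ref{lem:tauvsPhi}, which gives $\Phi(-B)>\tau(-B)$, we get $\tau(z)\ge\tau(-B)\bigl(1+(z+B)\bigr)$. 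Hence
\[
\frac{z+B}{\tau(z)}\le\frac{z+B}{\tau(-B)(1+z+B)}<\frac{1}{\tau(-B)}.
\]
This yields only the weaker constant $1/\tau(-B)$. To recover $\tau(B)/\tau(-B)$, I would try the same tangent-line bound together with $\tau(B)\ge\phi(0)+B\Phi(0)$ on the relevant range of $z$, falling back on the stationarity characterization of $z^\ast$ if needed.

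One must also handle $B<1$ versus $B\ge1$ as needed, checking in each range that the chosen constant dominates $\sup_z g(z)$. If no clean case split works, the fallback is to establish the inequality $\tau(-B)\le \Phi(z^\ast)\tau(B)$ directly from the fact that $z^\ast$ solves $\phi(z^\ast)=B\Phi(z^\ast)$.
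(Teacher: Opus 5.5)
Your reduction is sound. From Lemma~\ref{lem:fmu} you get $I_{t-1}(\xbm)\le\sigma(z+B)$ with $z>-B$, so it suffices to bound $g(z)=(z+B)/\tau(z)$ on $(-B,\infty)$. Your computation $\sup g=1/\Phi(z^\ast)$ is also correct.

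The gap is that the proposal stops exactly at the decisive step. You never prove $\tau(-B)\le\Phi(z^\ast)\tau(B)$; the remaining text only lists options (``try'', ``fallback'') without carrying any of them out. The tangent-line estimate you do complete gives $g<1/\tau(-B)$. Contrary to your remark that this is weaker, it already suffices whenever $\tau(B)\ge 1$, for example for all $B\ge 1$. For small $B$, where $\tau(B)<1$, it does not suffice, so as written the lemma is not proved. Your first case split also fails, because bounding the numerator by $2B$ discards the fact that $z$ is controlled by $\tau(z)$.

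The missing idea is the identity $\tau(B)-\tau(-B)=B$, which follows from $\Phi(-B)=1-\Phi(B)$ and $\phi(-B)=\phi(B)$. Combine it with $z\le\tau(z)$: either $\tau(z)-z=\tau(-z)>0$, or equivalently $EI_{t-1}(\xbm)\ge f^+_{t-1}-\mu_{t-1}(\xbm)$ from Lemma~\ref{lem:EI}. Then, using monotonicity of $\tau$ (Lemma~\ref{lem:tau}), for $z>-B$ you get
\[
\frac{z+B}{\tau(z)}\;\le\;1+\frac{B}{\tau(z)}\;<\;1+\frac{B}{\tau(-B)}\;=\;\frac{\tau(-B)+B}{\tau(-B)}\;=\;\frac{\tau(B)}{\tau(-B)}.
\]
This is essentially the paper's proof. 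It bounds $I_{t-1}(\xbm)-EI_{t-1}(\xbm)\le\mu_{t-1}(\xbm)-f(\xbm)\le B\sigma$ and $EI_{t-1}(\xbm)>\tau(-B)\sigma$, then eliminates $\sigma$. Neither the maximizer $z^\ast$ nor any case split on $B$ is needed.

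Your $z^\ast$ analysis would yield the sharper constant $1/\Phi(z^\ast)$ if completed, but the same one-line bound above is also what closes it, since $1/\Phi(z^\ast)=g(z^\ast)$.
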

\begin{proof}
We consider two cases. First,
   if $f^+_{t-1}-f(\xbm) \leq 0$, then $I_{t-1}(\xbm) = 0$.
   Since $EI_{t-1}(\xbm)\geq 0$,~\eqref{eqn:IEI-bound-ratio-1} stands.

   Second, if  $f^+_{t-1}-f(\xbm) > 0$, then
 \begin{equation} \label{eqn:IEI-bound-ratio-pf-1}
 \centering
  \begin{aligned}
     f^+_{t-1}-\mu_{t-1}(\xbm) &=  f^+_{t-1}-f(\xbm)+f(\xbm)-\mu_{t-1}(\xbm) >f(\xbm)-\mu_{t-1}(\xbm).\\
  \end{aligned}
 \end{equation}
   From the one-side inequality in Lemma~\ref{lem:fmu},~\eqref{eqn:IEI-bound-ratio-pf-1} implies
 \begin{equation} \label{eqn:IEI-bound-ratio-pf-1.2}
 \centering
  \begin{aligned}
    f^+_{t-1}-\mu_{t-1}(\xbm) >-B\sigma_{t-1}(\xbm).
  \end{aligned}
\end{equation}
   Then, from Lemma~\ref{lem:tau} the monotonicity of $\tau(\cdot)$, we have
 \begin{equation} \label{eqn:IEI-bound-ratio-pf-1.5}
 \centering
  \begin{aligned}
     \tau\left(z_{t-1}(\xbm)\right) >\tau(-B),
  \end{aligned}
\end{equation}
  where $z_{t-1}(\xbm)=\frac{f^+_{t-1}-\mu_{t-1}(\xbm)}{\sigma_{t-1}(\xbm)}$. Since  $EI_{t-1}(\xbm) = \sigma_{t-1}(\xbm) \tau(z_{t-1}(\xbm))$, we can write
  \begin{equation} \label{eqn:IEI-bound-ratio-pf-2}
 \centering
  \begin{aligned}
     EI_{t-1}(\xbm) = \sigma_{t-1}(\xbm)\tau\left(z_{t-1}(\xbm)\right) >\tau(-B)\sigma_{t-1}(\xbm).
  \end{aligned}
\end{equation}
  Next, we let $w=B$ in Lemma~\ref{lem:fandnu} and obtain
  \begin{equation} \label{eqn:IEI-bound-ratio-pf-4}
  \centering
  \begin{aligned}
       I_{t-1}(\xbm)- EI_{t-1}(\xbm) \leq -f(\xbm) +\mu_{t-1}(\xbm)\leq B\sigma_{t-1}(\xbm).
  \end{aligned}
\end{equation}
  Applying~\eqref{eqn:IEI-bound-ratio-pf-4} to~\eqref{eqn:IEI-bound-ratio-pf-2} by eliminating $\sigma_{t-1}(\xbm)$ and using union bound, we have
 \begin{equation} \label{eqn:IEI-bound-ratio-pf-5}
 \centering
  \begin{aligned}
     EI_{t-1}(\xbm)  > \frac{\tau(-B)}{B+\tau(-B)} I_{t-1}(\xbm) = \frac{\tau(-B)}{\tau(B)} I_{t-1}(\xbm).
  \end{aligned}
\end{equation}
\end{proof}
A similar result to Lemma~\ref{lem:IEI-bound-ratio} is previously shown in~\cite{bull2011convergence}.

\section{Instantaneous Regret Bound Proof}\label{se:inst-regret-proof}

Proof of Lemma~\ref{lem:ego-instregret-1} is given next.
\begin{proof}
  We consider two cases based on the value of $f^+_{t-1}-f(\xbm_t)$. First, $ f^+_{t-1} \leq f(\xbm_{t})$.
  From Lemma~\ref{lem:fmu} and~\ref{lem:IEI-bound-ratio},
   \begin{equation} \label{eqn:ego-instregret-pf-1}
  \centering
  \begin{aligned}
          r_t  =& f(\xbm_t) - f(\xbm^*) =f(\xbm_t) - f^+_{t-1} + f^+_{t-1}  - f(\xbm^*)\leq f(\xbm_t) - f^+_{t-1} + I_{t-1}(\xbm^*)\\ 
          \leq& f(\xbm_t) -\mu_{t-1}(\xbm_t)+\mu_{t-1}(\xbm_t)- f^+_{t-1}+\frac{\tau(B)}{\tau(-B)}EI_{t-1}(\xbm^*) \\ 
          \leq&  \mu_{t-1}(\xbm_t) - f^+_{t-1}+c_B EI_{t-1}(\xbm_{t})  + B \sigma_{t-1}(\xbm_t),\\ 
  \end{aligned}
  \end{equation}
   where $c_B=\frac{\tau(B)}{\tau(-B)}$.

  Next, we aim to provide a lower bound for $EI_{t-1}(\xbm_t)$ by comparing $EI_{t-1}(\xbm_t)$ to $EI_{t-1}(\xbm^*)$.
  For $EI_{t-1}(\xbm^*)$, we can write via Lemma~\ref{lem:fmu} that  
  \begin{equation} \label{eqn:ego-instregret-pf-2}
 \centering
 \begin{aligned}
   &EI_{t-1}(\xbm^*) = \sigma_{t-1}(\xbm^*)\tau(z_{t-1}(\xbm^*)) = \sigma_{t-1}(\xbm^*)\tau\left(\frac{f^+_{t-1}-f(\xbm^*)+f(\xbm^*)-\mu_{t-1}(\xbm^*)}{\sigma_{t-1}(\xbm^*)}\right)\\
      \geq& \sigma_{t-1}(\xbm^*)\tau\left(\frac{f^+_{t-1}-f(\xbm^*)}{\sigma_{t-1}(\xbm^*)} - B\right)
      \geq \sigma_{t-1}(\xbm^*)\tau\left( - B\right),
   \end{aligned}
  \end{equation}
where the second inequality uses $f(\xbm^*)\leq f(\xbm)$.
  By definition $EI_{t-1}(\xbm_t)\geq EI_{t-1}(\xbm^*)$. By Lemma~\ref{lemma:gp-sigma-bound},~\eqref{eqn:ego-instregret-pf-2} implies  
   \begin{equation} \label{eqn:ego-instregret-pf-3}
 \centering
 \begin{aligned}
     EI_{t-1}(\xbm_t) \geq \tau(-B) \sqrt{\frac{\epsilon}{t+\epsilon}}.
   \end{aligned}
  \end{equation}
 It is easy to see that $\tau(-B)\sqrt{\frac{\epsilon}{t+\epsilon}}<\frac{1}{\sqrt{2\pi}}$.  By Lemma~\ref{lem:mu-bounded-EI},~\eqref{eqn:ego-instregret-pf-3} implies 
\begin{equation} \label{eqn:ego-instregret-pf-4}
  \centering
  \begin{aligned}
          f^+_{t-1}-\mu_{t-1}(\xbm_t) \geq - \log^{\frac{1}{2}}\left(\frac{t+\epsilon}{2\pi \tau^2(-B)  \epsilon}  \right) \sigma_{t-1}(\xbm_t).
  \end{aligned}
  \end{equation}
  Equivalently,
\begin{equation} \label{eqn:ego-instregret-pf-5}
  \centering
  \begin{aligned}
         \mu_{t-1}(\xbm_t)- f^+_{t-1}\leq c_{B\epsilon}(\epsilon,t)  \sigma_{t-1}(\xbm_t).
  \end{aligned}
  \end{equation}
  where $c_{B\epsilon}(\epsilon,t)= \log^{\frac{1}{2}}(\frac{t+\epsilon}{2\pi \tau^2(-B) \epsilon})$.  
  Using Lemma~\ref{lem:fmu}, $\Phi(\cdot)<1$, and $f^+_{t-1}-f(\xbm_{t})\leq 0$, we have 
  \begin{equation} \label{eqn:ego-instregret-pf-6}
 \centering
 \begin{aligned}
          EI_{t-1}(\xbm_t) 
          =& (f^+_{t-1}-\mu_{t-1}(\xbm_{t}))\Phi(z_{t-1}(\xbm_{t}))+\sigma_{t-1}(\xbm_{t})\phi(z_{t-1}(\xbm_{t}))\\ 
          \leq& (f^+_{t-1}-f(\xbm_{t})+f(\xbm_{t})-\mu_{t-1}(\xbm_{t}))\Phi(z_{t-1}(\xbm_{t}))+\phi(0)\sigma_{t-1}(\xbm_{t})\\ 
          \leq& B\sigma_{t-1}(\xbm_{t})+\phi(0)\sigma_{t-1}(\xbm_{t}). 
  \end{aligned}
  \end{equation}
  Applying\eqref{eqn:ego-instregret-pf-6} and~\eqref{eqn:ego-instregret-pf-5}  to~\eqref{eqn:ego-instregret-pf-1}, we have 
   \begin{equation} \label{eqn:ego-instregret-pf-7}
  \centering
  \begin{aligned}
          r_t  
          \leq&  (c_{B\epsilon}(\epsilon,t) + B+ c_B(B+\phi(0)))\sigma_{t-1}(\xbm_t).\\ 
  \end{aligned}
  \end{equation}

   Second, if $f^+_{t-1}-f(\xbm_{t})\geq 0$, we have  
   \begin{equation} \label{eqn:ego-instregret-pf-8}
  \centering
  \begin{aligned}
        r_t =& f(\xbm_{t})-f(\xbm^*) =  f(\xbm_{t})-f^+_{t-1}+ f^+_{t-1}-f(\xbm^*)\\
              \leq& f(\xbm_{t})-f^+_{t-1}+c_B EI_{t-1}(\xbm^*) \\
              \leq& f(\xbm_{t})-f^+_{t-1}+c_BEI_{t-1}(\xbm_{t}).\\
  \end{aligned}
  \end{equation}
  The first inequality in~\eqref{eqn:ego-instregret-pf-8} is by Lemma~\ref{lem:IEI-bound-ratio}.
  Further, we can write 
  \begin{equation} \label{eqn:ego-instregret-pf-9}
 \centering
 \begin{aligned}
          EI_{t-1}(\xbm_t) 
          =& (f^+_{t-1}-\mu_{t-1}(\xbm_{t}))\Phi(z_{t-1}(\xbm_{t}))+\sigma_{t-1}(\xbm_{t})\phi(z_{t-1}(\xbm_{t}))\\ 
          \leq& (f^+_{t-1}-f(\xbm_{t})+f(\xbm_{t})-\mu_{t-1}(\xbm_{t}))\Phi(z_{t-1}(\xbm_{t}))+\phi(0)\sigma_{t-1}(\xbm_{t})\\ 
          \leq& f^+_{t-1}-f(\xbm_{t})+ B\sigma_{t-1}(\xbm_{t})+\phi(0)\sigma_{t-1}(\xbm_{t}). 
  \end{aligned}
  \end{equation}
   Applying~\eqref{eqn:ego-instregret-pf-9} to~\eqref{eqn:ego-instregret-pf-8}, we have  
   \begin{equation} \label{eqn:ego-instregret-pf-10}
  \centering
  \begin{aligned}
        r_t  \leq& (c_B-1) (f_{t-1}^+-f(\xbm_t))+c_B (B+\phi(0))\sigma_{t-1}(\xbm_{t}).
  \end{aligned}
  \end{equation}
   Combine~\eqref{eqn:ego-instregret-pf-7} and~\eqref{eqn:ego-instregret-pf-10} and we have 
\begin{equation} \label{eqn:ego-instregret-pf-11}
  \centering
  \begin{aligned}
        r_t  \leq& \max\{c_B-1,0\} \max\{f_{t-1}^+-f(\xbm_t),0\}+(c_{B\epsilon}(\epsilon,t)+B+c_B(B+\phi(0)))\sigma_{t-1}(\xbm_{t}).\\
  \end{aligned}
  \end{equation}
 \end{proof}

\section{Cumulative Regret Bound Proof}\label{se:cumu-regret-proof}

The proof of Lemma~\ref{lemma:boi-inst-regret} is given next.
\begin{proof}
  From Lemma~\ref{lem:ego-instregret-1}, we consider the term $\sum_{t=1}^T  \max\{f_{t-1}^+-f(\xbm_{t}),0\}$. 
   Let $P_T\subseteq \{1,\dots,T\}$ be the ordered index set such that $f_{t-1}^+-f(\xbm_{t})>0$. 
   Then, using $t_i-1\geq t_{i-1}$, we have 
     \begin{equation} \label{eqn:boi-inst-regret-pf-1}
  \centering
  \begin{aligned}
     &  \sum_{t=1}^T  \max\{f_{t-1}^+-f(\xbm_t),0\}  =  \sum_{i=1}^{|P_T|} f^+_{t_i-1}-f(\xbm_{t_i})\leq  \sum_{i=1}^{|P_T|} f(\xbm_{t_{i-1}}) - f(\xbm_{t_i}),
    \end{aligned}
  \end{equation}
  where $t_i\in P_T$ and $t_i<t_{i+1}$.
Since $f_t^+\leq f(\xbm_t)$ for $\forall t\in\Nbb$,~\eqref{eqn:boi-inst-regret-pf-1} leads to 
\begin{equation} \label{eqn:boi-inst-regret-pf-2}
  \centering
  \begin{aligned}
       \sum_{t=1}^T  \max\{f_{t-1}^+-f(\xbm_t),0\}  \leq&  \sum_{i=1}^{|P_T|} f(\xbm_{t_{i-1}}) - f(\xbm_{t_i})\\
       &\leq  (f_{t_0}-f_{t_{|P_T|}}) 
       \leq  2B,
    \end{aligned}
  \end{equation}
  where we used the boundedness of $f$ in Lemma~\ref{lem:f-bound}.
  Using~\eqref{eqn:boi-inst-regret-pf-2} in~\eqref{eqn:ego-instregret-1}, we have 
     \begin{equation} \label{eqn:boi-inst-regret-pf-9}
  \centering
  \begin{aligned}
    R_T= \sum_{t=1}^T r_t \leq& \sum_{t=1}^T c_{B1} \max\{f_{t-1}^+-f(\xbm_{t}),0\}+\sum_{t=1}^T (c_{B\epsilon}(T)+B+c_B (B+\phi(0))) \sigma_{t-1}(\xbm_t)   \\
     \leq& 2c_{B1} B +  (c_{B\epsilon}(\epsilon,T) +B+c_B(B+\phi(0)))\sum_{t=1}^T  \sigma_{t-1}(\xbm_t).   \\
    \end{aligned}
  \end{equation}
\end{proof}

Proof of Theorem~\ref{thm:boi-cumulative-regret} is given next.
\begin{proof}
    From Lemma~\ref{lem:variancebound}, we know $\sum_{t=1}^T\sigma_{t-1}(\xbm_t) = \mathcal{O}(\sqrt{T\gamma_T})$. 
     From Lemma~\ref{lemma:boi-inst-regret}, the cumulative regret bound is 
    \begin{equation} \label{eqn:ei-boi-cumu-pf-2}
  \centering
  \begin{aligned}
         \mathcal{O}(R_T)=&\mathcal{O} ( \log^{1/2}(T) \sqrt{T\gamma_T}).
    \end{aligned}
  \end{equation} 
  Using $\gamma_T=\mathcal{O}(\log^{d+1}(T))$ for SE kernel, we have $R_T=\mathcal{O}(T^{\frac{1}{2}} \log^{\frac{d+2}{2}}(T))$.
  Using $\gamma_T=\mathcal{O}(T^{\frac{d}{2\nu+d}}\log^{\frac{2\nu}{2\nu+d}}(T))$ for Matérn kernel from~\cite{iwazaki2025improved,vakili2021information}, we have $R_T=\mathcal{O}(T^{\frac{\nu+d}{2\nu+d}}\log^{\frac{2\nu+0.5d}{2\nu+d}}(T))$.

\end{proof}

\section{Proof of Nugget Effect}\label{se:proofnugget}
The maximum information gain bounds are given below from~\cite{iwazaki2025improved}.
We use the following lemma on the upper bound of $\gamma_T$. 
\begin{lemma}[Maximum information gain upper bound, Theorem 7 in~\cite{iwazaki2025improved}]\label{thm:MIG} 
Assume the domain satisfies 
$C = \{\xbm\in\mathbb{R}^{d}, \|\xbm\|_{2}\le 1\}$. At given $d$ and $T$, for SE kernel, if $\theta\le e^{2}c_{d}\text{ and }T \geq (e-1)\epsilon $, 
 \begin{equation} \label{eqn:shogo-1}
  \centering
  \begin{aligned}
     \gamma_T(\epsilon) \leq \frac{C^{1}_{d}}{\theta^{d}} 
          \log^{d+1}\left(1+T/\epsilon\right)
          +\log \left(1+T/\epsilon\right)
          +C^{2}_{d} \exp\left(-\frac{2}{\theta}+\frac{1}{\theta^{2}}\right), \\
    \end{aligned}
  \end{equation}
 Further, for $\theta > e^{2}c_{d}$, 
  \begin{equation} \label{eqn:shogo-2}
  \centering
  \begin{aligned}
        \gamma_{T}(\epsilon) \leq 
          \frac{C^{3}_{d}}{\log^{d}\bigl(\frac{\theta}{e\,c_{d}}\bigr)}
          \log^{d+1}\left(1+\frac{T}{\epsilon}\right)
          +
          C^{4}_{d}\log \left(1+\frac{T}{\epsilon}\right)
          +C^{5}_{d},
     \end{aligned}
  \end{equation}
  where $\theta=2l^{2}$ and $c_{d}=\max\left\{1,
      \exp \left( \frac{1}{e}\!(\frac{d}{2}-1)\right)\right\}$.
      The constants $C_d^i,i=1,\dots,5$ only depend on $d$.
      
For Matérn kernels with smoothness $\nu>\frac12$, we have   
\begin{equation} \label{eqn:shogo-matern}
  \centering
  \begin{aligned}
     \gamma_{T}(\epsilon) \leq 
      C(T,\nu,\epsilon) \bar{\gamma}_{T} + C,
    \end{aligned}
  \end{equation}
where $C_\nu>0$ only depends on $\nu$, $C$ is a constant, and 
\begin{equation} \label{eqn:shogo-matern-2}
  \centering
  \begin{aligned}
     C(T,\nu,\lambda) =  \max \left\{
          1, {\log_{2}}\!\left(
            1+\frac{\Gamma(\nu)}{C_\nu}\,
            \log\left(\frac{T^2}{\epsilon}\right)
          \right)
          +\frac{1}{\nu}
           \log_{2} \left(
            \frac{T^2}{\nu\Gamma(\nu)\epsilon}
          \right)+1
        \right\},
    \end{aligned}
  \end{equation}
     Furthermore, 
     \begin{equation} \label{eqn:shogo-matern-3}
  \centering
  \begin{aligned}
        \bar{\gamma}_{T}(\epsilon) =  
        C^{1}_{d,\nu}
        \log\left(1+\frac{2T}{\epsilon}\right)
          +C^{2}_{d,\nu}
        \left(\frac{T}{\epsilon l^{2\nu}}\right)^{\frac{d}{2\nu+d}}
        \log^{\frac{2\nu}{2\nu+d}}\left(1+\frac{2T}{\epsilon}\right).
      \end{aligned}
  \end{equation}
  The constants $C^{i}_{d,\nu}$ depend only on $d$ and $\nu$. 
\end{lemma}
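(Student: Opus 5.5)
Since this lemma is Theorem~7 of~\cite{iwazaki2025improved} written in our notation, the plan is to import that result rather than re-derive it. The work lies in checking that our quantity $\gamma_T(\epsilon)$ is exactly the object bounded there.

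The first step is the identification. By Definition~\ref{def:infogain}, $\gamma_T(\epsilon)$ is the maximum information gain of a GP observed under i.i.d.\ Gaussian noise of variance $\epsilon$. As noted after Definition~\ref{def:infogain}, the posterior variance of practical EGO in~\eqref{eqn:GP-post} coincides with that of this noisy GP. We then use the standard identity
\begin{equation*}
\gamma_T(\epsilon)=\max_{A\subset C,|A|=T}\tfrac12\log\det\bigl(\Ibm+\epsilon^{-1}\Kbm_A\bigr).
\end{equation*}
This is precisely the quantity bounded in~\cite{iwazaki2025improved}, with their regularization or noise parameter $\lambda$ replaced by $\epsilon$; this is why the Mat\'ern factor $C(T,\nu,\lambda)$ is evaluated at $\lambda=\epsilon$. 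We also check that the other hypotheses are met: the domain is the unit ball; the kernels are normalized, $k(\xbm,\xbm)=1$, as in Assumption~\ref{assp:rkhs}; the SE length scale enters through $\theta=2l^2$; and $\nu>\tfrac12$ for Mat\'ern.

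To make the proposal self-contained, I would also sketch why the bounds hold, following the usual eigen-truncation argument. Take a Mercer-type decomposition of $k$ on $C$, $k=\sum_i\lambda_i\psi_i\psi_i$. For a truncation level $D$, split $\Kbm_A=\Kbm_A^{\le D}+\Kbm_A^{>D}$. Concavity of $\log\det$ then gives
\begin{equation*}
\log\det(\Ibm+\epsilon^{-1}\Kbm_A)\le D\log\bigl(1+T/(\epsilon D)\bigr)+T\delta_D/\epsilon ,
\end{equation*}
where $\delta_D$ controls the tail eigenvalue mass. For the SE kernel the eigenvalues decay like $\exp(-c\,i^{1/d})$ with constants depending on $\theta$ and $d$. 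Choosing $D\asymp\log^{d}(1+T/\epsilon)$ makes the tail term $O(1)$ and yields the $\log^{d+1}(1+T/\epsilon)$ leading term. The two regimes $\theta\le e^2c_d$ and $\theta>e^2c_d$ reflect how the decay constant depends on the length scale, and they produce respectively the $\theta^{-d}$ prefactor and the $\log^{-d}(\theta/(ec_d))$ prefactor.

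For Mat\'ern kernels the eigenvalues decay polynomially, like $i^{-(2\nu+d)/d}$. Balancing the two terms gives $D\asymp (T/(\epsilon l^{2\nu}))^{d/(2\nu+d)}$ up to logarithms, which is $\bar\gamma_T$. The multiplicative factor $C(T,\nu,\epsilon)$ comes from the refined dyadic peeling of~\cite{iwazaki2025improved} over eigenvalue scales, and its $\log(T^2/\epsilon)$ terms arise from that peeling. I expect this Mat\'ern constant tracking, and making the SE constants' dependence on $\theta$ and $d$ explicit, to be the main obstacle if one re-derives rather than cites. I would therefore rely on the cited theorem for the constants and verify only the substitution $\lambda=\epsilon$ and the condition $T\ge(e-1)\epsilon$.
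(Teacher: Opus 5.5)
Your proposal is correct and matches the paper's approach: the paper gives no independent proof and simply imports Theorem~7 of \cite{iwazaki2025improved}, after identifying $\gamma_T(\epsilon)$ with the information gain of a GP under $\mathcal{N}(0,\epsilon)$ noise, i.e.\ $\max_{A}\tfrac12\log\det(\Ibm+\epsilon^{-1}\Kbm_A)$. Your eigen-truncation sketch is only a heuristic aside: it recovers the rates but not the stated constants, the two $\theta$-regimes, or the factor $C(T,\nu,\epsilon)$, so, as in the paper, the load-bearing step is the citation together with your check of its hypotheses.
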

 Given this premise, we rewrite Lemma~\ref{thm:MIG}  into the following two lemmas on the bounds of $\gamma_T$. 
\begin{lemma}\label{lem:mig-se}
    Assume the domain satisfies 
$C = \{\xbm\in\mathbb{R}^{d}, \|\xbm\|_{2}\le 1\}$. For a SE kernel with fixed $l$,  there exist constants $C_{dl}^1$, $C_{dl}^2$, and $C_{dl}^3$ such that 
 \begin{equation} \label{eqn:shogo-se}
  \centering
  \begin{aligned}
     \gamma_T(\epsilon) \leq s_T(\epsilon)= C^{1}_{dl} 
         \left[ \log^{d+1}(1+T/\epsilon)
           +C^{2}_{dl} \log(1+T/\epsilon)+C^{3}_{dl}\right].  \\
    \end{aligned}
  \end{equation}
where the constants depend only on given $d$ and $l$.
\end{lemma}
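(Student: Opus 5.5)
The plan is to read Lemma~\ref{lem:mig-se} off the SE part of Lemma~\ref{thm:MIG} by absorbing every $d$- and $l$-dependent factor into new constants. Since $l$ is fixed, $\theta=2l^2$ is fixed, and $c_d$ depends only on $d$. So exactly one of the two regimes $\theta\le e^2c_d$ or $\theta>e^2c_d$ applies, and that choice is determined by $(d,l)$ alone. I will treat the two regimes separately and, in each, match the right-hand side to the form $C^1_{dl}[\log^{d+1}(1+T/\epsilon)+C^2_{dl}\log(1+T/\epsilon)+C^3_{dl}]$.

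\emph{Case $\theta\le e^2c_d$.} Here \eqref{eqn:shogo-1} gives
\[
\gamma_T(\epsilon)\le \frac{C_d^1}{\theta^d}\log^{d+1}(1+T/\epsilon)+\log(1+T/\epsilon)+C_d^2\exp\left(-\frac{2}{\theta}+\frac{1}{\theta^2}\right).
\]
I will set $C^1_{dl}=C^1_d/\theta^d$, $C^2_{dl}=\theta^d/C^1_d$, and $C^3_{dl}=C^2_d\theta^d\exp(-2/\theta+1/\theta^2)/C^1_d$. Factoring out $C^1_{dl}$ then reproduces \eqref{eqn:shogo-se} with equality of the bounds.

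\emph{Case $\theta>e^2c_d$.} Here $\log(\theta/(ec_d))>1>0$, so $C^1_{dl}=C^3_d/\log^d(\theta/(ec_d))$ is a well-defined positive number. I will then take $C^2_{dl}=C^4_d/C^1_{dl}$ and $C^3_{dl}=C^5_d/C^1_{dl}$. These depend only on $d$ and $l$, and \eqref{eqn:shogo-2} becomes exactly \eqref{eqn:shogo-se}.

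The only point needing care is the side condition $T\ge(e-1)\epsilon$ required in Lemma~\ref{thm:MIG}. I will note that this holds under the standing regime $T/\epsilon\gg1$ used throughout this section; alternatively, the small-$T/\epsilon$ range can be absorbed by enlarging $C^3_{dl}$, using the trivial bound $\gamma_T\le \tfrac{T}{2}\log(1+1/\epsilon)$, which is bounded there. I will also check positivity of the constants, which is needed so that $s_T(\epsilon)$ is increasing in $T/\epsilon$ for the later monotonicity arguments. Beyond this the proof is a bookkeeping rewrite with no real obstacle.
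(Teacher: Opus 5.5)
Your proposal is correct and takes essentially the same approach as the paper, which obtains this lemma by rewriting the SE case of Lemma~\ref{thm:MIG} and absorbing the $d$- and $l$-dependent factors into $C^1_{dl}$, $C^2_{dl}$, $C^3_{dl}$, separately for each of the two $\theta$ regimes. Your treatment of the side condition $T\ge(e-1)\epsilon$ is more careful than the paper's, which relies only on its standing assumption $T/\epsilon\gg 1$. The fallback is also sound: $\gamma_T\le \tfrac{T}{2}\log(1+1/\epsilon)\le \tfrac{T}{2\epsilon}<\tfrac{e-1}{2}$ in the excluded range, so this case is covered by enlarging $C^3_{dl}$.
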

\begin{lemma} \label{lem:mig-matern-shogo}
Assume the domain satisfies the condition in Theorem 7 in [1]. For a Matérn kernel with given $l$, $\nu>1/2$ and $d$, suppose $T/\epsilon$ is large enough such that  
\begin{equation} \label{eqn:shogo-matern-1}
  \centering
  \begin{aligned}
     c_T^0(\epsilon) :=  C_{\nu}^1[ \log(
            1+C_{\nu}^2
            \log\left(T^2 /\epsilon\right)
          )
          +C_{\nu}^3
           \log (T^2/\epsilon
          )+C_{\nu}^4]\geq 1,
    \end{aligned}
  \end{equation}
  where $C_{\nu}^1=\frac{1}{\log(2)}$, $C_{\nu}^2=\frac{\Gamma(\nu)}{C_{\nu}}$, $C_{\nu}^3=\frac{1}{\nu}$,$C_{\nu}^4=\frac{1}{\nu}\log(\frac{1}{\nu\Gamma(\nu)})+\log(2)$,  $C_{\nu}>0$ only depend on $\nu$ and $\Gamma$ is the Gamma function. Notice that $C_{\nu}^i>0,i=1,\dots,3$.
  Further, there exist $C^{1}_{d\nu l}$ and $C^{2}_{d\nu l}$ dependent only on $d$, $\nu$ and $l$ so that 
     $$
        \bar{\gamma}_{T}(\epsilon) =  
        C^{1}_{d\nu l}[
        \log(1+2T/\epsilon)+
        C^{2}_{d\nu l}\left(T\eta\right)^{\frac{d}{2\nu+d}} \log^{\frac{2\nu}{2\nu+d}}(1+2T/\epsilon)],
     $$
     where $C_{d\nu l}^1 = C_{d,\nu}^{(1)}$ and $C_{d\nu l}^2 = ( C_{d,\nu}^{(2)}(l)^{-2\nu d/(2\nu+d)})/(C_{d,\nu}^{(1)})$ (see [1] for constants $C_{d,\nu}^{(i)}$, $i=1,2$).
Then, there exists constants $C>0$ such that its $\gamma_T$ satisfies 
$$ \gamma_T(\epsilon) \leq s_T(\epsilon) =c_T^0(\epsilon)\bar{\gamma}_T(\epsilon) + C.$$
\end{lemma}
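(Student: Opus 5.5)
The statement to prove is Lemma~\ref{lem:mig-matern-shogo}, which repackages the Matérn part of Lemma~\ref{thm:MIG}. The plan is to show two things: the prefactor $C(T,\nu,\epsilon)$ in~\eqref{eqn:shogo-matern-2} equals $c_T^0(\epsilon)$ once $c_T^0(\epsilon)\geq 1$, and $\bar{\gamma}_T(\epsilon)$ in~\eqref{eqn:shogo-matern-3} can be rewritten with the constants $C^1_{d\nu l},C^2_{d\nu l}$. Then~\eqref{eqn:shogo-matern} gives $\gamma_T(\epsilon)\leq c_T^0(\epsilon)\bar{\gamma}_T(\epsilon)+C$ with the same constant $C$.

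\textbf{Step 1 (the prefactor).} Convert base-2 logarithms to natural ones using $\log_2 x=\log(x)/\log 2$:
\begin{equation*}
\log_2\Bigl(1+\tfrac{\Gamma(\nu)}{C_\nu}\log(T^2/\epsilon)\Bigr)=C_\nu^1\log\bigl(1+C_\nu^2\log(T^2/\epsilon)\bigr).
\end{equation*}
Next expand $\log\bigl(T^2/(\nu\Gamma(\nu)\epsilon)\bigr)=\log(T^2/\epsilon)+\log\bigl(1/(\nu\Gamma(\nu))\bigr)$, which gives
\begin{equation*}
\frac{1}{\nu}\log_2\Bigl(\frac{T^2}{\nu\Gamma(\nu)\epsilon}\Bigr)+1=C_\nu^1\Bigl[C_\nu^3\log(T^2/\epsilon)+\tfrac{1}{\nu}\log\bigl(\tfrac{1}{\nu\Gamma(\nu)}\bigr)+\log 2\Bigr].
\end{equation*}
The last two terms in the bracket are exactly $C_\nu^4$, since the $+1$ equals $C_\nu^1\log 2$. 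Adding the two pieces reproduces $c_T^0(\epsilon)$. Under the hypothesis $c_T^0(\epsilon)\geq 1$, the maximum $\max\{1,c_T^0(\epsilon)\}$ in~\eqref{eqn:shogo-matern-2} therefore equals $c_T^0(\epsilon)$. Finally, the positivity claims $C_\nu^1,C_\nu^2,C_\nu^3>0$ follow directly: $\log 2>0$, $\Gamma(\nu)>0$, $C_\nu>0$ and $\nu>0$.

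\textbf{Step 2 ($\bar\gamma_T$).} Factor $C^{(1)}_{d,\nu}$ out of~\eqref{eqn:shogo-matern-3}. The factor $(T/(\epsilon l^{2\nu}))^{d/(2\nu+d)}$ splits as $(T/\epsilon)^{d/(2\nu+d)}\,l^{-2\nu d/(2\nu+d)}$. The $l$-dependence then goes into $C^2_{d\nu l}=C^{(2)}_{d,\nu}l^{-2\nu d/(2\nu+d)}/C^{(1)}_{d,\nu}$, and the remaining $T/\epsilon$ factor is what the statement denotes $T\eta$, with $\eta=1/\epsilon$. This gives the displayed form of $\bar{\gamma}_T(\epsilon)$ with constants depending only on $d,\nu,l$.

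\textbf{Step 3.} Substitute Steps 1 and 2 into~\eqref{eqn:shogo-matern}, which holds under the domain condition of Theorem~7 of~\cite{iwazaki2025improved}, to conclude $\gamma_T(\epsilon)\leq s_T(\epsilon)=c_T^0(\epsilon)\bar{\gamma}_T(\epsilon)+C$.

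The only delicate point is the bookkeeping in Step 1: checking that the additive constant $+1$ and the $\nu\Gamma(\nu)$ term combine exactly into $C_\nu^4$ after the change of logarithm base. Everything else is direct substitution.
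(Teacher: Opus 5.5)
Your proposal is correct and takes the same route as the paper, which gives no separate proof and obtains this lemma by rewriting the Matérn part of Lemma~\ref{thm:MIG}: converting $\log_2$ to natural logarithms so that $C(T,\nu,\epsilon)=\max\{1,c_T^0(\epsilon)\}=c_T^0(\epsilon)$ under the hypothesis, and moving the factor $l^{-2\nu d/(2\nu+d)}$ from $\bar\gamma_T$ into $C^2_{d\nu l}$. Your explicit check that the $+1$ and the $\nu\Gamma(\nu)$ term combine into $C_\nu^4$ is the bookkeeping the paper leaves implicit.
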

Note  that $C_{\nu}^4$ is not necessarily positive. 
The assumption that $c_T^0(\epsilon)\geq 1$ can be achieved when $T/\epsilon\gg 1$.We shall further assume that $T/\epsilon$ is sufficiently large such that $c_T^0(\epsilon)\bar{\gamma}_T(\epsilon)\geq C$.

Next, we simplify the upper bound of $R_T(\epsilon)$.
\begin{lemma}\label{lem:cumu-regret-bound-simple}
 Let  $C_R^1=2c_{B1} B$, $C_{R}^2 = \log\left(\frac{1}{2\pi\tau^2(-B)}\right)$, $C_R^3= B+c_B(B+\phi(0))$, and $C_R^4=C_R^2+(C_R^3)^2$ (see Lemma~\ref{lemma:boi-inst-regret} for constants). Suppose the upper bound on $\gamma_T(\epsilon)$ is $\gamma_T(\epsilon)\leq s_T(\epsilon)$. Define 
 \begin{equation*} \label{def:regret-bound-se}
  \centering
  \begin{aligned}
  c_T(\epsilon) =   \left(\log(1+T/\epsilon)+2C_R^3 (\log(1+T/\epsilon)+C_R^2)^{1/2}+C_R^4\right) \frac{1}{\log(1+1/\epsilon)}  s_T(\epsilon) ,\\ 
  \end{aligned}
  \end{equation*}
    Then, the cumulative regret upper bound from  Lemma~\ref{lemma:boi-inst-regret} is 
\begin{equation} \label{eqn:cumu-regret-bound-simple-1}
  \centering
  \begin{aligned}
     R_T\leq C_R^1+u_T(\epsilon), \ \text{where} \ u_T(\epsilon) =\sqrt{2c_T(\epsilon) T}
    \end{aligned}
  \end{equation}
\end{lemma}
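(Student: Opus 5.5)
Starting from the bound of Lemma~\ref{lemma:boi-inst-regret}, I will rewrite $R_T \le C_R^1 + A\,\sqrt{C_\gamma(\epsilon) T \gamma_T(\epsilon)}$ with $A = c_{B\epsilon}(\epsilon,T) + C_R^3$. The term $2c_{B1}B$ is exactly $C_R^1$. What remains is to show that $A\sqrt{C_\gamma(\epsilon)T\gamma_T(\epsilon)} \le \sqrt{2c_T(\epsilon)T}$, which I will do by squaring both sides.

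First I expand $c_{B\epsilon}$. By definition,
\[
c_{B\epsilon}^2(\epsilon,T)=\log\left(\frac{T+\epsilon}{\epsilon}\right)+\log\left(\frac{1}{2\pi\tau^2(-B)}\right)=\log(1+T/\epsilon)+C_R^2 .
\]
Hence
\[
A^2 = c_{B\epsilon}^2 + 2C_R^3 c_{B\epsilon} + (C_R^3)^2 = \log(1+T/\epsilon) + 2C_R^3(\log(1+T/\epsilon)+C_R^2)^{1/2} + C_R^2 + (C_R^3)^2 ,
\]
and the last two terms sum to $C_R^4$. This is exactly the first factor in $c_T(\epsilon)$. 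A side check is needed: $c_{B\epsilon}$ must be real, i.e.\ $\log(1+T/\epsilon)+C_R^2\ge 0$. This follows from $\tau(-B)\sqrt{\epsilon/(T+\epsilon)}<1/\sqrt{2\pi}$, which is already noted in the proof of Lemma~\ref{lem:ego-instregret-1}.

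Next, $C_\gamma(\epsilon)=2/\log(1+1/\epsilon)$, and I will use the hypothesis $\gamma_T(\epsilon)\le s_T(\epsilon)$. Together with $A^2$ this gives
\[
A^2 C_\gamma(\epsilon)T\gamma_T(\epsilon) \le 2T\,A^2\,\frac{s_T(\epsilon)}{\log(1+1/\epsilon)} = 2T c_T(\epsilon).
\]
Taking square roots, which is legitimate because every factor is nonnegative, yields $u_T(\epsilon)$.

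The only delicate point is the sign and reality of $c_{B\epsilon}$ in the expansion. Everything else is algebraic substitution.
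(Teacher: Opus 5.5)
Your proposal is correct and follows the paper's proof essentially verbatim. The paper likewise starts from Lemma~\ref{lemma:boi-inst-regret}, writes $c_{B\epsilon}(\epsilon,T)+C_R^3$ as $\sqrt{C_R^2+\log(1+T/\epsilon)}+C_R^3$, substitutes $\gamma_T(\epsilon)\le s_T(\epsilon)$ and $C_\gamma(\epsilon)=2/\log(1+1/\epsilon)$, and collects the product into $\sqrt{c_T(\epsilon)}\sqrt{2T}$. Your side check that $\log(1+T/\epsilon)+C_R^2\ge 0$ is a harmless addition the paper leaves implicit; in fact $C_R^2>0$ already, since $\tau(-B)<\tau(0)=\phi(0)$.
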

\begin{proof}
By Lemma~\ref{lemma:boi-inst-regret},
\begin{equation} \label{eqn:cumu-regret-bound-simple-pf-1}
  \centering
  \begin{aligned}
 R_T\leq& 2 c_{B1} B + (c_{B\epsilon}(\epsilon,T)+B+c_B (B+\phi(0))) \sqrt{C_{\gamma}(\epsilon) T\gamma_T(\epsilon)}\\
 \leq& C_R^1 + (\sqrt{C_R^2+\log(1+T/\epsilon)}+C_R^3)\sqrt{C_{\gamma}(\epsilon)Ts_T(\epsilon)} \\
 =&C_R^1 + \sqrt{c_T(\epsilon)} \sqrt{2T}.
 \end{aligned}
  \end{equation}
\end{proof}

First, we present the proof for SE kernel. 
The proof of Theorem~\ref{cor:ego-nugget-se} is given below. 
\begin{proof}
From Lemma~\ref{lem:cumu-regret-bound-simple}, at a given $T$, the upper bound on $R_T(\epsilon)$ changes the same way as $c_T(\epsilon)$. In particular, by Lemma~\ref{lem:mig-se} we only need to consider 
$c_T(\epsilon)/C_{dl}^1$. 
Hence, in the following, we analyze how $c_T(\epsilon)$ changes with $\epsilon$.

   For simplicity, let $\eta=\frac{1}{\epsilon}$.
   We define the following shorthands for terms in $c_T(\eta)$:
  \begin{equation} \label{eqn:ego-nugget-se-pf-1}
  \centering
  \begin{aligned}
       c_T^1(\eta) = \log(1+T\eta)+2C_R^3 (\log(1+T\eta)+C_R^2)^{1/2}+C_R^4, 
     \end{aligned}
  \end{equation}
     and 
    \begin{equation} \label{eqn:ego-nugget-se-pf-2}
  \centering
  \begin{aligned}
       c_T^2(\eta) = \log^{d+1}(1+T\eta)
           +C^{2}_{dl} \log(1+T\eta)+C^{3}_{dl}.
     \end{aligned}
  \end{equation} 
   Thus, by Lemma~\ref{lem:mig-se},
   \begin{equation} \label{eqn:ego-nugget-se-pf-3}
  \centering
  \begin{aligned}
     c_T(\eta)/C_{dl}^1 =    c_T^1(\eta) \frac{1}{\log(1+\eta)}
         c_T^2(\eta).
      \end{aligned}
  \end{equation} 
     Define $c_T^3(\eta):=c_T^1(\eta)c_T^2(\eta)$. We expand it into the sum of nine terms:
    \begin{equation} \label{eqn:ego-nugget-se-pf-4}
  \centering
  \begin{aligned}
        &\log^{d+2}(1+T\eta)+2C_R^3 (\log(1+T\eta)+C_R^2)^{1/2}\log^{d+1}(1+T\eta)+C_R^4\log^{d+1}(1+T\eta) +\\ 
     &C_{dl}^2\log^2(1+T\eta) + 2C_R^3 C_{dl}^2(\log(1+T\eta)+C_R^2)^{1/2}\log(1+T\eta)+C_R^4 C_{dl}^2\log(1+T\eta)+\\
     &C_{dl}^3\log(1+T\eta)+2C_R^3 C_{dl}^3(\log(1+T\eta)+C_R^2)^{1/2}+C_R^4C_{dl}^3. 
     \end{aligned}
  \end{equation} 
 By~\eqref{eqn:ego-nugget-se-pf-3}, differentiating $c_T(\eta)/C_{dl}^1$ gives
  \begin{equation} \label{eqn:ego-nugget-se-pf-5}
  \centering
  \begin{aligned} 
  \frac{d c_T(\eta)}{d\eta}\frac{1}{C_{dl}^1} = \frac{1}{\log^2(1+\eta)} \left[\log(1+\eta) \frac{dc_T^3(\eta)}{d \eta} - \frac{1}{1+\eta} c_T^3(\eta)\right] . 
    \end{aligned}
  \end{equation}
     Thus,  the sign of $\frac{d c_T(\eta)}{d\eta}$ is the sign of $d_c$ defined as
      \begin{equation} \label{eqn:ego-nugget-se-pf-6}
  \centering
  \begin{aligned}
     d_c(\eta) :=\log(1+\eta) \frac{dc_T^3(\eta)}{d \eta} - \frac{1}{1+\eta} c_T^3(\eta).
     \end{aligned}
  \end{equation}
    For simplicity, we suppress the dependence on the given $T$. 
     Next, we expand $d_c(\eta)$ in~\eqref{eqn:ego-nugget-se-pf-6} as the sum of nine terms, each corresponding to one term in~\eqref{eqn:ego-nugget-se-pf-4}. 
   The first term is $\log^{d+2} (1+T\eta)$ and its contribution to $d_c(\eta)$ is
   \begin{equation} \label{eqn:ego-nugget-se-pf-7}
  \centering
  \begin{aligned}
  d_c^{1}(\eta) := \log^{d+1}(1+T\eta)\left[(d+2) \frac{1}{1/T+\eta}\log(1+\eta) - \frac{1}{1+\eta}\log(1+T\eta)\right]. 
 \end{aligned}
  \end{equation}
 If~\eqref{eqn:ego-nugget-se-1} is satisfied, then 
\begin{equation} \label{eqn:ego-nugget-se-pf-8}
  \centering
  \begin{aligned}
  d_c^1(\eta) > \frac{1}{1+\eta}\log^{d+1}(1+T\eta)((d+2)\log(1+\eta)-\log(1+T\eta) ) >\frac{1}{1+\eta}\frac{1}{d+1}\log^{d+2}(1+T\eta). 
  \end{aligned}
  \end{equation}
 Now we repeat this derivation for the second term. Its contribution to~\eqref{eqn:ego-nugget-se-pf-6} is  
  \begin{equation} \label{eqn:ego-nugget-se-pf-9}
  \centering
  \begin{aligned}
  d_c^{2}(\eta) :=& 2C_R^3 (\log(1+T\eta)+C_R^2)^{1/2}\log^{d}(1+T\eta)\left[(d+1)\log(1+\eta)\frac{1}{1/T+\eta}\right.\\
  &\left.+\frac{1}{2(1/T+\eta)}\frac{\log(1+\eta)}{1+C_R^2/\log(1+T\eta)}-\frac{1}{1+\eta}\log(1+T\eta)\right].
  \end{aligned}
  \end{equation}
  If~\eqref{eqn:ego-nugget-se-1} is satisfied, 
  \begin{equation} \label{eqn:ego-nugget-se-pf-10}
  \centering
  \begin{aligned}
   d_c^{2}(\eta)>2C_R^3 (\log(1+T\eta)+C_R^2)^{1/2}\log^{d}(1+T\eta)\frac{1}{(1+\eta)(2(d+1))}\frac{\log(1+T\eta)}{1+C_R^2/\log(1+T\eta)} .
  \end{aligned}
  \end{equation}
  The third term
  adds to~\eqref{eqn:ego-nugget-se-pf-6} as 
  \begin{equation} \label{eqn:ego-nugget-se-pf-11}
  \centering
  \begin{aligned}
  d_c^{3}(\eta) :=C_R^4\log^d(1+T\eta)\left[(d+1)\log(1+\eta)\frac{1}{1/T+\eta}-\frac{\log(1+T\eta)}{1+\eta}\right].
  \end{aligned}
  \end{equation}
  If~\eqref{eqn:ego-nugget-se-1} holds, then $d_c^{3}(\eta)>0$. 
  The fourth derivative is 
  \begin{equation} \label{eqn:ego-nugget-se-pf-12}
  \centering
  \begin{aligned}
   d_c^4(\eta) := C_{dl}^2\log(1+T\eta)\left[2\log(1+\eta)\frac{1}{1/T+\eta}-\frac{\log(1+T\eta)}{1+\eta}\right].
   \end{aligned}
  \end{equation}
  With~\eqref{eqn:ego-nugget-se-1}, we can write
  \begin{equation} \label{eqn:ego-nugget-se-pf-13}
  \centering
  \begin{aligned}
   d_c^4(\eta) > -C_{dl}^2\log^2(1+T\eta)\frac{1}{1+\eta}\frac{d-1}{d+1}.
  \end{aligned}
  \end{equation}
  For the fifth term, its role in~\eqref{eqn:ego-nugget-se-pf-6} is 
   \begin{equation} \label{eqn:ego-nugget-se-pf-14}
  \centering
  \begin{aligned}
  d_c^5(\eta) :=&2 C_R^3 C_{dl}^2 (\log(1+T\eta)+C_R^2)^{1/2}\left[\log(1+\eta)\frac{1}{1/T+\eta}+\frac{1}{2(1/T+\eta)}\frac{\log(1+\eta)}{1+C_R^2/\log(1+T\eta)}\right. \\
  & \left. -\frac{\log(1+T\eta)}{1+\eta}\right].
  \end{aligned}
  \end{equation}
  Given~\eqref{eqn:ego-nugget-se-1}, 
  \begin{equation} \label{eqn:ego-nugget-se-pf-16}
  \centering
  \begin{aligned}
  d_c^5(\eta) > \frac{2 C_R^3 C_{dl}^2}{1+\eta}(\log(1+T\eta)+C_R^2)^{1/2}\left[\frac{1}{2(d+1)}\frac{\log(1+T\eta)}{1+C_R^2/\log(1+T\eta)}-\frac{d}{d+1}\log(1+T\eta)\right].
  \end{aligned}
  \end{equation}
  The sixth and seventh terms can be combined can yield for~\eqref{eqn:ego-nugget-se-pf-6}
  \begin{equation} \label{eqn:ego-nugget-se-pf-15}
  \centering
  \begin{aligned}
  d_c^{67}(\eta) := (C_R^4 C_{dl}^2+ C_{dl}^3) \left[\log(1+\eta)\frac{1}{1/T+\eta}-\frac{\log(1+T\eta)}{(1+\eta}\right].
  \end{aligned}
  \end{equation}
  By~\eqref{eqn:ego-nugget-se-1}, 
  \begin{equation} \label{eqn:ego-nugget-se-pf-16}
  \centering
  \begin{aligned}
  d_c^{67}(\eta)>-(C_R^4 C_{dl}^2+ C_{dl}^3)\frac{1}{1+\eta} \frac{d}{d+1}\log(1+T\eta).
  \end{aligned}
  \end{equation}
  The eighth term leads to 
  \begin{equation} \label{eqn:ego-nugget-se-pf-17}
  \centering
  \begin{aligned}
  d_c^8(\eta) := 2C_R^3 C_{dl}^3(\log(1+T\eta)+C_R^2)^{1/2} \left[\frac{1}{2(1/T+\eta)}\frac{\log(1+\eta)}{\log(1+T\eta)+C_R^2}-\frac{1}{1+\eta}\right].
  \end{aligned}
  \end{equation}
  Using~\eqref{eqn:ego-nugget-se-1}, 
  \begin{equation} \label{eqn:ego-nugget-se-pf-18}
  \centering
  \begin{aligned}
  d_c^8(\eta)>2C_R^3 C_{dl}^3(\log(1+T\eta)+C_R^2)^{1/2} \frac{1}{\eta+1}[\frac{1}{2(d+1)}\frac{1}{1+C_R^2/\log(1+T\eta)}-1].
   \end{aligned}
  \end{equation}
 The last term leads to in (1) 
 \begin{equation} \label{eqn:ego-nugget-se-pf-19}
  \centering
  \begin{aligned}
   d_c^9(\eta):=-C_R^4C_{dl}^3\frac{1}{1+\eta}. 
\end{aligned}
  \end{equation}
 Since $d_c(\eta)=\sum_{i=1}^9 d_c^i(\eta)$, we consider the order of the lower bound of $d_c^i(\eta)$ to find the sign of $d_c(\eta)$. 
 Combine the lower bounds above and lift common multiplier $1/(1+\eta)$. Since  $\log(1+T\eta)\gg 1$, we can write the order of each term as follows. 
 
 The order of the positive lower bounds are $\log^{d+2}(1+T\eta)$~\eqref{eqn:ego-nugget-se-pf-8} and $\frac{1}{d}C_R^3\log^{d+1.5}(1+T\eta)$~\eqref{eqn:ego-nugget-se-pf-10}. For negative lower bounds, we have $-C_{dl}^2 \frac{d-1}{d+1}\log^2(1+T\eta)$~\eqref{eqn:ego-nugget-se-pf-12}, $-C_R^3C_{dl}^2 \log^{1.5}(1+T\eta)$~\eqref{eqn:ego-nugget-se-pf-14}, $-(C_R^4C_{dl}^2+C_{dl}^3)\log(1+T\eta)$~\eqref{eqn:ego-nugget-se-pf-16}, $-C_R^3C_{dl}^3\log^{0.5}(1+T\eta)$~\eqref{eqn:ego-nugget-se-pf-18}, and $-C_R^4C_{dl}^3$~\eqref{eqn:ego-nugget-se-pf-19}. 
 It is easy to verify that, for $d\geq 2$, 
 $$\frac{1}{d}C_R^3\log^{d+1.5}(1+T\eta) \gg C_R^3C_{dl}^2 \log^{1.5}(1+T\eta).$$
 Further, using the positive bound from~\eqref{eqn:ego-nugget-se-pf-8} satisfies
 \begin{equation} \label{eqn:ego-nugget-se-pf-20}
  \centering
  \begin{aligned}
 \log^{d+2}(1+T\eta) \gg& C_{dl}^2 \frac{d-1}{d+1}\log^2(1+T\eta),\\
  \log^{d+2}(1+T\eta) \gg &(C_R^4C_{dl}^2+C_{dl}^3)\log(1+T\eta),\\
 \log^{d+2}(1+T\eta) \gg& C_R^3C_{dl}^3\log^{0.5}(1+T\eta),\\
  \log^{d+2}(1+T\eta) \gg& C_R^4C_{dl}^3.
 \end{aligned}
  \end{equation}
 
 Thus, by~\eqref{eqn:ego-nugget-se-pf-5} and~\eqref{eqn:ego-nugget-se-pf-6}, we arrive at $d_c(\eta)>0$ and $\frac{dc_T(\eta)}{\eta}>0$.
 When $d=1$, the conclusion remains the same as $d_c^4(\eta)>0$, where $d_c^3(\eta)$ and  $d_c^4(\eta)$ coincide.

 Next, we consider if~\eqref{eqn:ego-nugget-se-2} is true. From $d_c^1(\eta)$~\eqref{eqn:ego-nugget-se-pf-7} and~\eqref{eqn:ego-nugget-se-2}, we have $d_c^1(\eta)<0$. Indeed, it is easy to verify that all $d_c^i(\eta)<0, i=2,..., 9$. Hence, $d_c<0$  and $\frac{dc_T(\eta)}{\eta}<0$.
  \end{proof}

The proof of Theorem~\ref{cor:ego-regret-nugget-matern} is given below.
\begin{proof} 
Recall that from Lemma~\ref{lem:mig-matern-shogo} that $s_T(\epsilon)= c_T^0(\epsilon)\bar{\gamma}_T(\epsilon)+C$.
By Lemma~\ref{lem:cumu-regret-bound-simple}, the cumulative regret bound changes the same way as $c_T(\epsilon)/(C_{d\nu l}^1C_{\nu}^1)$. 
    First, define 
    \begin{equation} \label{eqn:ego-nugget-matern-pf-1}
  \centering
  \begin{aligned}
    c_T^1(\eta) :=& \log(1+T\eta)+2C_R^3 (\log(1+T\eta)+C_R^2)^{1/2}+C_R^4,\\
    c_T^2(\eta) :=&c_T^0(\eta)/  C_{\nu}^1=[ \log(
            1+C_{\nu}^2 \log\left(T^2 \eta\right))
          +C_{\nu}^3 \log (T^2\eta)+C_{\nu}^4],\\
          c_T^3(\eta) :=& \bar\gamma_T(\eta)/ C^{1}_{d\nu l}=[
        \log(1+2T\eta)+  C^{2}_{d\nu l}(T\eta)^{\frac{d}{2\nu+d}} \log^{\frac{2\nu}{2\nu+d}}(1+2T\eta)].
    \end{aligned}
  \end{equation}
    Let $c_T^{123}(\eta) =c_T^1(\eta)c_T^2(\eta)c_T^3(\eta)$.
    By Lemma~\ref{lem:mig-matern-shogo}, we have 
    \begin{equation} \label{eqn:ego-nugget-matern-pf-2}
  \centering
  \begin{aligned}
    c_T(\eta) = C_{\nu}^1C_{d\nu l}^1c_T^{123}(\eta)\frac{1}{\log(1+\eta)}+c_T^1(\eta)\frac{1}{\log(1+\eta)}C. 
    \end{aligned}
  \end{equation}
   Consider the first part of~\eqref{eqn:ego-nugget-matern-pf-2}. Its derivative is
  \begin{equation} \label{eqn:ego-nugget-matern-pf-3}
  \centering
  \begin{aligned}
 C_{\nu}^1C_{d\nu l}^1 \frac{1}{\log^2(1+\eta)} \left[\log(1+\eta) \frac{d  c_T^{123}(\eta) }{d \eta} - \frac{1}{1+\eta}  c_T^{123}(\eta)\right]  . 
 \end{aligned}
  \end{equation}
     The sign of~\eqref{eqn:ego-nugget-matern-pf-3} is the same as the sign of 
     \begin{equation} \label{eqn:ego-nugget-matern-pf-4}
  \centering
  \begin{aligned}
    d_c(\eta) :=\frac{d  c_T^{123}(\eta)}{d \eta} \log(1+\eta)- \frac{1}{1+\eta}  c_T^{123}(\eta) .
     \end{aligned}
  \end{equation}
   Consider case (1) first. Since $c_T^i(\eta)>0$ and increases with $\eta$, $i=1,2,3$, we have 
     \begin{equation} \label{eqn:ego-nugget-matern-pf-5}
  \centering
  \begin{aligned}
     \frac{d  c_T^{123}(\eta) }{d \eta} =& \frac{d c_T^1(\eta)}{d\eta} c_T^2(\eta)c_T^3(\eta)+\frac{d c_T^2(\eta)}{d\eta} c_T^1(\eta) c_T^3(\eta)+\frac{d c_T^3(\eta)}{d\eta} c_T^1(\eta)c_T^2(\eta)\\
     >& \frac{d c_T^3(\eta)}{d\eta} c_T^1(\eta)c_T^2(\eta).
     \end{aligned}
  \end{equation}
    The first two terms in the first equality above will be used later. 
    Using~\eqref{eqn:ego-nugget-matern-pf-4} in~\eqref{eqn:ego-nugget-matern-pf-5}, we have the lower bound  
     \begin{equation} \label{eqn:ego-nugget-matern-pf-6}
  \centering
  \begin{aligned}
    d_c(\eta) > c_T^1(\eta)c_T^2(\eta)\left[ \frac{dc_T^3(\eta)}{d\eta}\log(1+\eta)-\frac{1}{1+\eta}c_T^3(\eta)\right]. 
     \end{aligned}
  \end{equation}
     The derivative is
     \begin{equation} \label{eqn:ego-nugget-matern-pf-7}
  \centering
  \begin{aligned}
     \frac{dc_T^3(\eta)}{d\eta} = \frac{1}{\eta+1/2T}+ C_{d\nu l}^2 \frac{d}{2\nu+d}(T\eta)^{\frac{d}{2\nu+d}}\frac{1}{\eta}\log^{\frac{2\nu}{2\nu+d}}(1+2T\eta) +\\
 C^{2}_{d\nu l}\left(T\eta\right)^{\frac{d}{2\nu+d}} \frac{2\nu}{2\nu+d}\log^{-\frac{d}{2\nu+d}}(1+2T\eta) \frac{1}{\eta+1/2T}.
 \end{aligned}
  \end{equation}
     Note that the first and third terms in~\eqref{eqn:ego-nugget-matern-pf-7} $>0$. 
     By relaxing them and using~\eqref{eqn:ego-nugget-matern-pf-7} in~\eqref{eqn:ego-nugget-matern-pf-6}, we have
      \begin{equation} \label{eqn:ego-nugget-matern-pf-8}
  \centering
  \begin{aligned}
    d_c(\eta) > c_T^1(\eta)c_T^2(\eta)C_{d\nu l}^2 (T\eta)^{\frac{d}{2\nu+d}}\log^{\frac{2\nu}{2\nu+d}}(1+2T\eta)\frac{1}{1+\eta} \left[ \frac{d}{2\nu+d} \log(1+\eta) \right. \\
    \left. - \left(\frac{1}{C_{d\nu l}^2} \left(\frac{\log(1+2T\eta)}{T\eta}\right)^{\frac{d}{2\nu+d}}+1\right)\right].
    \end{aligned}
  \end{equation}
          Since $\frac{\log(1+2T\eta)}{T\eta}<1$ for $T\eta\gg1$, from~\eqref{eqn:ego-nugget-matern-1}, we have 
     \begin{equation} \label{eqn:ego-nugget-matern-pf-9}
  \centering
  \begin{aligned}
    d_c(\eta)> c_T^1(\eta)c_T^2(\eta)C_{d\nu l}^2 (T\eta)^{\frac{d}{2\nu+d}}\log^{\frac{2\nu}{2\nu+d}}(1+2T\eta) \frac{1}{1+\eta}\left[ \frac{d\log(1+\eta)}{2\nu+d}  -  (1/C_{d\nu l}^2 +1)\right]>0.
     \end{aligned}
  \end{equation}
     
 Next, we consider the second part in~\eqref{eqn:ego-nugget-matern-pf-2}.  Its derivative is
 \begin{equation} \label{eqn:ego-nugget-matern-pf-10}
  \centering
  \begin{aligned}
    \frac{C}{\log^2(1+\eta)} \left[\log(1+\eta) \frac{d  c_T^{1}(\eta) }{d \eta} - \frac{1}{1+\eta}  c_T^{1}(\eta )\right].   
 \end{aligned}
  \end{equation}
 Given~\eqref{eqn:ego-nugget-matern-1}, we know the second term in~\eqref{eqn:ego-nugget-matern-pf-5} leads to 
 \begin{equation} \label{eqn:ego-nugget-matern-pf-11}
  \centering
  \begin{aligned}
     C_{\nu}^1C_{d\nu l}^1\frac{d c_T^2(\eta)}{d\eta} c_T^1(\eta)c_T^3(\eta)\log(1+\eta)
     >& C_{\nu}^1C_{d\nu l}^1C_{\nu}^3 \frac{1}{\eta} c_T^1(\eta)\log(1+2T\eta)\log(1+\eta)
     >c_T^1(\eta)\frac{C}{1+\eta} .
     \end{aligned}
  \end{equation}
 Hence, by~\eqref{eqn:ego-nugget-matern-pf-3},~\eqref{eqn:ego-nugget-matern-pf-9},~\eqref{eqn:ego-nugget-matern-pf-2} and~\eqref{eqn:ego-nugget-matern-pf-11}, we have $\frac{d c_T(\eta)}{d\eta} >0$.
    
     Next, we consider case 2. We use a more compact proof procedure given that $c_T^{123}(\eta)$ has $18$ terms. Define $ c_T^{13}(\eta):=c_T^1(\eta) c_T^3(\eta)=\sum_{p=1}^3\sum_{q=1}^2 a^1_p(\eta) a_q^3(\eta)$, where 
     \begin{equation} \label{eqn:ego-nugget-matern-pf-12}
  \centering
  \begin{aligned}
     a^1_1(\eta)=&
     \log(1+T\eta), a^1_2(\eta) = 2C_R^3 (\log(1+T\eta)+C_R^2)^{1/2}, a^1_3(\eta) = C_R^4, \\
     a^3_1(\eta)=& \log(1+2T\eta),
     a^3_2(\eta) = C^{2}_{d\nu l}(T\eta)^{\frac{d}{2\nu+d}} \log^{\frac{2\nu}{2\nu+d}}(1+2T\eta).
     \end{aligned}
  \end{equation}
     Thus, $d_c(\eta)$ can be written as   
     \begin{equation} \label{eqn:ego-nugget-matern-pf-13}
  \centering
  \begin{aligned}
     d_c(\eta)=\sum_{p=1}^3 \sum_{q=1}^2 c_T^2(\eta)a^1_p(\eta)a^3_q(\eta) &\left[  \left(\frac{da^1_p(\eta)}{d\eta} /a^1_p(\eta)+\frac{da^3_q(\eta)}{d\eta}/a^3_q(\eta)+\frac{d c_T^2(\eta)}{d\eta}/c_T^2(\eta)\right)\log(1+\eta)-\frac{1}{1+\eta}\right] .
     \end{aligned}
  \end{equation}
  We note that $c_T^2(\eta)$ is not decomposed for a even more compact proof because it is possible that $C_{\nu}^4<0$. 
     Since $a^1_p(\eta),a^3_q(\eta),c_T^2(\eta)>0$, to show $d_c(\eta)<0$, it is sufficient to find the largest value of 
     \begin{equation} \label{eqn:ego-nugget-matern-pf-14}
  \centering
  \begin{aligned}
      d_{pq\eta} := \left[\frac{da^1_p(\eta)}{d\eta}/a^1_p(\eta)+\frac{da^3_q(\eta)}{d\eta}/a^3_q(\eta)+\frac{d c_T^2(\eta)}{d\eta}/c_T^2(\eta)\right]\log(1+\eta)-\frac{1}{1+\eta}, 
      \end{aligned}
  \end{equation}
      for all $p$ and $q$, a total of six terms, and show that it is $<0$. 
     If this largest term generates $d_{pq\eta}<0$, then the other five $p,q$ combinations must also have $d_{pq\eta}<0$ and, hence, the sum~\eqref{eqn:ego-nugget-matern-pf-13} will be $<0$. 
     Using elementary algebra one can compare the six terms and verify that the largest $ d_{pq\eta}(\eta)$ occurs at $p=1$ and $q=2$ given $T\eta\gg 1$.  
     The third term in~\eqref{eqn:ego-nugget-matern-pf-14} is:  
     \begin{equation} \label{eqn:ego-nugget-matern-pf-15}
  \centering
  \begin{aligned}
      \frac{dc_T^2(\eta)}{d\eta}/c_T^2(\eta) < \frac{1}{\eta} \frac{ \frac{1}{1/C_{\nu}^2+\log(T^2\eta)}+C_{\nu}^3}{c_T^2(\eta)}<\frac{1}{\eta}\frac{1}{c_T^2(\eta)}\left(\frac{1}{\log(T^2\eta)}+C_{\nu}^3\right).
      \end{aligned}
  \end{equation}
     Further, 
     \begin{equation} \label{eqn:ego-nugget-matern-pf-16}
  \centering
  \begin{aligned}
     \frac{da^1_1(\eta)}{d\eta}/a^1_1(\eta)= \frac{1}{1/T+\eta}\frac{1}{\log(1+T\eta)},
     \frac{da^3_2(\eta)}{d\eta}/a^3_2(\eta)< \frac{1}{\eta}\left[\frac{d}{2\nu+d}+\frac{2\nu}{2\nu+d}\frac{1}{\log(1+2T\eta)}\right].
     \end{aligned}
  \end{equation}
     Thus, for any $p,q$, we can write
     \begin{equation} \label{eqn:ego-nugget-matern-pf-17}
  \centering
  \begin{aligned}
      d_{pq\eta} <&\frac{1}{\eta}\left[\frac{1}{\log(T\eta)}+
     \frac{d}{2\nu+d}+\frac{2\nu}{2\nu+d}\frac{1}{\log(2T\eta)}+\frac{1}{c_T^2(\eta)}\frac{1}{\log(T^2\eta)}+\frac{1}{c_T^2(\eta)}C_{\nu}^3\right]\log(1+\eta)-\frac{1}{1+\eta}\\
     <&\frac{1}{\eta}\left[\frac{d}{2\nu+d}+\frac{1}{\log(T\eta)}\left(\frac{4\nu+d}{2\nu+d}+\frac{1}{c_T^2(\eta)}\right)+\frac{C_{\nu}^3}{c_T^2(\eta)}\right]\log(1+\eta) -\frac{1}{1+\eta},
     \end{aligned}
  \end{equation}
     where we use $\log(T^2\eta)\geq \log(T\eta)$.     
     By~\eqref{eqn:ego-nugget-matern-2},~\eqref{eqn:ego-nugget-matern-pf-17} $<0$. Thus, the other five (smaller) terms in~\eqref{eqn:ego-nugget-matern-pf-14} are also $<0$. Thus, $d_c<0$ by~\eqref{eqn:ego-nugget-matern-pf-13}. Further, under~\eqref{eqn:ego-nugget-matern-2}, it is easy to verify that the second part in~\eqref{eqn:ego-nugget-matern-pf-2} $<0$ and thus $\frac{dc_T(\eta)}{d\eta}<0$. By Lemma~\ref{lem:cumu-regret-bound-simple}, the cumulative regret bound changes the same as $c_T$ with respect to $\epsilon=1/\eta$. 
\end{proof}

\begin{figure}
	\centering
	\includegraphics[width=0.7\linewidth]{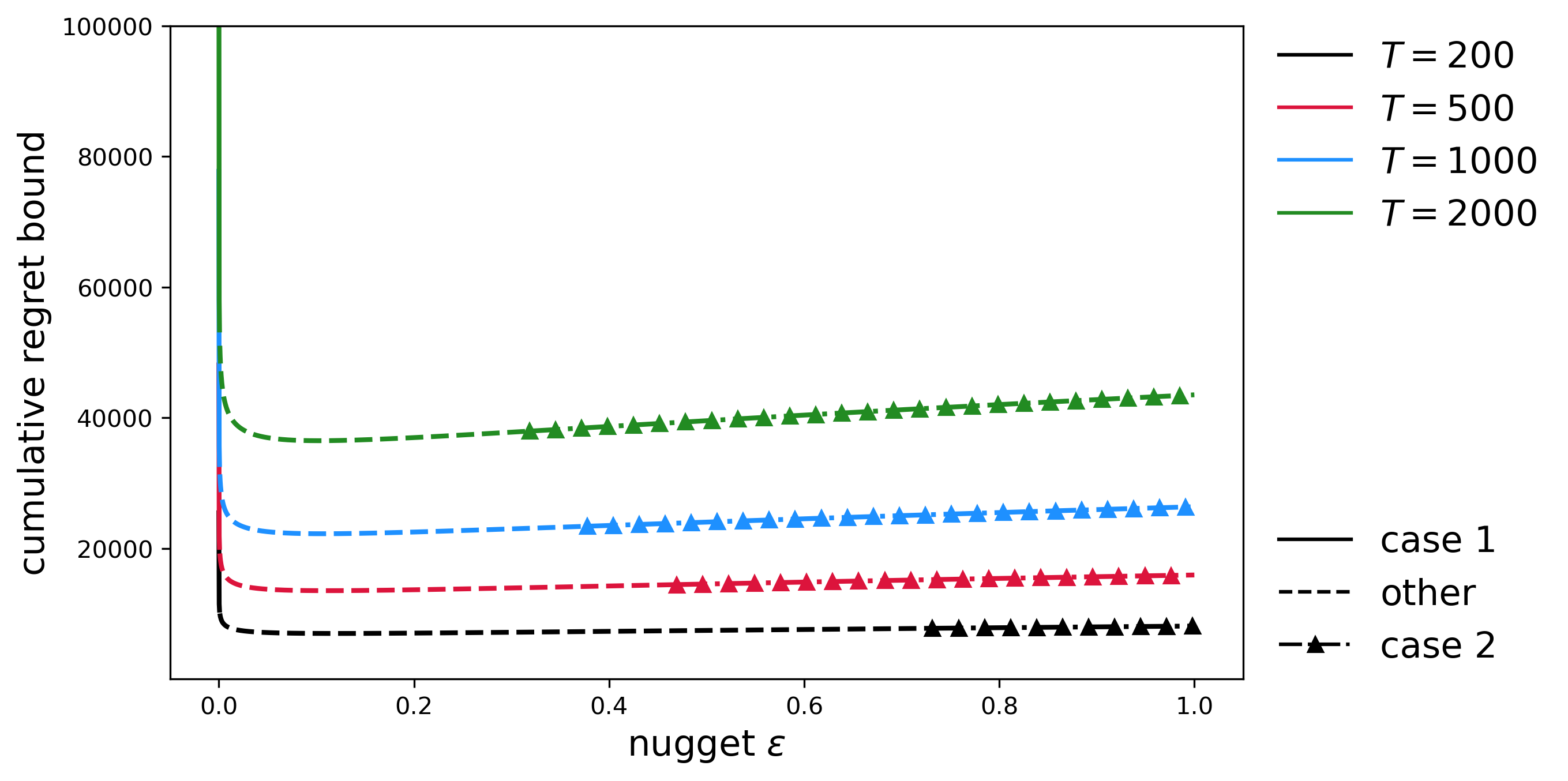}
        \caption{Cumulative regret upper bound with nugget $\epsilon$ at different $T$ and selected constants for Matérn kernel ($\nu=\frac{1}{2}$). The case ``other'' means neither the conditions for case $1$ nor those for case $2$ are satisfied.}
	\label{fig:regret-matern-nugget}
\end{figure}

\section{Numerical Example and Additional Plots}\label{appx:example}
\subsection{Plot for the Nugget Effect}
As mentioned in Section~\ref{se:nugget-EI}, we choose the $50$ samples entirely through random sampling for the Branin function and plot the EI contour using different nugget values in Figure~\ref{fig:ei-nugget-random}. 
\begin{figure}
  \centering
  \includegraphics[width=0.98\textwidth]{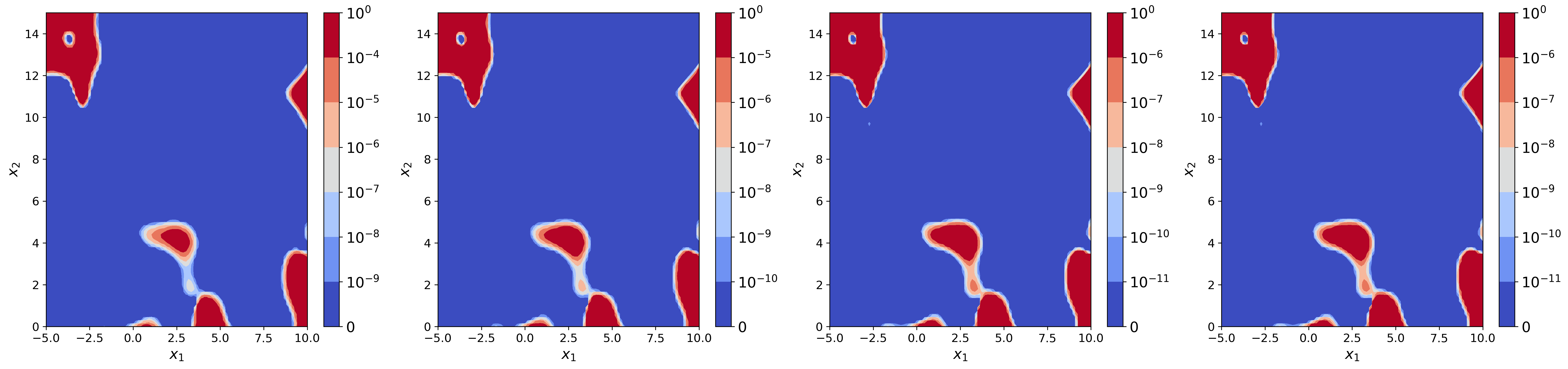}
        \caption{Illustrative example of EI contour of the Branin function with 50 random samples. From left to right: contour plots for $\epsilon=10^{-2}$, $\epsilon=10^{-6}$, $\epsilon=10^{-10}$, and no nugget. The maximum EI$_{50}$ value from left to right: 
        $0.40515$, $0.40085$, $0.40085$, and $0.40085$. The maximum level of the colorbar corresponds to the maximum of EI$_{50}$ of each plot.}
\label{fig:ei-nugget-random}
\end{figure}
The effect of $\epsilon$ on the values of EI is much less obvious here, as expected.
\subsection{Numerical Example Setup}
The mathematical expression for example 1, the two-dimensional  Rosenbrock function, is given below.
\begin{equation} \label{rosenbrock}
  \centering
  \begin{aligned}
    f(\xbm) =\sum_{i=1}^{d-1} \left[100(x_{i+1}-x_i^2)^2+(x_i-1)^2  \right]\\
    x_i \in [-2.048, 2.0480]^2.
\end{aligned}
  \end{equation}
 The optimal objective function value is $0$.

 The mathematical expression for example 2, the six-hump camel function given below.
 \begin{equation} \label{camel}
  \centering
  \begin{aligned}
    f(\xbm) =\left(4-2.1x_1^2+\frac{x_1^4}{3}  \right) x_1^2 +x_1 x_2 +(-4+4x_2^2)x_2^2\\
    -3\leq x_1 \leq 3, \ -2\leq x_2 \leq 2.
\end{aligned}
  \end{equation}
 The optimal objective function value is $-1.0316$.
 
The mathematical expression for example 3, the Hartmann6 function is given below.
     \begin{equation} \label{hartmann6}
  \centering
  \begin{aligned}
f(\xbm) = - \sum_{i=1}^{4} \alpha_i \exp \left( -\sum_{j=1}^{6} A_{ij} (x_j - P_{ij})^2 \right)\\
					x_i \in [0, 1], i = 1, \dots,6\\
					\alpha = [1.0, 1.2, 3.0, 3.2]^\top\\
					A = 
					\begin{bmatrix}
						10 & 3.0 & 17 & 3.5 & 1.7 & 8.0 \\
						0.05 & 10 & 17 & 0.1 & 8.0 & 14 \\
						3.0 & 3.5 & 1.7 & 10 & 17 & 8.0 \\
						17 & 8.0 & 0.05 & 10 & 0.1 & 14
					\end{bmatrix}\\
					P = 
					\begin{bmatrix}
						0.131 & 0.170 & 0.557 & 0.012 & 0.828 & 0.587 \\
						0.233 & 0.414 & 0.831 & 0.374 & 0.100 & 0.999 \\
						0.235 & 0.145 & 0.352 & 0.288 & 0.305 & 0.665 \\
						0.405 & 0.883 & 0.873 & 0.574 & 0.109 & 0.038
					\end{bmatrix}.\\
     \end{aligned}
  \end{equation}
  The optimal objective function value is $-3.32$.
  
The mathematical expression for example 4, the Branin function,  is given below.
\begin{equation} \label{branin}
  \centering
  \begin{aligned}
    f(\xbm) =\left( x_2-\frac{5.1}{4\pi^2}x_1^2+\frac{5}{\pi}x_1-6\right)^2 +10\left(1-\frac{1}{8\pi}\right)\cos(x_1) + 10\\
    x_1 \in [-5, 10], x_2 \in [0,15].
\end{aligned}
  \end{equation}
 The optimal objective function value is $0$.

 The mathematical expression for example 5, the Michalewicz  function given below.
 \begin{equation} \label{michalewicz}
  \centering
  \begin{aligned}
    f(\xbm) =  -\sum_{i=1}^2 \sin(x_i) \sin^{20}\left( \frac{ix_i^2}{\pi} \right) \\
    \xbm \in [0,\pi]^2.
\end{aligned}
  \end{equation}
 The optimal objective function value is $-1.8013$.

  The $25$th and $75$th percentile of the 100 repeated runs are shown in the following figure.
\begin{figure}
  \centering

  \begin{subfigure}[b]{0.45\textwidth}
    \centering
    \includegraphics[width=\linewidth]{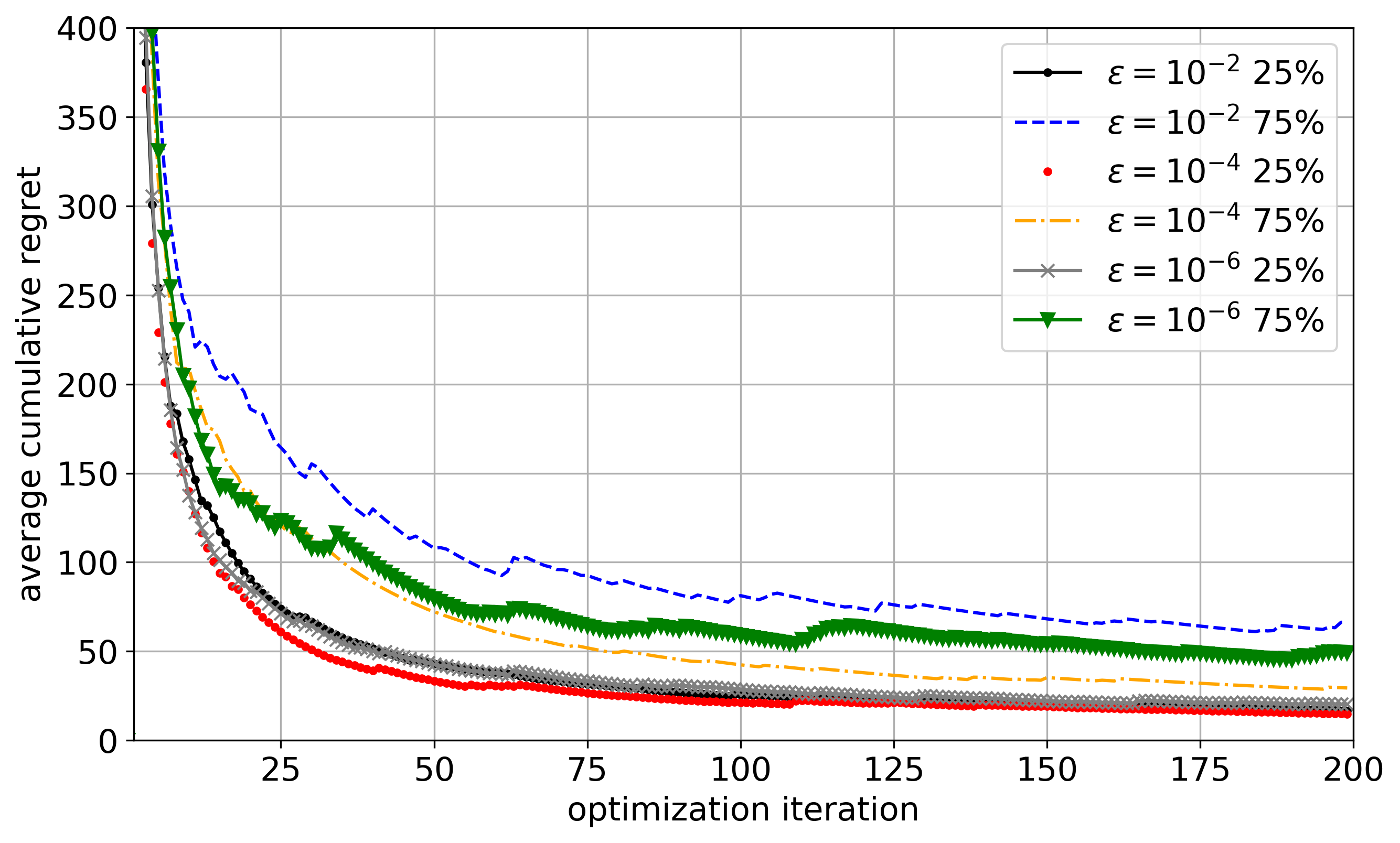}
    \caption{Example 1: Rosenbrock function}
    \label{fig:ex2_sta}
  \end{subfigure}
  \hfill
  \begin{subfigure}[b]{0.45\textwidth}
    \centering
    \includegraphics[width=\linewidth]{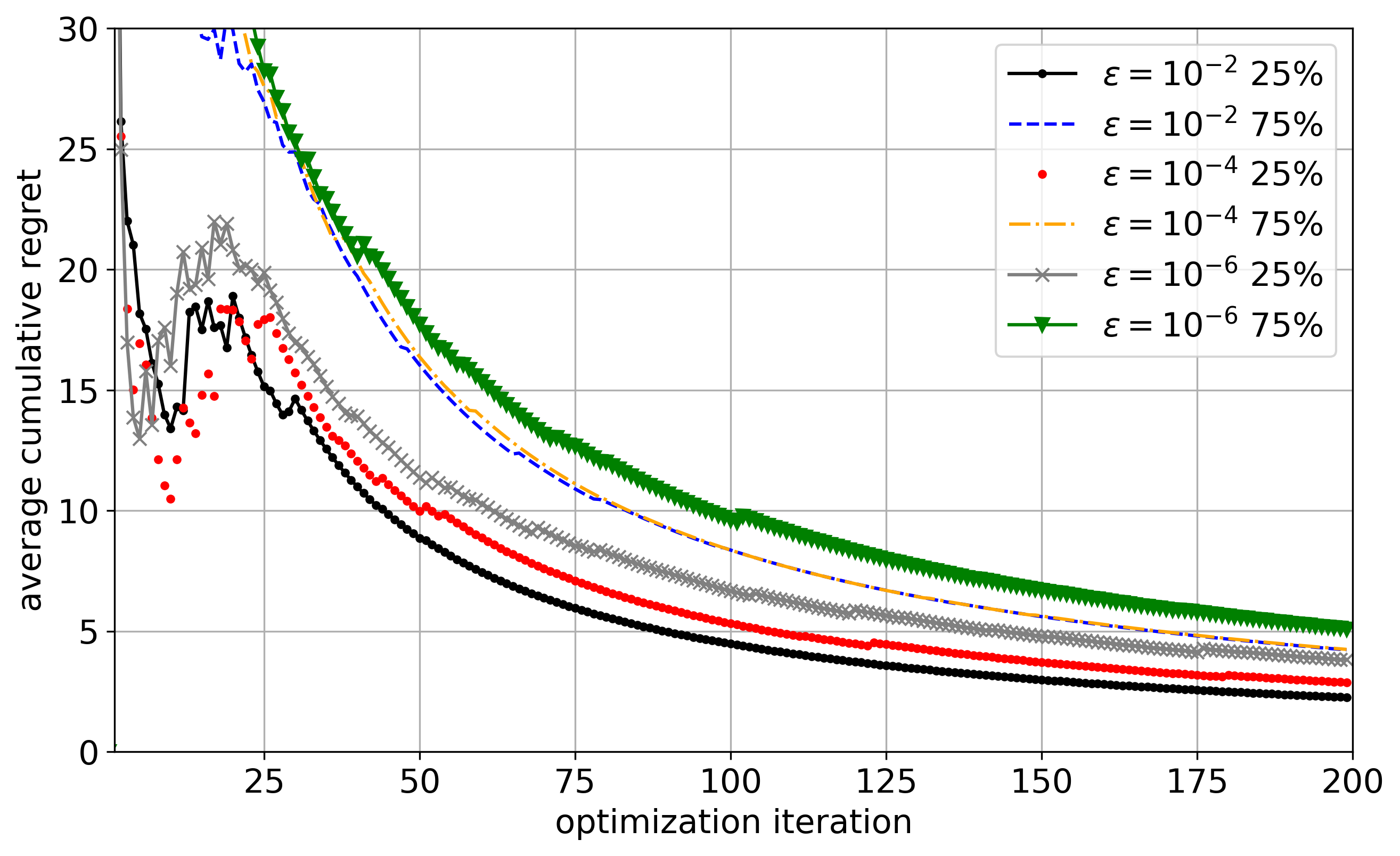}
    \caption{Example 2: Six-hump camel function}
    \label{fig:ex3_sta}
  \end{subfigure}

  \vspace{0.6em}

  \begin{subfigure}[b]{0.45\textwidth}
    \centering
    \includegraphics[width=\linewidth]{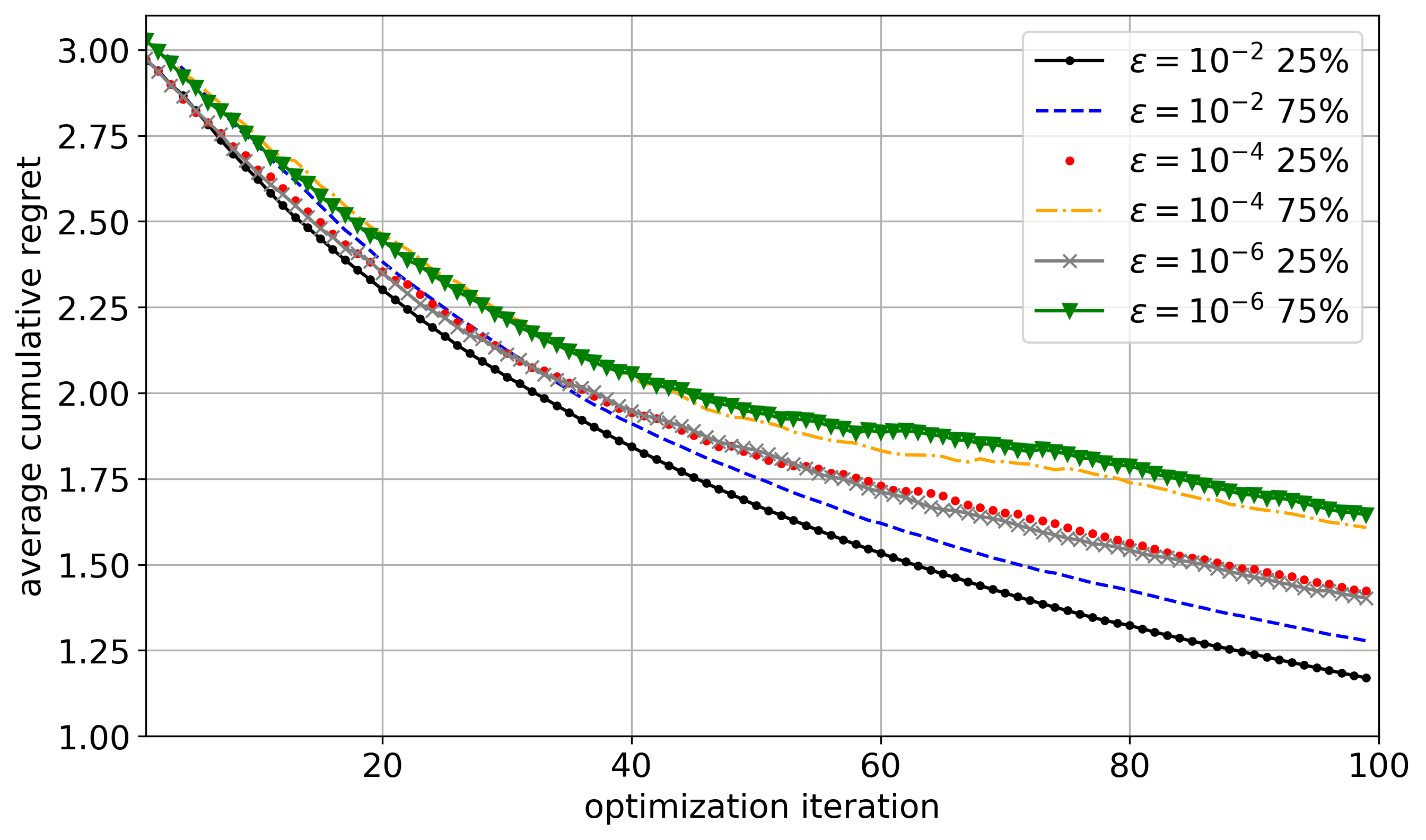}
    \caption{Example 3: Hartmann6}
    \label{fig:ex5_sta}
  \end{subfigure}
  \hfill
  \begin{subfigure}[b]{0.45\textwidth}
    \centering
    \includegraphics[width=\linewidth]{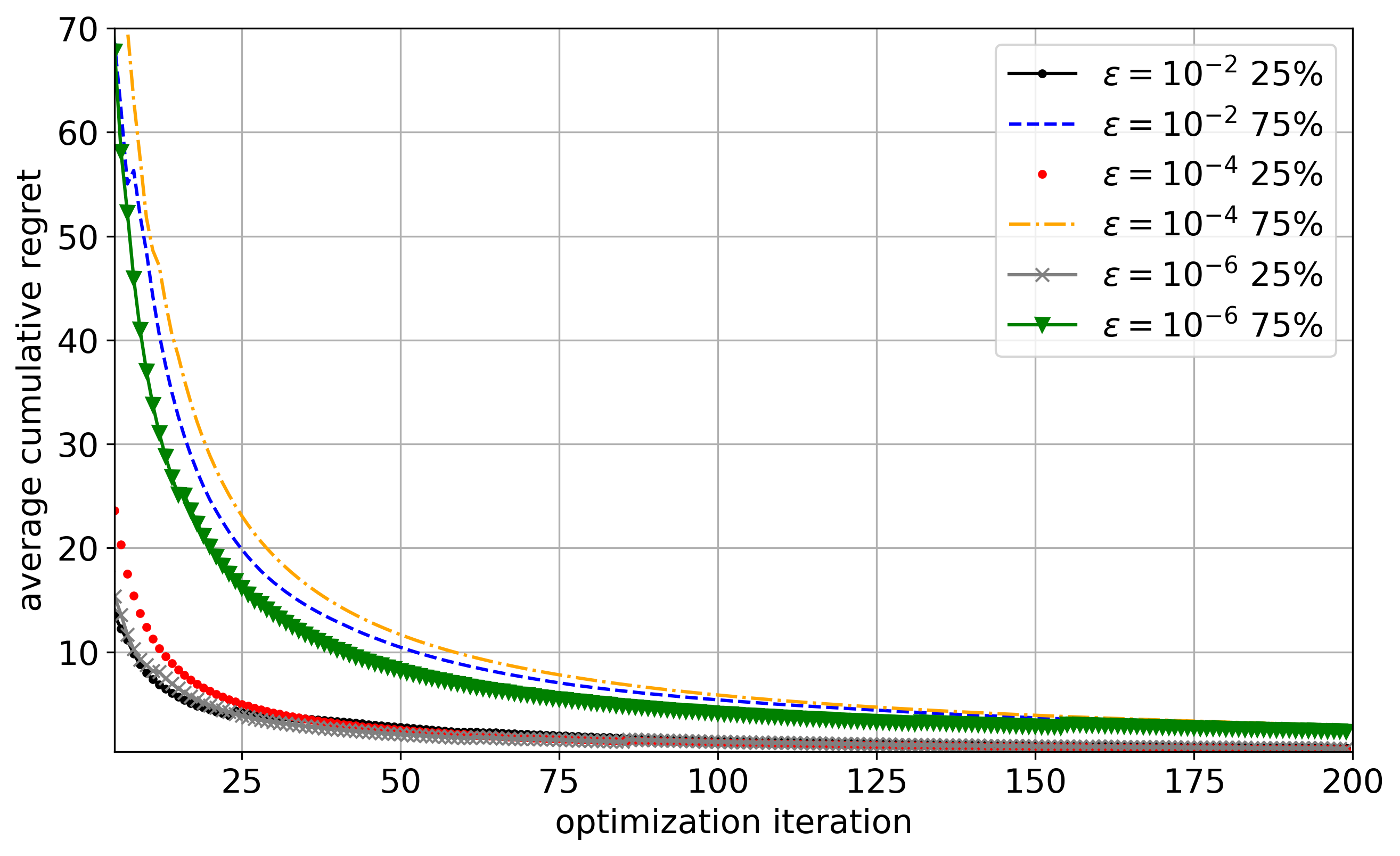}
    \caption{Example 4: Branin function}
    \label{fig:ex1_sta}
  \end{subfigure}

  \vspace{0.6em}

  \begin{subfigure}[b]{0.45\textwidth}
    \centering
    \includegraphics[width=\linewidth]{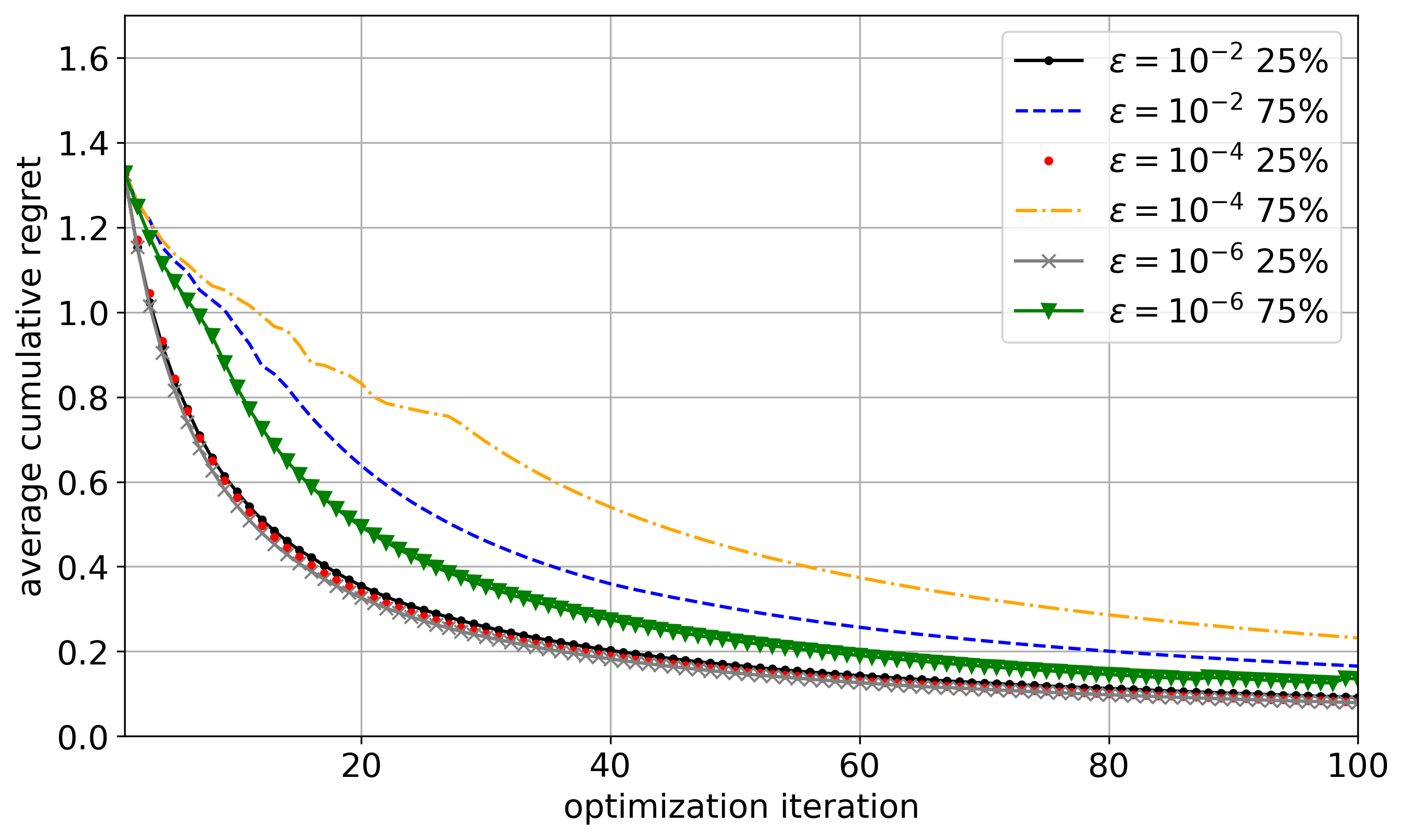}
    \caption{Example 5: Michalewicz function}
    \label{fig:ex4_sta}
  \end{subfigure}

  \caption{25th and 75th percentile average cumulative regret for practical EGO with nugget values $10^{-2}$, $10^{-4}$, and $10^{-6}$ for five examples.}
  \label{fig:numerical_sta}
\end{figure}



\clearpage
\bibliographystyle{plainnat}
\bibliography{bibliography}
\newpage

\end{document}